\definecolor{tabblue}{RGB}{31,119,180}
\definecolor{thomblue1}{RGB}{136,153,244}
\definecolor{thomblue2}{RGB}{83,109,222}
\newcommand{\iconcolor}[1]{\textcolor{thomblue2}{#1}}
\newtheorem{theorem}{Theorem}
\newtheorem{remark}{Remark}
\newtheorem{proof}{Proof}
\definecolor{ayadsymbol}{RGB}{76, 110, 230} 
\definecolor{secondsymbol}{RGB}{140, 165, 240}
\DeclareSymbolFont{extraup}{U}{zavm}{m}{n}
\DeclareMathSymbol{\vardiamond}{\mathalpha}{extraup}{87}
\title{The Leaderboard Illusion}
\author{
    name={Shivalika Singh\faa},
    affiliation={1},
}
\author{
    name={Yiyang Nan},
    affiliation={1},
}
\author{
    name={Alex Wang},
    affiliation={2},
}
\author{
    name={Daniel D'souza},
    affiliation={1},
}
\author{
    name={Sayash Kapoor},
    affiliation={3},
}
\author{
    name={Ahmet Üstün},
    affiliation={1},
}
\author{
    name={Sanmi Koyejo},
    affiliation={4},
}
\author{
    name={Yuntian Deng},
    affiliation={5},
}
\author{
    name={Shayne Longpre},
    affiliation={6},
}
\author{
    name={Noah A. Smith},
    affiliation={7,8},
}
\author{
    name={Beyza Ermis},
    affiliation={1},
}
\author{
    name={Marzieh Fadaee\fa},
    affiliation={1},
}
\author{
    name={Sara Hooker\fa},
    affiliation={1},
}
\newcommand{\lmarenascraped}{scraped-random-sample}
\newcommand{\lmarenahistorical}{historical-battles}
\newcommand{\lmarenaleaderboard}{leaderboard-stats}
\newcommand{\cba}{Chatbot Arena\xspace}
\newcommand{\smileemoji}{\raisebox{-1pt}{\includegraphics[width=0.9em]{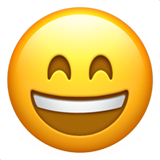}}}
\newcommand{\handwaveemoji}{\raisebox{-1pt}{\includegraphics[width=0.9em]{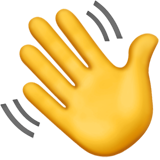}}}
\newcommand{\winkemoji}{\raisebox{-1pt}{\includegraphics[width=0.9em]{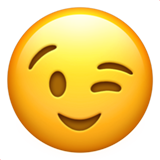}}}
\newcommand{\blushemoji}{\raisebox{-1pt}{\includegraphics[width=0.9em]{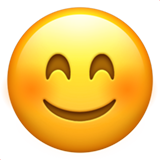}}}
\definecolor{tabblue}{RGB}{31, 119, 180}
\definecolor{tabred}{RGB}{214, 39, 40}
\abstract{

Measuring progress is fundamental to the advancement of any scientific field. As benchmarks play an increasingly central role, they also grow more susceptible to distortion.
\cba has emerged as the go-to leaderboard for ranking the most capable AI systems. Yet, in this work we identify systematic issues that have resulted in a distorted playing field. We find that undisclosed private testing practices benefit a handful of providers who are able to test multiple variants before public release and retract scores if desired. We establish that the ability of these providers to choose the best score leads to biased Arena scores due to selective disclosure of performance results. 
At an extreme, we identify 27 private LLM variants tested by Meta in the lead-up to the Llama-4 release. We also establish that proprietary closed models are sampled at higher rates (number of battles) and have fewer models removed from the arena than open-weight and open-source alternatives. Both these policies lead to large data access asymmetries over time. Providers like Google and OpenAI have received an estimated
19.2\% and 20.4\% of all data on the arena, respectively. In contrast, a combined 83 open-weight models have only received an estimated 29.7\% of the total data. With conservative estimates, we show that access to \cba data yields substantial benefits; even limited additional data can result in relative performance gains of up to 112\% on ArenaHard, a test set from the arena distribution. Together, these dynamics result in overfitting to Arena-specific dynamics rather than general model quality. The Arena builds on the substantial efforts of both the organizers and an open community that maintains this valuable evaluation platform. We offer actionable recommendations to reform the \cba's evaluation framework and promote fairer, more transparent benchmarking for the field.
}
\begin{document}

\begin{figure}[htb]
    \centering
    \includegraphics[width=\linewidth]{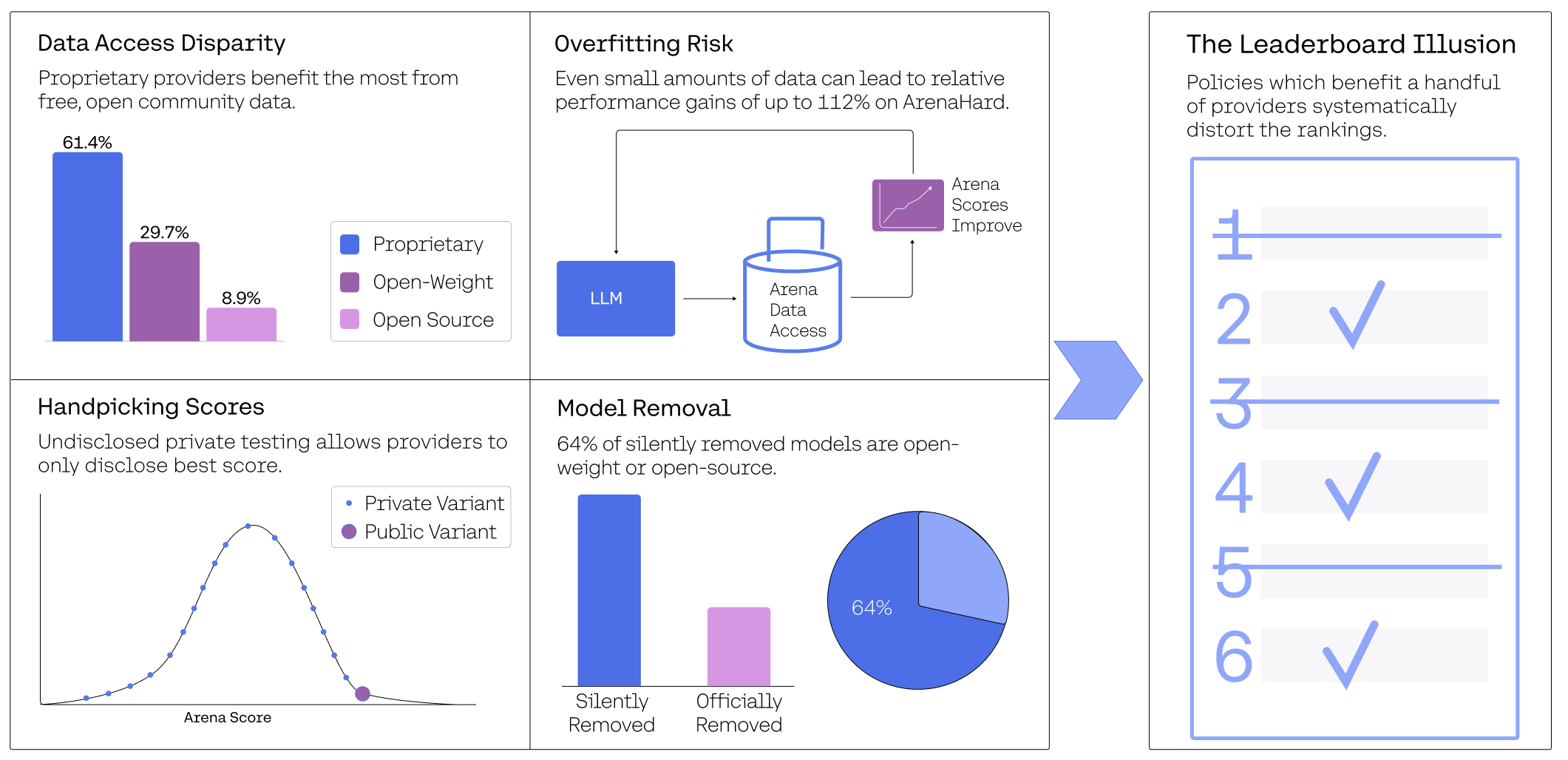}
    \caption{\textbf{Overview of key insights}. 
    We investigate the prevalence of \textbf{undisclosed private testing and selective score reporting} on the Arena (\Cref{sec:impact-of-ability-to-pricately-test}), and highlight significant \textbf{data access disparities} between proprietary and open-source providers (\Cref{sec-disparity-in-data-access}). These disparities enable \textbf{overfitting to the Arena} (\Cref{sec:risk_of_potential_overfitting}). Furthermore, \textbf{model deprecation practices} lack transparency, with many models silently deprecated without any notification to providers. We demonstrate how these deprecations contribute to unreliable rankings on the leaderboard (\Cref{sec:battle-connecitivty}).
    }
    \label{fig-overview}
\end{figure}

\newpage

\section{Introduction}

\begin{quote}
    \textit{Any observed statistical regularity will tend to collapse once pressure is placed upon it for control purposes.} \textbf{--- Charles A. E. Goodhart}
\end{quote}

Benchmarks have long played an integral role in the development of machine learning systems \citep{church_2017,koch2024protoscience}, from the early days of NLP \citep[e.g., TREC competitions and WMT shared tasks,][]{harman1993overview,koehn-monz:2006:WMT} to the recent flurry of deep learning models \citep[e.g., ImageNet,][]{deng2009imagenet}, to the rise of large language models \citep[e.g., GLUE,][]{wang2018glue}. 
A meaningful benchmark demonstrates the relative merits of new research ideas over existing ones, and thereby heavily influences research directions, funding decisions, and, ultimately, the shape of progress in our field.
The recent meteoric rise of generative AI models---in terms of public attention, commercial adoption, and the scale of compute and funding involved \citep{kaplan2020scaling,hoffmann2022training,samsi2023words,hooker2024limitationscomputethresholdsgovernance}---has substantially increased the stakes and pressure placed on leaderboards \citep{orr2024ai}. 

Recently, \cba has become the de facto standard for comparing generative AI models, holding enormous sway over media\footnote{\url{https://www.wsj.com/tech/ai/the-uc-berkeley-project-that-is-thewai-industrys-obsession-bc68b3e3}} \footnote{\url{https://www.bloomberg.com/news/articles/2025-02-18/before-deepseek-blew-up-one-website-announced-its-arrival}}, the AI industry \citep{aiindex2025}, and academia \citep{grattafiori2024llama, yang2024qwen2technicalreport}. Created in 2023, \cba rates LLMs by allowing anyone to submit a prompt and subsequently rank two anonymous responses from different models. \cba quickly rose to prominence because it helped fill a critical gap in evaluation at a time of rapid technological change. As generative AI technology has grown more capable, academic multiple-choice evaluations 
\citep{hendrycks2020measuring, liang2022holistic, romanou2024include,singh2024global,wang2024mmlu,adelani2025irokobenchnewbenchmarkafrican,salazar2025kaleidoscopeinlanguageexamsmassively} have failed to reflect the real-world open-ended use cases these models are now being used for \citep{ustun-etal-2024-aya,chiang2024chatbot,mizrahi2024stateartmultipromptllm}. 
As a dynamic, user-driven evaluation framework where new questions can be asked daily, \cba can, at least in principle, evolve alongside model capabilities. Furthermore, the open-ended nature of the interaction can capture emerging real-world use cases that differ from those initially envisioned by model creators.

\begin{figure}[t]
    \centering
    \includegraphics[width=0.9\linewidth]{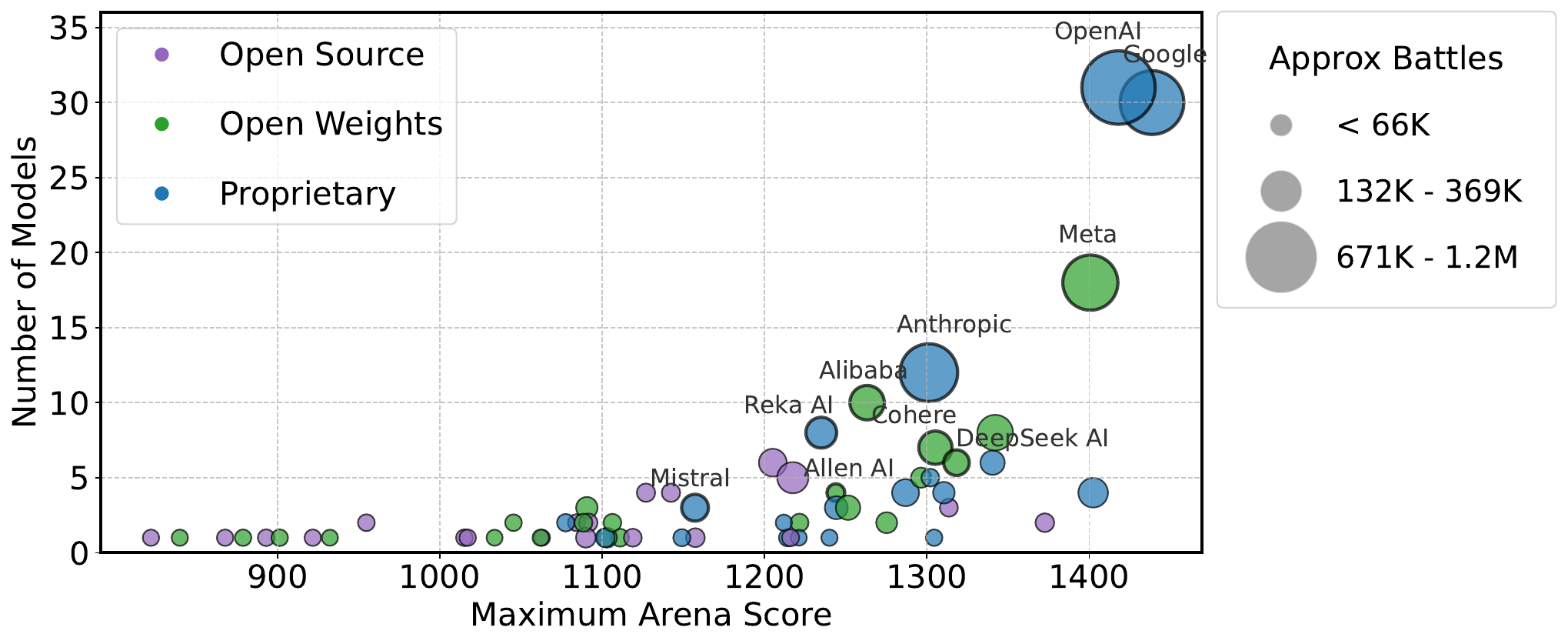}
    \caption{\textbf{Number of public models vs. maximum arena score per provider.} Marker size indicates total number of battles played. Proprietary model providers tend to achieve higher leaderboard scores, which appear to correlate with both the number of models they release and the number of Arena battles played. While model capability is an important factor, we explore in \Cref{sec:impact-of-ability-to-pricately-test} and \Cref{sec-measuring-overfititng} how increased exposure to the Arena (through more models and battles) may confer additional advantages, such as better model selection or adaptation to the evaluation distribution.
    This figure summarizes publicly disclosed results as of April 23rd, 2025.
    }
    \label{fig-max-rating-num-models-main-providers}
\end{figure}

However, the over-reliance on a single leaderboard creates a risk that providers may overfit to the aspects of leaderboard performance, without genuinely advancing the technology in meaningful ways \citep{ensmenger_2011, thomas2020problemmetricsfundamentalproblem, Inioluwa2021, bowman-dahl-2021-will}. As Goodhart's Law states, \textit{when a measure becomes a target, it ceases to be a good measure}. In this work,  we show that willful engagement from a handful of providers along with preferential policies from Chatbot Arena towards the same small group have amplified the potential for gamification, in place of innovative progress.

Our systematic review of \cba involves combining data sources encompassing 2M battles, auditing 42 providers and 243 models across a fixed time period (January 2024 - April 2025). This comprehensive analysis reveals that over an extended period, a handful of preferred providers have been granted disproportionate access to data and testing. In particular, we identify an undisclosed \cba policy that allows a small group of preferred model providers to test many model variants in private before releasing only the best-performing checkpoint. Using Arena data, we simulate and demonstrate how this strategy distorts overall arena ratings by beneficially skewing results. 
In addition, we find substantial data asymmetries are apparent regarding the amount of feedback data given to proprietary vs. open-weight vs. open-source model providers. This is due to a combination of unequal sampling rates (how often a model appears in a \cba battle) and deprecation policies (which models are retired from the arena). 
We show that these differences lead to distorted and unreliable Arena rankings and create conditions in which providers may be overfitting to Arena-specific dynamics rather than general model quality. We find:
\begin{figure}
    \centering
    \includegraphics[width=0.8\linewidth]{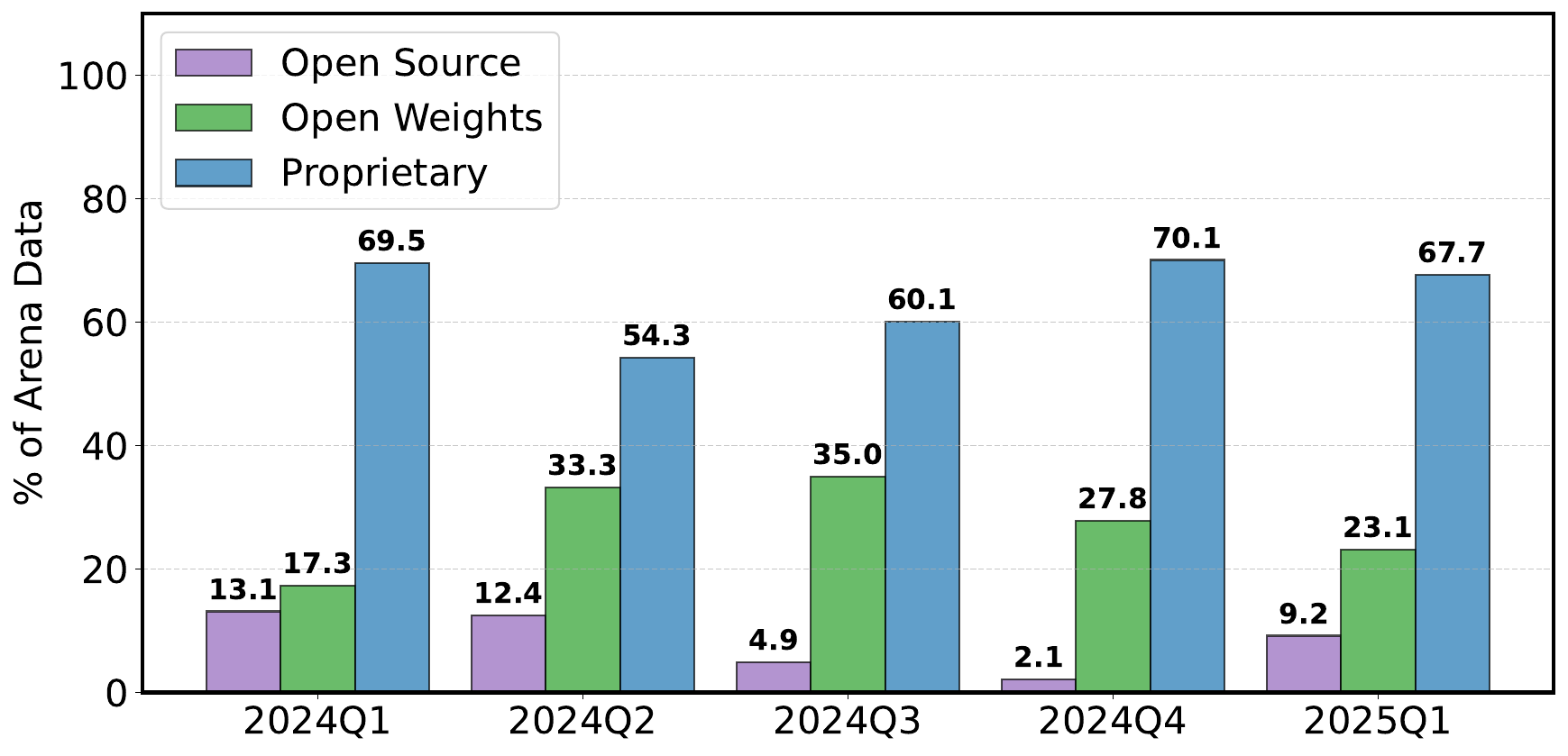}
    \caption{\textbf{Volume of Arena battles involving proprietary, open-weight, and fully open-source model providers from January 2024 to March 2025, based on \texttt{\lmarenaleaderboard}.} Proprietary models consistently received the largest share of data---ranging from 54.3\% to 70.1\%. Open-weight and fully open-source models receive significantly less data, in some cases receiving less than half the amount of data as proprietary developers. 
    This imbalance in data access exacerbates the performance gap, reinforcing unequal access to high-quality in-distribution data.
    }
    \label{fig-percentage-battles-open-proprietary}
\end{figure}

\begin{enumerate}[noitemsep,topsep=0pt]
    \item \textbf{Preferential treatment around private testing and retraction.} \cba has an unstated policy of allowing select providers to test many submissions privately in parallel. We show that certain model developers (most notably Meta, Google, Open AI and Amazon) have benefited from extensive private testing. In a single month, we observe as many as 27 models from Meta being tested privately on \cba in the lead up to llama 4 release. Notably, we find that \cba does not require all submitted models to be made public, and there is no guarantee that the version appearing on the public leaderboard matches the publicly available API\footnote{\url{https://x.com/lmarena_ai/status/1909397817434816562}} \footnote{\url{https://www.theverge.com/meta/645012/meta-llama-4-maverick-benchmarks-gaming}}. We show with real-world experiments and simulations that the ability to select the best-scoring variant from $N$ models enables systematic gaming of the Arena rating.
    \item \textbf{Far more data is released to proprietary model providers.} \cba is a community-driven leaderboard that benefits from free, crowdsourced feedback provided by everyday users. However, proprietary model providers collect significantly more test prompts and model battle outcomes than others. Google and OpenAI have received an estimated 19.2\% and 20.4\% of all test prompts on the arena, respectively. We estimate this based on the share of total battles played by the models of different providers on the Arena (as shown in \Cref{fig:public_private_data}). In contrast, a combined 41 fully open-source models have only received an estimated 8.9\% of the total data, collectively.
    \item \textbf{\cba data access drives significant performance gains.} The differences in data access between providers matter; we estimate that by training on \cba data, model ranking can be improved significantly. In a controlled experimental setting, we observe that increasing the arena training data proportion from (0\%~$\rightarrow~$70\%) more than doubles the win-rates from 23.5\% to 49.9\% on ArenaHard \citep{li2024crowdsourced}. We believe this is a conservative estimate, as a subset of providers have disproportionate access to private API data, which, if used, can potentially yield even greater performance gains. 

     \item \textbf{Deprecations can result in unreliable model rankings.}  
     As shown in \Cref{fig-silent-deprecated}, out of 243 public models, 205 have been silently deprecated. This is a significantly higher number than the 47 models officially listed as deprecated as part of \cba's backend codebase, FastChat\footnote{\url{http://github.com/lm-sys/FastChat/blob/0e6d3e4beaab66f4d3f93db72541a4abab8af28d/fastchat/serve/monitor/monitor_md.py\#L7} \label{monitor_md}}. We show that deprecation can violate key assumptions of the Bradley-Terry model \citep{bradley1952rank},
     which underpins Arena scoring, thereby reducing the reliability of the leaderboard rankings. Critically, we find that open-weight and open-source models are far more likely to be deprecated and, hence, receive unreliable ratings. Among the models that are silently deprecated, 64\% are open-weight or fully open-source. 
\end{enumerate}

\begin{figure}[bt!]
    \centering
    {
    \includegraphics[width=0.7\linewidth]{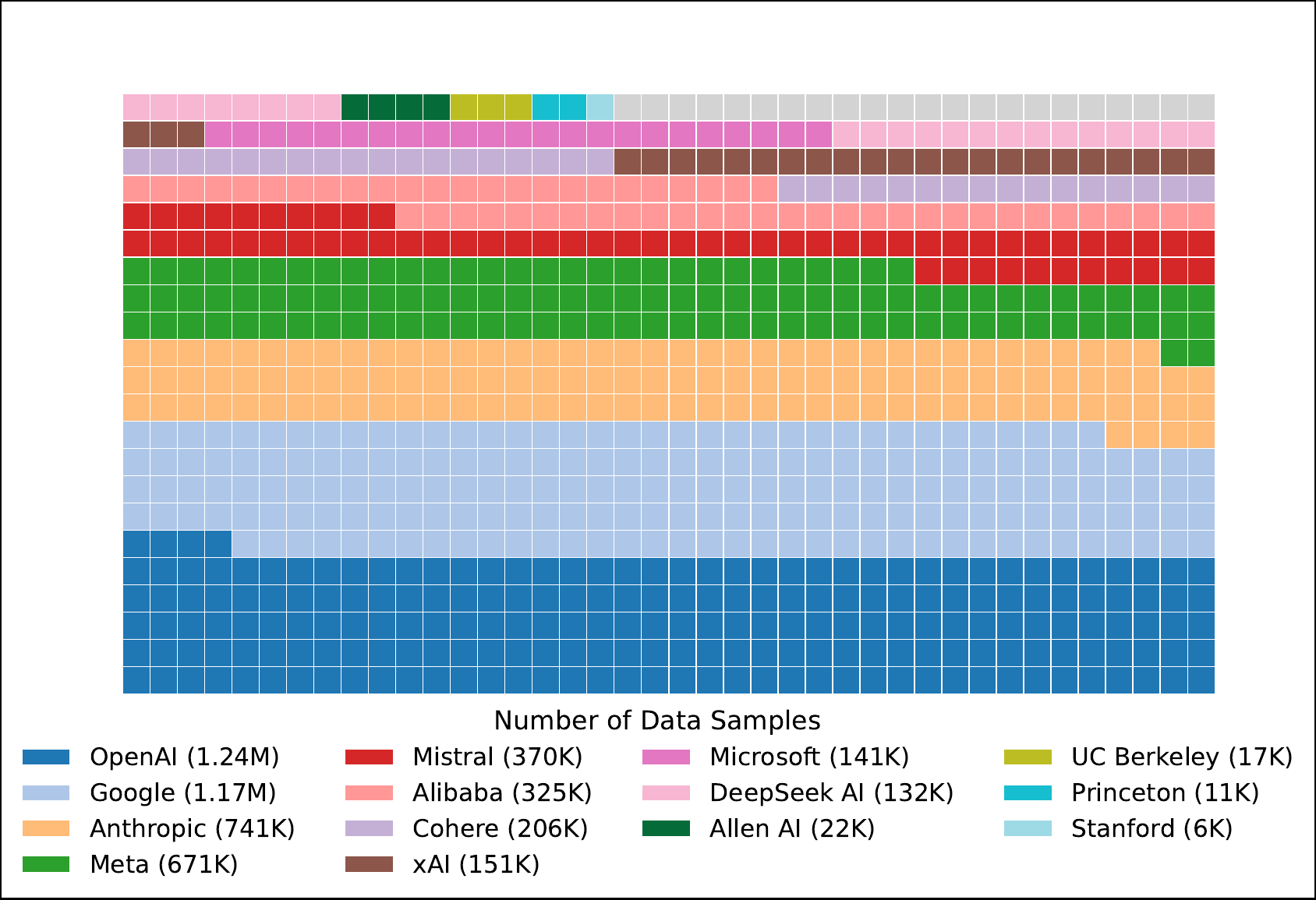}}
    \caption{\textbf{Data availability to model providers.} We observe large differences in data access between providers, with 61.4\% of all data going to proprietary providers.}
    \label{fig:public_private_data}
\end{figure}

\begin{figure}
    \centering
    \includegraphics[width=0.87\linewidth]{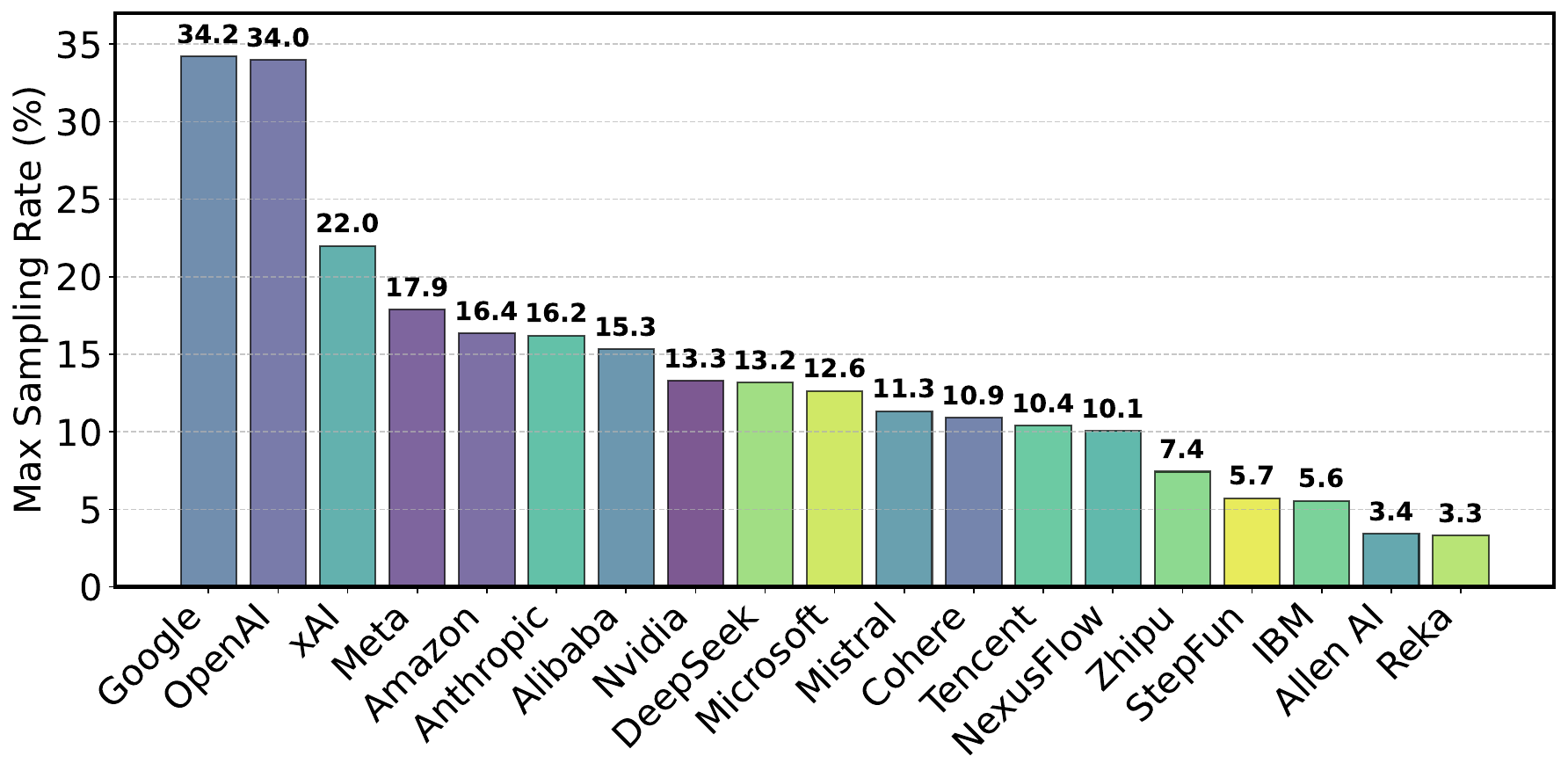}
    \caption{\textbf{Maximum observed sampling rate for models from different providers}. The sampling rate determines the amount of times a model is shown to everyday users, and the amount of data a provider receives. We observe large discrepancies across providers, with substantially higher sampling rates for OpenAI, Google, xAI, and Meta compared to others.}
    \label{fig-private-testing-max}
\vspace*{-3mm}
\end{figure}

\textbf{Our role in both participating in the leaderboard and providing an overview of its limitations.} 
It is important to acknowledge that a subset of the authors of this paper have submitted several open-weight models to \cba: \texttt{command-r} \citep{cohere2024}, \texttt{command-r-plus} \citep{cohere2024} in March 2024, \texttt{aya-expanse} \citep{dang2024aya} in October 2024, \texttt{aya-vision} \citep{CohereAya2025} in March 2025, \texttt{command-a} \citep{cohere2025commandaenterprisereadylarge} in March 2025. We started this extensive study driven by this submission experience with the leaderboard. While submitting Aya Expanse \citep{dang2024aya} for testing, we observed that our open-weight model appeared to be notably undersampled compared to proprietary models --- a discrepancy that is further reflected in Figures \ref{fig-percentage-battles-open-proprietary}, \ref{fig:public_private_data}, and \ref{fig-private-testing-max}. 
In response, we contacted the \cba organizers to inquire about these differences in November 2024. In the course of our discussions, we learned that some providers were testing multiple variants privately, a practice that appeared to be selectively disclosed and limited to only a few model providers. We believe that our initial inquiries partly prompted \cba to release a public blog in December 2024 detailing their benchmarking policy\footnote{\url{https://drive.google.com/file/d/1reook2cjwq81xD6Yn528KOLWeWRy0ZvN/view?usp=sharing}\label{policy}}, which committed to a consistent sampling rate across models. However, subsequent anecdotal observations of continued sampling disparities and the presence of numerous models with private aliases motivated us to undertake a more systematic analysis. As part of this analysis, we launched private variants to estimate the benefit of multiple submissions and retraction. We report on the results of this real-world study in \Cref{sec:real-world-exp}. 
Consequently, Cohere appears in \Cref{fig-private-testing-providers} and \Cref{fig:modelcomparisons} as having private variants. Before this period, Cohere had not launched any private testing.

While our work points out limitations and unreliability issues with \cba, it is important to acknowledge that \cba originated as an academic-sponsored leaderboard that took on pronounced importance to the machine learning community. It takes considerable effort to coordinate a leaderboard, and arena organizers have often faced large demands on their time from supporting the addition of models from many different model providers. Hence, while we point out systematic issues with \cba as it is now, we also believe some of these issues may have emerged gradually as successive design choices and concessions to certain large providers accumulated and introduced unreliability. We believe that there are reasonable interventions that the organizers can do to restore scientific accuracy and renew trust in the leaderboard. We have directly shared our recommendations with \cba prior to releasing this work. We include these here and also share in \Cref{sec:recommendations} for a complete discussion.

\begin{tcolorbox}[title=Critical Recommendations to Restore Trust to the \cba,colback=thomblue1!9]
In order to restore fairness, transparency and trust to the \cba, we determine that the following five changes are both actionable, easy to implement as well as \textbf{urgent and necessary for restoring scientific integrity}:
\begin{enumerate}[left=0pt]
    \item \textbf{Prohibit score retraction after submission.} All model evaluation results—including private variants—must be permanently published upon submission, with no option to retract or selectively hide scores. This is a crucial and necessary requirement to restore trust in the leaderboard. There is no reasonable scientific justification for allowing a handful of preferred providers to selectively disclose results. Indeed, we show that this skews Arena scores upwards and allows a handful of preferred providers to gamify the leaderboard. 
    \item \textbf{Establish transparent limits on the number of private models per provider.} There are valid reasons to offer private testing -- as it allows the \cba community to test models in development while also preserving the anonymity required before a release. However, some providers are engaging in private testing \textit{far} more than others. This has implications for gamification, but it also means preferred providers benefit from substantially more data from an open community project. At the time Meta launched 27 variants in the lead up to Llama 4 release. This highly skews data access from a community resource.
    It is critical to enforce a strict and publicly disclosed limit (e.g., maximum 3 model variants tested concurrently per provider). This prevents excessive testing runs that skew the leaderboard while ensuring fair, transparent benchmarking for all.
    If private testing continues, it should have transparent limits and be paired with disclosing all scores.
     \item  \textbf{Ensure model removals are applied equally to proprietary, open-weights, and open-source models.} We find deprecation disproportionately impacts open-weight and open-source models, creating large asymmetries in data access over time (\Cref{sec-disparity-in-data-access}). We strongly urge arena organizers to remove an equal share across proprietary, open-weight, and open-source licenses to avoid creating asymmetric access to data that disadvantages open-weight providers. For example, a reasonable and transparent policy would be to deprecate the bottom 30th percentile for each group of proprietary, open-weights, open-source models. 
    \item \textbf{Implement fair sampling.} Our work finds large biases towards proprietary models in sampling rates. For example, in \Cref{fig-private-testing-max}, we observe a maximum daily sampling rate of models from OpenAI or Google of up to 34\%, which is 10 times more than what is observed as the maximum sampling from providers like Allen AI. We suggest a return to an active sampling method proposed by the arena organizers themselves \citep{chiang2024chatbot}, which appears not to be respected or implemented in practice. This formulation avoids simply favoring large proprietary providers and instead effectively prioritizes under-evaluated and high-variance pairs. This avoids preferring a subset of providers and instead focuses votes to align sampling with the goal of rapidly reducing uncertainty in rankings.
    \item \textbf{Provide transparency into what models are being removed from the leaderboard.} We found that a large number of models are silently deprecated without being added to the list of officially deprecated models. While 47 out of 243 public models are officially listed as deprecated, we measure that 205 have been silently removed by the organizers without informing the provider. For transparency and to ensure that the deprecation policy is implemented fairly, it is important that there is a comprehensive list of what models have been removed from the leaderboard. 
\end{enumerate}
\end{tcolorbox}

\section{Overview of Methodology}

\begin{quote}
    \textit{To measure is to know} \textbf{--- Lord Kelvin}
\end{quote}

Leaderboards have played a pivotal role in the computer science field in driving technological progress \citep{koch2024protoscience}. A desirable goal of a general benchmark is to reliably rank models according to real-world capabilities and preferences. Hence, if one system significantly outperforms another, then the result should be strong evidence that the higher-ranked system is better at the task \citep{bowman-dahl-2021-will,rodriguez-etal-2021-evaluation}. 
If a leaderboard is not representative of real performance gains, it can lead practitioners and researchers towards less impactful areas of intervention and inquiry. 
In some cases, this misalignment enables \textit{gaming} of the system---where participants optimize for leaderboard metrics rather than real-world utility.
As a result, the gap between reported performance and actual value in deployment can continue to widen. Ultimately, the goal of our work is to ask whether the rankings of \cba are a \textit{reliable} ranking of generative AI models. To answer this, across the sections that follow, we will consider and answer the following questions:
\begin{enumerate}
    \item \textbf{Private Testing and Selective Disclosure (\Cref{sec:impact-of-ability-to-pricately-test})} What is the impact of the undisclosed policy of multiple private variant testing and selective retraction on Arena scores? 
    \item \textbf{Data Access Asymmetries (\Cref{sec-disparity-in-data-access})} What factors lead to large asymmetries between providers in data access on the \cba?
    \item \textbf{Risk of Overfitting (\Cref{sec:risk_of_potential_overfitting})} How do asymmetries in data access impact Arena scores? 
    \item \textbf{Model Deprecation (\Cref{sec:battle-connecitivty})} What is the impact of \cba model deprecation policy on Arena score reliability? 
\end{enumerate}

To gain insights and analyze various trends in the \cba leaderboard, we leverage multiple data sources. In total, our real-world data sources encompass 2M battles and cover 243 models across 42 providers. We present the overview of each data source in \Cref{tab:data_sources} and describe the different datasets in more details in \Cref{app:data-sources}. Across our analyses, we group models by the licenses reported on \cba into proprietary, open-weights, and open source to understand overall trends. This allows us to gain insight into who benefits the most from the \cba which is a free and open community resource. We report details of what licenses are grouped into each category in \Cref{app:license-categories}. We note that many open-weights models are extremely performant and include Aya family of models \citep{dang2024aya, ustun-etal-2024-aya}, Meta models \citep{llama4} and Gemma family of models \citep{gemmateam2025gemma3technicalreport}.

\begin{table}[ht!]
    \scriptsize
    \centering
    \caption{A summary of datasets we constructed, their sources, and the research questions they enabled us to answer. These datasets can be of one of the following types: \textbf{battles only} (\iconcolor{\faArrowsAltH}), \textbf{conversations only} (\iconcolor{\faComments}), \textbf{battles with conversations}, (\iconcolor{\faArrowsAltH} \iconcolor{\faComments}) and \textbf{leaderboard updates} (\iconcolor{\faTrophy}). Depending on the dataset type, it either \textbf{contains prompts} (\iconcolor{\faCheck}) or doesn't (\iconcolor{\faTimes}). Accessibility of the datasets is indicated using \textbf{public} (\iconcolor{\faGlobe}) or \textbf{private} (\iconcolor{\faLock}).}
    \label{tab:data_sources}
    \resizebox{1.0\textwidth}{!}{
    \renewcommand{\arraystretch}{1.5} 
    \begin{tabular}{|m{1.5cm}|m{2cm}|m{2.5cm}|>{\centering\arraybackslash}m{0.8cm}|>{\centering\arraybackslash}m{0.8cm}|>{\centering\arraybackslash}m{1.2cm}|m{0.8cm}|m{1.8cm}|}
    \hline
    \textbf{Name} & \textbf{Fields} & \textbf{Source} & \textbf{\iconcolor{\faGlobe}/\iconcolor{\faLock}} & \textbf{Type} & \textbf{Prompts?} & \textbf{Size} & \textbf{Period} \\ \hline
        \multirow{3.5}{2cm}{Historical Battles}
        & \multirow{3.5}{2cm}{battle dates, category \& language tags} & Arena-human-preference-100k\footnote{\url{https://huggingface.co/datasets/lmarena-ai/arena-human-preference-100k}} & \iconcolor{\faGlobe} & \iconcolor{\faArrowsAltH} \iconcolor{\faComments} & \iconcolor{\faCheck} & 100K & \multirow{3}{2cm}{04-23 - 01-25} \\ 
        \cline{3-7}
         &  & Colab data\footnote{\url{https://blog.lmarena.ai/blog/2023/leaderboard-elo-update/}} \footnote{\url{https://blog.lmarena.ai/blog/2023/arena/}} & {\iconcolor{\faGlobe}} & {\iconcolor{\faArrowsAltH}} & \iconcolor{\faTimes}  & {1.9M}  &  \\ 
         \cline{3-7}
         &  & {LMArena, Cohere} & {\iconcolor{\faLock}} & {\iconcolor{\faArrowsAltH} \par \iconcolor{\faComments}} & {\iconcolor{\faCheck}} & {43K} &  \\ 
         \hline
        Random Scraped Battles & model identity responses, battle players & Crawled & \iconcolor{\faLock} & \iconcolor{\faArrowsAltH} & \iconcolor{\faTimes} & 5.8K & 01-25 - 03-25 \\ \hline
        API prompts & prompts & Cohere  & \iconcolor{\faLock} & \iconcolor{\faComments} & \iconcolor{\faCheck} & 197K & 11-24 - 04-25  \\ \hline
        Leaderboard Statistics & ratings, dates, models, battles counts, licenses, providers & HuggingFace Leaderboard Commit History\footnote{\url{https://huggingface.co/spaces/lmarena-ai/chatbot-arena-leaderboard/tree/main}} & \iconcolor{\faGlobe} & \iconcolor{\faTrophy} & \iconcolor{\faTimes} & 14.3K & 01-24 - 04-25 \\ 
        \hline
    \end{tabular}
    }
\end{table}

\subsection{Introduction to the Arena Score}
\label{sec:arena-score}

The Arena score is the key metric used in \cba to rank models based on how humans judge their performance. In Arena, users engage in pairwise comparisons, voting on which of two models performs better (or declaring a tie) in a given interaction. A naive way to compute rankings would be to average each model's win-rates, simply counting how often they win. However, this approach does not consider the strength of the opponents and requires all models to play against each other. Furthermore, averaging win-rates treats all wins equally, which can lead to misleading or unfair rankings, especially when models face opponents of varying strength levels. 

Instead, \cba ranks models based on their Arena Score, which is a normalized version of the Bradley-Terry (BT) model \citep{bradley1952rank}, a probabilistic framework for estimating skill levels based on pairwise comparisons. BT is a widely used model to estimate skill levels in diverse fields such as online gaming (e.g., StarCraft II) 
\citep[i.a.]{leitner2010forecasting, liquipedia_elo_rating, Wise2021EloRF} and
sports rankings \citep{glickman1999parameter,hunter2004mm}. Compared to alternative systems like Elo \citep{elo1978rating,boubdir2024elo}, which was initially used in the first version of \cba, BT provides a more robust and statistically grounded basis for ranking. BT produces well-calibrated estimates while naturally accommodating ties and missing comparisons, and providing confidence intervals for the rankings as long as the underlying model conditions are met. 

While BT is considered to be a principled ranking system, its effectiveness depends on several key assumptions. Background on these assumptions will provide important context for our discussions in \Cref{sec:dist-shift} and \Cref{sparse_battle_history}, where we examine how violations of BT's assumptions affect the reliability of Arena rankings.  The BT model assumes that pairwise comparisons are drawn from an \textbf{unbiased sampling process}, so that each model's skill parameter is estimated independently and based on representative outcomes. It also does not require all players to play each other because of \textbf{transitivity} of rankings---if Model A is judged to beat Model B and Model B in turn beats Model C, then Model A is estimated to outperform Model C (even if they never actually play). Finally, it assumes that the comparison network is \textbf{fully interconnected}---that is, every model must be linked directly or indirectly through pairwise matchups. Deviations from these assumptions, such as biased comparisons or a fragmented graph, can compromise the reliability and consistency of the strength estimates. For further details, we include the mathematical formulation of the BT model in \Cref{app:bradley-terry}. 

\section{Results: Impact of Private Testing and Selective Retraction on Arena Scores}
\label{sec:impact-of-ability-to-pricately-test}

\begin{tcolorbox}[title=Section Findings,colback=thomblue1!9]
\begin{itemize}[left=0pt]
    \item \textbf{Undisclosed testing policy permits preferred providers to bias reporting of results.} \cba currently permits a small group of preferred providers to test multiple models privately and only submit the score of the final preferred version. As observed in \Cref{fig-private-testing-providers}, we observe that Meta, Google, and Amazon were key beneficiaries of this policy for the period for which we collected data.
    \item \textbf{Selective disclosure violates unbiased sampling assumption of BT model.} We show theoretically how BT assumptions are violated, which systematically inflates model rankings, distorting the leaderboard ranking. In \Cref{fig:number-of-variants}, we simulate the expected distortions to leaderboard Arena rankings and show that testing just 10 variants yields a notable increase of approximately 100 points in the maximum score identified.     
    \item \textbf{Real world experiments on the \cba confirm benefits of selective reporting of results.} We corroborate our simulations with experiments on the real arena by deploying multiple private variants. We show that even limited numbers of variants lead to large gains in Arena Scores. Submitting multiple model variants to the \cba can lead to a systematic advantage in rankings, even when the underlying models are identical or only marginally different.
\end{itemize}
\end{tcolorbox}

\subsection{Preferred Providers Frequently Use Private Testing}

Although not an officially stated policy\footref{policy}, our audit of \cba data using \texttt{random-sample-\\battles} revealed that providers are permitted to test multiple private model variants simultaneously, without being required to publicly release or de-anonymize these submissions. In \Cref{fig-private-testing-providers}, we plot the number of private variants we tracked as belonging to each provider from January to March 2025. 
Meta and Google had the most active private models during this period, with 27 and 10 models, respectively. Notably, Meta's private testing occurred just before the release of the Llama 4 models \citep{llama4}.
For Google, the majority of private testing was driven by Gemini models (proprietary), with only one observed test involving Gemma 3 (open-weights) \citep{gemma3_announcement}.

We note this is likely a very conservative estimate as it only tracks the private variants on the main \cba, and does not take into account private variants on specialized leaderboards run by Arena such as for vision \footnote{\url{https://blog.lmarena.ai/blog/2024/multimodal/}} or code.\footnote{\url{https://blog.lmarena.ai/blog/2024/copilot-arena/}} \footnote{\url{https://blog.lmarena.ai/blog/2025/repochat-arena/}} Indeed, if we also consider the number of private models tested by Meta on the vision leaderboard, we observe an additional 16 variants, bringing the total to 43. In contrast, smaller startups, such as Reka, were found to have one active private variant live in the arena. We note that during the same period, Cohere submitted 4 private variants---these ablations were part of experiments measuring the lift that could be expected from private testing that we detail in the experiments in \Cref{sec:private-testing-simulation} and \Cref{sec-disparity-in-data-access}. Before this, Cohere had never submitted private variants for testing in the \cba. 

\begin{figure}[htb!]
    \centering
    \includegraphics[width=0.8\linewidth]{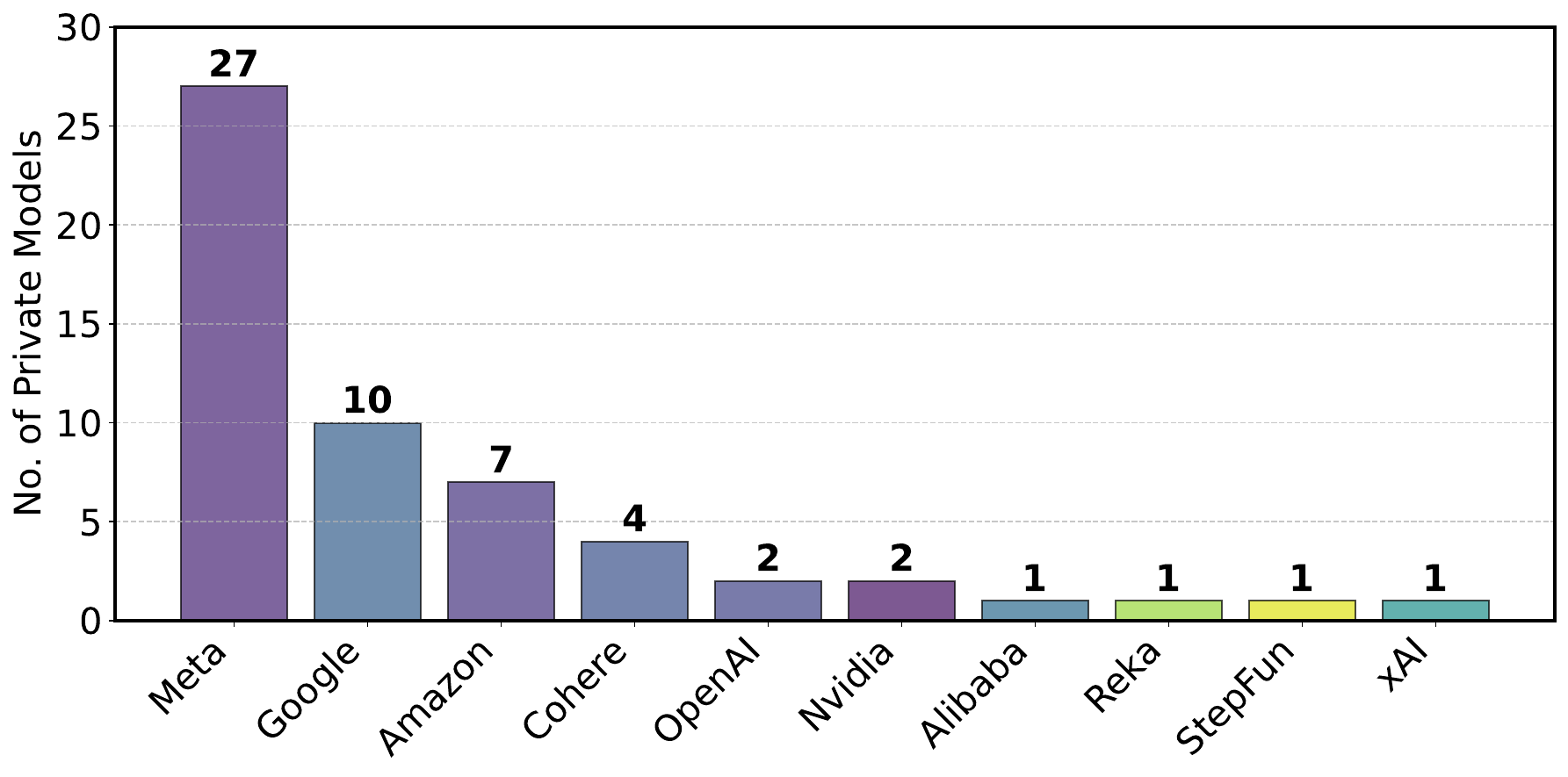}
    \caption{\textbf{Number of privately-tested models per provider based on \texttt{random-sample-\\battles} (January--March 2025)}. Meta, Google, and Amazon account for the highest number of private submissions, with Meta alone testing 27 anonymous models in March alone. 
     We note that during the same period, Cohere submitted private variants---these ablations were part of experiments submitted by the authors of this work to measure the lift that could be expected from private testing that we detail in the experiments in \Cref{sec:private-testing-simulation} and \Cref{sec-disparity-in-data-access}.
    }
    \label{fig-private-testing-providers}
\end{figure}

We only scraped data from January to March 2025, yet we anecdotally observed behavior that suggests submitting multiple variants was a long-standing practice amongst a subset of providers. 
Over the last year, we have observed that major LLM providers such as Google, xAI, and OpenAI are often announced as having the top-performing variant within just a few days of one another.
For example, OpenAI's \texttt{GPT-4.5} and xAI's \texttt{Grok-3} reached the top of the \cba leaderboard within the same day (March 4, 2025)\footnote{\url{https://x.com/lmarena_ai/status/1896675400916566357}} \footnote{\url{https://x.com/lmarena_ai/status/1896590146465579105}}. 
\texttt{Gemini (Exp 1114)} from Google DeepMind reached the top of the leaderboard on November 14, 2024\footnote{\url{https://x.com/lmarena_ai/status/1857110672565494098}} and shortly after, \texttt{ChatGPT-4o (20241120)} from OpenAI claimed the top position on November 20, 2024\footnote{\url{https://x.com/lmarena_ai/status/1859307979184689269}}. Just one day later, on November 21, 2024, \texttt{Gemini (Exp 1121)} regained the top spot\footnote{\url{https://x.com/lmarena_ai/status/1859673146837827623}}.
Given the time typically required to develop, refine, and test a foundation model, it is unlikely for the same provider to top the leaderboard twice in a single week unless they were testing multiple variants simultaneously.
In \Cref{sec:private-testing-simulation}, we demonstrate through simulated experiments that rapid leaderboard turnover can plausibly emerge from providers optimizing for the highest possible score by testing multiple model variants in parallel. 

Notably, we found that no private models were tested by academic labs during the observed period. This leads us to believe only certain providers were made aware they could submit multiple private variants, as we observe clear differences in the number and frequency of private testing among providers.

\subsection{Simulated Experiments on Private Testing and Retraction}\label{sec:private-testing-simulation}

Private testing coupled with the option to retract enables a best-of-N strategy, where an organization submits multiple model variants to \cba, privately evaluates them, and retains only the top-performing variant to be publicly published on the leaderboard.
In this section, we show that best-of-N submissions violate the BT unbiased sampling assumption. This systematically inflates model rankings and distorts the leaderboard ranking. 

\begin{figure}[htb!]
    \centering
    \includegraphics[width=0.85\linewidth]{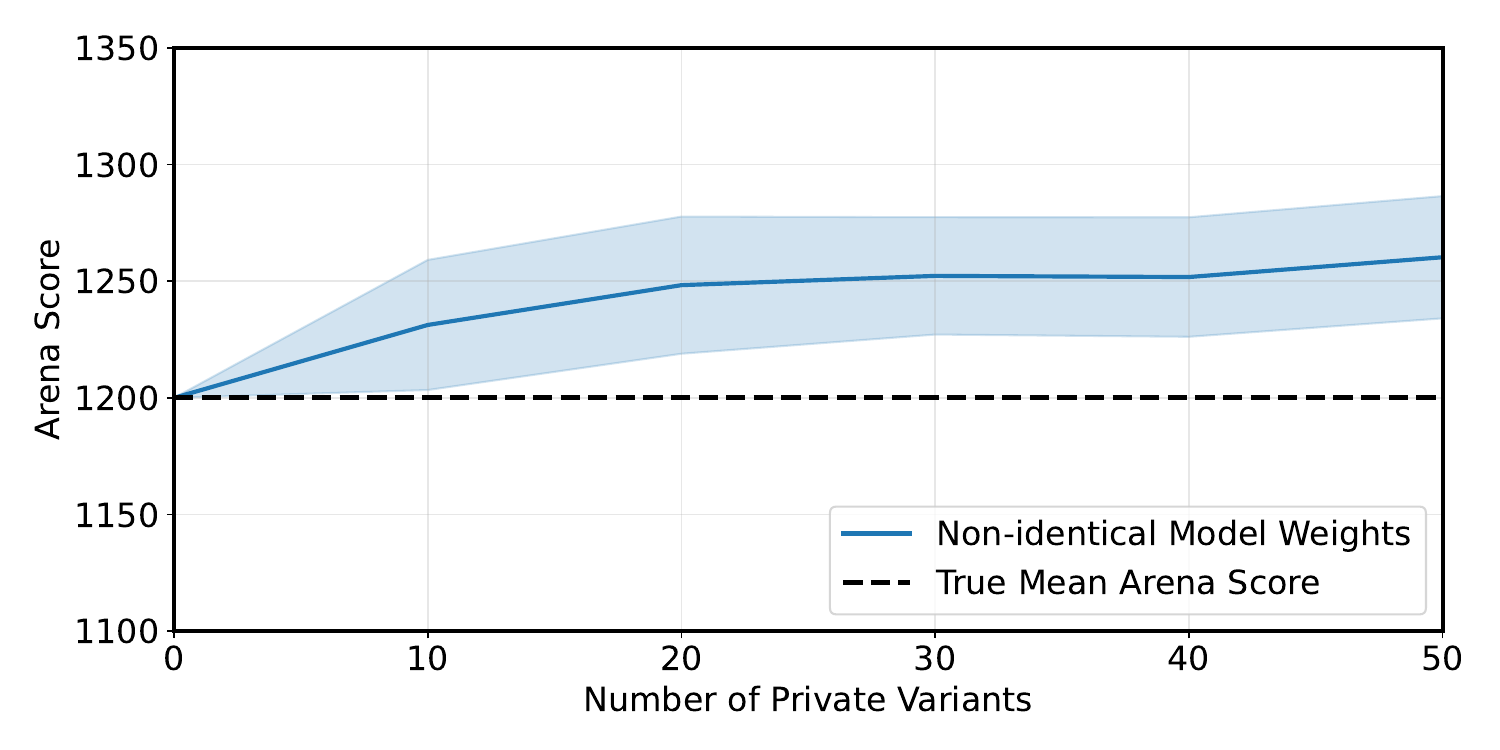}
    \caption{\textbf{Impact of the number of private variants tested on the best Expected Arena Score.} We simulate a family of model variants with a latent average Arena Score of 1200. As we progressively increase the number of private variants tested---and subsequently discover their corresponding Arena Scores---the probability of selecting models from the higher end of the performance distribution also rises. This enables the provider to effectively identify the model with the highest score.
    }
    \label{fig:number-of-variants}
\end{figure}

\textbf{Unbiased Sampling Assumption.} 
To study the selection bias scenario, assume a provider submits \(N\) variants of a model, each variant \(k\) having a true underlying skill parameter \(\beta_k\), sampled from a distribution centered at some base skill level \(\beta\). The probability of observing an exceptionally high-performing variant increases with the number of submissions \(N\). Thus, the observed skill of the submitted model is:  $\hat{\beta}_{\text{Best}} = \max\{\hat{\beta}_1, \hat{\beta}_2, \dots, \hat{\beta}_N\}$.

Since each \(\hat{\beta}_k\) is subject to statistical fluctuation due to finite match sampling, selecting the best variant based on observed performance introduces an upward bias. Specifically, the expected value of the best-performing variant is \textit{strictly greater than} that of a regular submission: 
\begin{equation}
    \mathbb{E}[\hat{\beta}_{\text{Best}}] > \mathbb{E}[\hat{\beta}_k] \text{,} \qquad \forall k \in \{1,2,\dots,N\} .
\label{eq:exp_best_of_N}
\end{equation}
where the draws are \emph{non-degenerate} ($\operatorname{Var}(\hat{\beta}_k) > 0$) and $N \geq 2$ (See \Cref{app:unbiased_sampling} for further details).
This violates the BT model's assumption of unbiased sampling and alters the likelihood landscape. The reported rating no longer reflects a single, unbiased estimate of skill, but an extreme value from multiple independent estimations. As a result, the BT estimator systematically inflates the ratings of models submitted under the best-of-N strategy, distorting leaderboard rankings.

\begin{figure}[t!]
    \centering
    \includegraphics[width=0.85\linewidth]{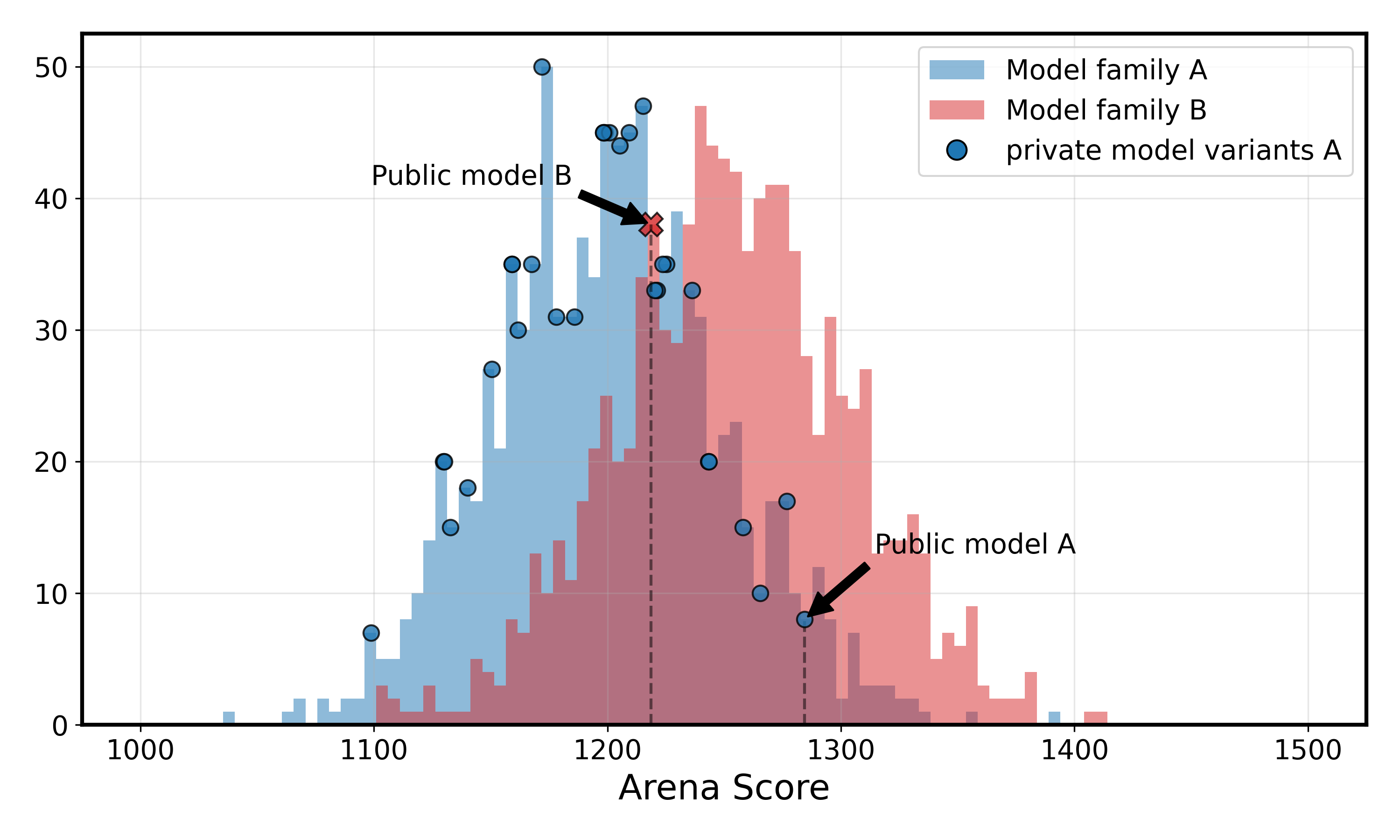}
    \caption{\textbf{Simulated impact of best-of-N submission strategies on Arena leaderboard rankings.} \textcolor{tabblue}{\textbf{Model family A}} has a lower average Arena Score than \textcolor{tabred}{\textbf{Model family B}}, yet by submitting multiple private variants and selecting the best-performing one, it can surpass the sole public submission from \textcolor{tabred}{\textbf{Model family B}}. This simulation shows how providers can gain leaderboard advantage by evaluating several variants privately and publishing only the top-scoring one.}
    \label{fig-submission-strategy}
\end{figure}

\textbf{Role of the number of private variants.} In the \cba, we observe that while only a handful of providers know about private testing, there is also an asymmetry in the \textit{number of private models} tested. At an extreme, Meta tested 27 private variants in the lead up to Llama 4 launch. To investigate how the number of private tests affects model selection outcomes, we simulated the expected lift in Arena Score as the number of private variants increased from 0 to 50. This involved generating sample battles and estimating Arena Scores for the private variants using the Bradley-Terry model. We also employed bootstrapping to estimate the confidence intervals for the corresponding Arena Scores.
\Cref{fig:number-of-variants} presents the results, plotting the number of private variants tested (x-axis) against the average of the maximum discovered Arena Scores across simulations (y-axis).
As the number of privately tested models increases, the expected maximum Arena Score discovered rises accordingly. In our simulation, testing just 20 variants yields a notable increase of approximately 50 points in the maximum score identified, compared to the baseline scenario with no private testing. We provide more details about this simulation in Appendix \ref{app:private-variant-sim}.

\textbf{Asymmetries in which providers have access to private testing.} We observe in practice that only a few preferred providers were allowed to test many variants and handpick the best result. As we show in \Cref{fig-submission-strategy}, restricting private testing to a subset of providers can lead to scenarios where a weaker family of models (Family A) enabled with private testing can outperform a stronger family of models (Family B), which is restricted to a single submission. Although the performance of both model families is in a similar range, Family A's models have a lower average Arena score across all models compared to Family B's. In contrast to model provider B, who is unaware of the best-of-N strategy, model provider A evaluates multiple models on the \cba distribution and selects the best-performing model. This allows provider A to evaluate multiple models on the \cba distribution and select the best-performing model, leveraging the tail of the distribution, to achieve a higher leaderboard ranking.  As a result, despite having a generally stronger model pool, Family B ranks lower than Family A on the leaderboard.

Having to pick from multiple final ``candidate'' models is a very common scenario during the development of LLMs. At the end of the development process of a new model, a model provider typically ends up with multiple variants, each excelling in different tasks due to variations in post-training strategies or hyperparameter settings. They often perform well on different tasks, while still being within similar overall performance bands. Selecting a final ``official'' model often involves a compromise across a broad range of evaluation sets. A strong signal---such as a clear lead on an arena-style leaderboard---can significantly influence this decision, tipping the preference toward specific variants that perform best under the given evaluation framework. Hence, with an ``informed selection'' strategy for model variants, a model provider can improve its ranking and reach the top of the leaderboard compared to when it makes an ``unguided selection'' to choose the variant that can be released on the public leaderboard. 

\begin{figure}[htb]
    \centering
    \includegraphics[width=1.0\linewidth]{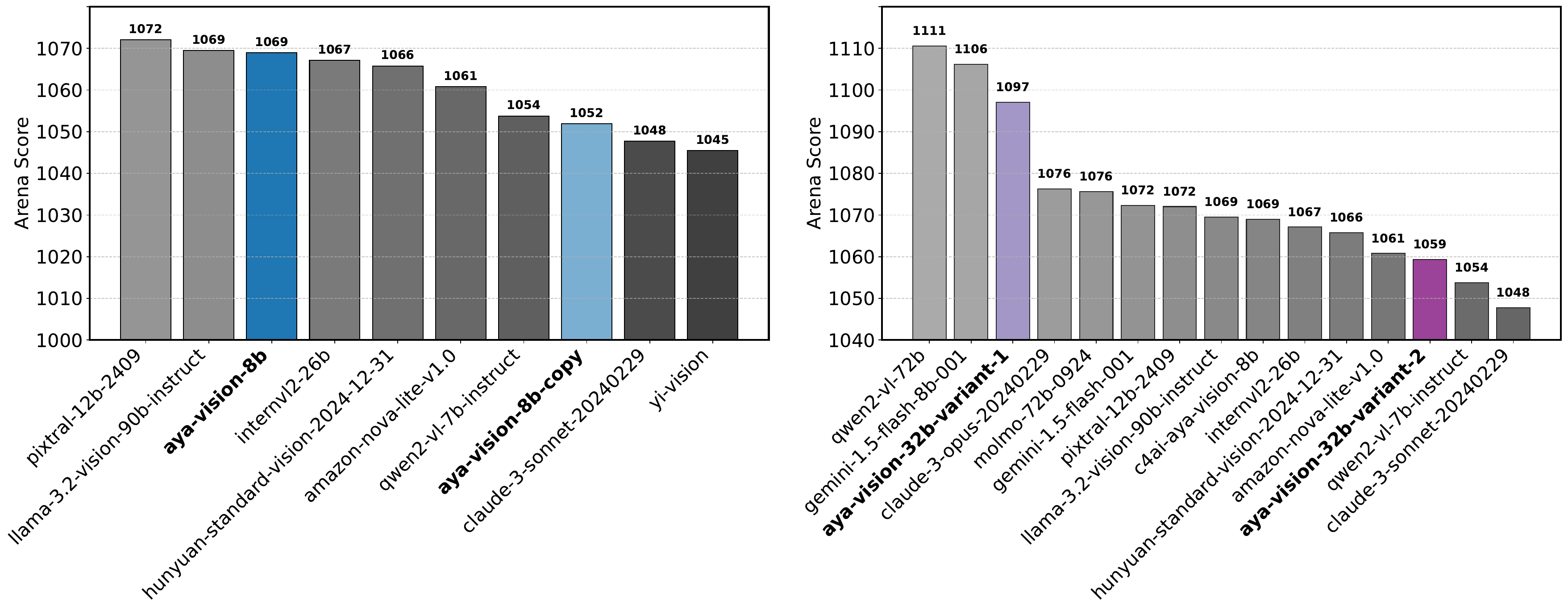}
    \caption{ \textbf{Allowing retraction of scores allows providers to skew Arena scores upwards.} We run a real-world experiment to measure the benefits of private testing. We show that it is possible to increase Arena scores even in the most conservative case of identical checkpoints, and further amplify the difference by strategically testing different checkpoints.
    \textbf{Left: Identical Checkpoints.} Arena Scores for \texttt{Aya-Vision-8B} yield different Arena scores (1069 vs. 1052). 
    \textbf{Right: Strategically Selected Checkpoints.} Arena Scores for two different variants of \texttt{Aya-Vision-32B}, which were both considered high-performing final round candidates according to internal metrics. We observe large differences in final scores (1097 vs. 1059) for the two different model variants.
    }
    \label{fig:modelcomparisons}
\end{figure}

\subsection{Real-world \cba Experiment} 
\label{sec:real-world-exp}

Our simulation previously demonstrated that when model submissions are not standardized across model providers, those who submit a larger number of private model variants gain a systematic advantage, creating an uneven playing field for ranking.
Since we do not have access to the final scores of private model variants observed during our Arena scrape, we design a real-world experiment to complement and validate our simulation findings. Specifically, we model two scenarios of best-of-N testing to assess the impact of submitting multiple model variants.

\textbf{Lower bound estimate of benefits of multiple submissions.} We model the most conservative case -- measuring the gain from submitting and then only releasing the best Arena score obtained by \textit{two identical checkpoints}. For this purpose, we submitted two identical variants of \texttt{Aya-Vision-8B} to the Arena in March 2025. 
This scenario is a very conservative lower bound of gains, as any difference in Arena score can be attributed to the benefits of multiple submissions rather than differences in model quality. We used only two checkpoints because we wanted to limit the amount of human annotator time we diverted to this exercise. We did not disclose that these were identical checkpoints to \cba. In \Cref{fig:modelcomparisons}, we observe notable differences in final Arena scores between identical checkpoints: 1052 (±21/22) and 1069 (±19/23).
    As a result, 4 models are positioned between our two identical checkpoints.
    This suggests that even with identical variants, it is very feasible to obtain a biased positive advantage.

    \textbf{Realistic estimate of benefits of multiple submissions.} In practice, a provider would likely optimize the differences in the variants they are testing to maximize the signal from the Arena users. To study this scenario, we compare two different variants of \texttt{Aya-Vision-32B}.
    The two checkpoints are variants of the same model, with each showing slightly better performance on different subsets of benchmarks.
    In \Cref{fig:modelcomparisons}, we illustrate the extreme ends of the scores obtained by the two models (1097 (±29/25) vs. 1059 (±18/23)), with 9 models falling in between both variants on the leaderboard.

\section{Results: Impact of Data Access Asymmetries on Arena Scores}
\label{sec-measuring-overfititng}

\begin{tcolorbox}[title=Section Findings,colback=thomblue1!7]
\begin{itemize}[left=0pt]
    \item \textbf{Extreme disparities exist in access to data from \cba.} These disparities stem from difference in the \textbf{number of private variants tested}, \textbf{sampling rate}, and \textbf{silent deprecation}. Multiple private variant testing and score retraction is an undisclosed policy, and we find that sampling rate and silent deprecation do not reflect the publicly stated policy of \cba. All these policies appear to severely disadvantage open-weight and open-source providers. For example, OpenAI, Google, Meta, and Anthropic collectively account for 62.8\% of the Arena data, which is 68 times more than the combined share of top academic and non-profit labs like Allen AI, Stanford, Princeton, and UC Berkeley.
    \item \textbf{Access to \cba data has an outsized impact on performance.} We fine-tune models under a fixed training budget, varying the proportion of arena data relative to non-Arena instruction-following samples, and observe a substantial increase in performance. We show that incorporating a moderate amount of arena data greatly improves performance on ArenaHard \citep{arenahard2024} with estimated relative performance gains of 112\%, while showing limited benefits for other tasks of interest. These results represent a conservative estimate of gains given the limited access to arena-style data. Larger labs and companies likely have access to significantly more such data, which, if used, could yield even greater gains. 
\end{itemize}
\end{tcolorbox}

\subsection{Disparity in access to \cba Data}
\label{sec-disparity-in-data-access}

Prompts from a large and diverse user base, such as those from \cba users, serve as a valuable signal for modeling user interests and preferences.
This data is often accessible to model providers through API calls originating from \cba battles.
Multiple factors impact the amount of the data obtained by a given provider, some of which are determined by \cba versus others which are within the control of the providers:

\begin{enumerate}
    \item\textbf{Number of private variants being tested on the arena:} As shown in \Cref{fig-private-testing-providers}, some providers deploy far more private variants, which can significantly increase the volume of data collected. 
    We note that even with our experiment of launching multiple model variants, we increased the amount of prompts collected from 5.9\% with 1 variant to 19.4\% with 3 variants. 
    Based on findings from \Cref{fig-private-testing-providers}, the number of variants submitted is not uniform across all providers, and some providers may increase variants to further amplify the volume of data collected. This is of particular concern given \cba is a community-driven leaderboard, however, the main beneficiaries of this free human feedback appear to be commercial entities who are frequently preferred for private testing. 
    \item\textbf{Sampling rate applied to provider models:} We define model sampling rate as the percentage of daily battles a model participates in. The maximum sampling rate for a provider is the highest rate achieved by any of its models on any given day. We determine the maximum sampling rate of providers based on \texttt{\lmarenascraped}, which is limited to the specific period during which we collected this data (January 2025 to March 2025). As shown in \Cref{fig-private-testing-max}, sampling rates vary significantly across providers. These rates are determined by \cba, but are often entirely inconsistent with the stated policy and prior proposals by the organizers to automatically set sampling based upon which models have not converged in score \citep{chiang2024chatbot}. 
    At the extreme, Google and OpenAI reach a maximum daily sampling rate of 34\%, while Reka registers the lowest at 3.3\%. Other providers with relatively high sampling rates include xAI (22.0\%) and Meta (17.9\%), highlighting substantial variation across the board. We provide additional details about how sampling rates were determined for each provider in \Cref{app:sampling_rates}. 
    \item\textbf{Number of models publicly hosted on the arena:} A model only receives traffic if it is live on the arena. However, \cba frequently deprecates models. There are several reasons to deprecate models in a benchmark. \cba may be forced to deprecate a model when a provider no longer supports it via its API. 
    They also have policies for deprecating models under certain conditions\footref{policy}: \emph{Models may be retired after 3000 votes ``if there are two more recent models in the same series and/or if there are more than 3 providers that offer models cheaper or same price and strictly better (according to overall Arena score)''.} We note that the logic of this policy is difficult to audit in practice because many models are hosted for free on the \cba, and the use of the ``or'' condition means it is not clear what criteria (price or quality) applies to decisions. We observe that many models are also silently deprecated, which means their sampling rate is reduced to nearly 0\% without notification, even though some of them do not meet the stated criteria of the deprecation policy. We identify 205 models that have been silently deprecated, a number that substantially exceeds the 47 models officially marked as deprecated by \cba. For a more detailed analysis, see \Cref{app:deprecation-impact}.
     \item  \textbf{API Support for Models on the Arena:}
    Developers who deploy a model and enable \cba testing via an API have a default advantage. This allows providers to collect 100\% of the test prompts submitted on the Arena. In contrast,  providers whose models are hosted by a third party are often limited to publicly accessible data or must request access to only 20\% of the data (including prompts and human preferences) involving their models from \cba, as per their policy\footref{policy}.
\end{enumerate}

We observe that the collective impact of these factors appears to be advantageous to a small handful of providers and is often inconsistent with the stated policy. For example, in \Cref{fig:public_private_data}, we show that the combined share of OpenAI, Google, Meta, and Anthropic alone is 62.8\% of the arena data, which is 68 times more than the share of top academic and non-profit labs including Allen AI, Stanford, Princeton, and UC Berkeley.
These findings add to prior works that consistently show better corporate access to AI training data across the ecosystem \citep{longpre2024bridging, longpre2024consent}.

We note that the prompt samples available to each provider may not be mutually exclusive, as each battle on the Arena involves two models (from the same or different model providers), allowing the same prompt to be sent to two different providers.
Details about statistics in \Cref{fig:public_private_data} are available in \Cref{app:public-private-data}.

\begin{figure}[t]
     \centering
     \begin{subfigure}[t]{0.49\textwidth}
        \centering
        \includegraphics[width=1\textwidth]{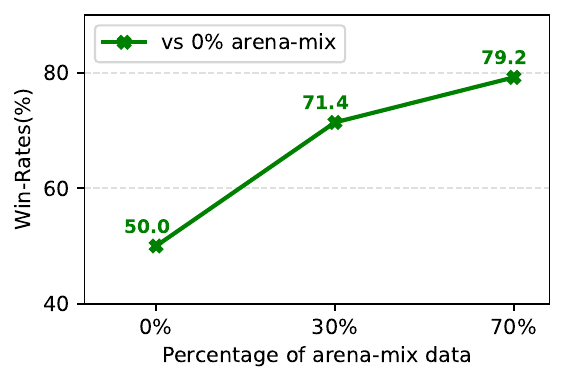}
        \label{fig:vs-base}
    \end{subfigure}
    \begin{subfigure}[t]{0.49\textwidth}
        \centering
        \includegraphics[width=1\textwidth]{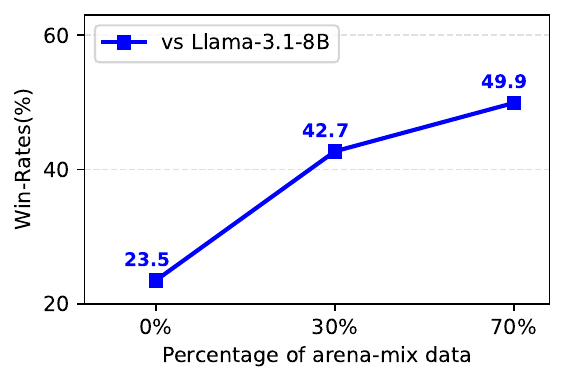}
        \label{fig:vs-llama}
    \end{subfigure}%
    \caption{\textbf{Use of \cba dataset significantly improves win-rates on ArenaHard}. Increasing the amount of arena data in a supervised fine-tuning mixture (0\%~$\rightarrow~$30\%~$\rightarrow$~70\%) significantly improves win-rates of the resulting model against both the model variant where no \cba data is used and also \texttt{Llama-3.1-8B}. The win-rates are measured on ArenaHard \citep{li2024crowdsourced}, which has a high correlation of 98.6\% to \cba. }

    \label{fig:overfit-sft-exp}
\end{figure}

\subsection{Risk of Potential Overfitting} 
\label{sec:risk_of_potential_overfitting}

One of the questions we want to answer is \textit{``What are the implications of data asymmetries? Does having access to more data enable overfitting to the Arena?''}.  Overfitting refers to a phenomenon in which a model learns not only the generalizable patterns in its training data but also the specific noise or characteristics, resulting in strong performance on familiar or seen examples but degraded performance on unseen and out-of-distribution inputs. 

Overfitting is a particularly salient challenge in static evaluation settings, where fixed test sets are vulnerable to overfitting through repeated exposure, data contamination, or intentional tuning \citep{deng2023investigating, golchin2023time, roberts2023cutoff, dong2024generalization,singh2024evaluationdatacontaminationllms}. In contrast,  \cba has been widely adopted in part because it allows human users to ask any questions they want, which creates a non-static test set \citep{donyehiya2024futureopenhumanfeedback}. This means that, at least in principle, the test set is potentially harder to overfit to. However, this assumes that the distribution is constantly changing. 

To understand whether this is the case with data from \cba, we do an exhaustive analysis and observe that the true picture on \cba is more complex. We observe two phenomena: (1) The characteristics of prompts do shift notably over time, and (2) a non-trivial portion of prompts in one month are either exact duplicates or near-duplicates of prompts from previous months. 
This means that some test prompts are very similar to (or even the same as) prompts seen previously.
Together, these suggest that having access to a large sample of last month's data will enable a developer to perform significantly better on next month's test set.

\begin{figure}[htb]
    \centering
    \includegraphics[width=1.0\linewidth]
    {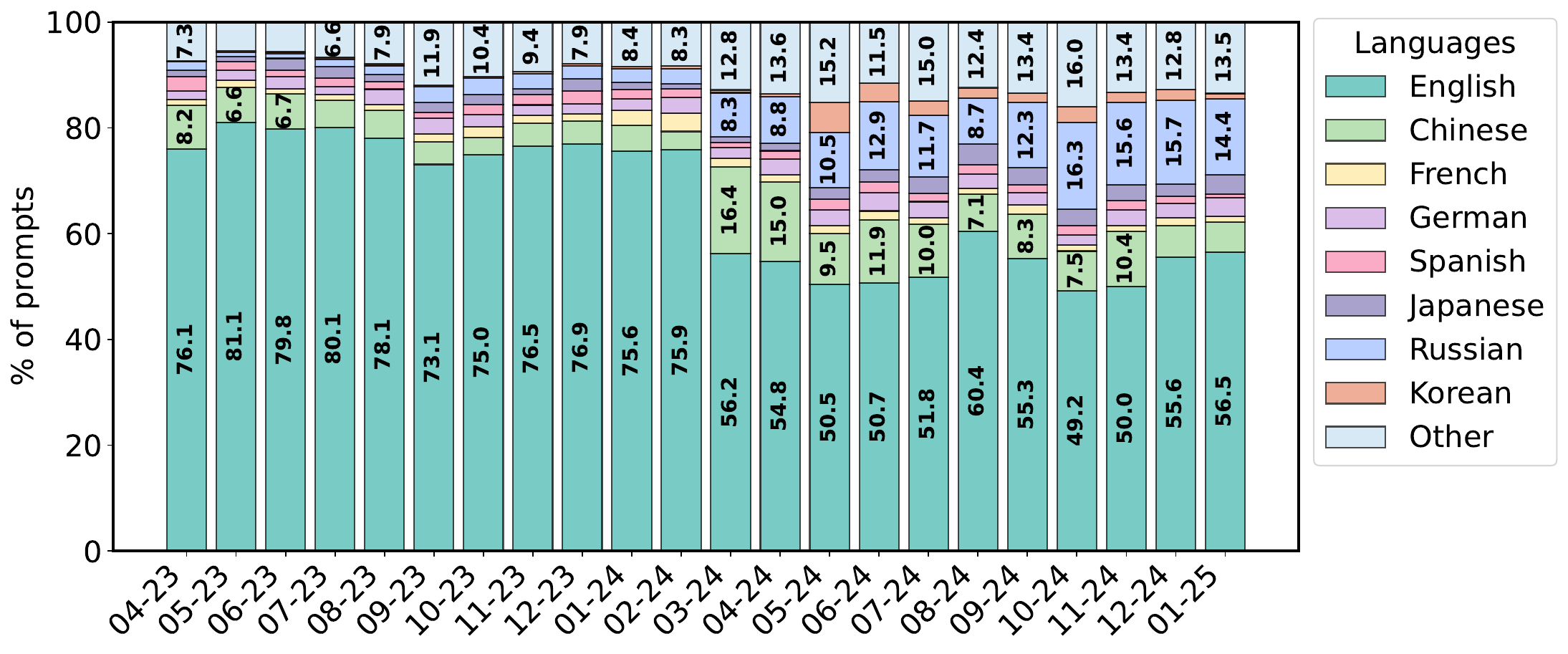}
    \caption{\textbf{Language distribution of prompts submitted to \cba from April 2023 to January 2025.} Based on the \texttt{\lmarenahistorical} dataset, this figure tracks the monthly share of prompt languages. Only languages with dedicated \cba leaderboards are shown individually; the rest are grouped under ``Other''. A clear shift is observed: English prompt share dropped from over 80\% to nearly 50\%, while usage of Chinese, Russian, and Korean prompts increased significantly. 
    }
    \label{fig-language-distribution}
\end{figure}

\textbf{1) Long-term distribution shifts.} 
Prior work clearly demonstrates how temporal distribution shifts affect performance \citep{luu2021time, longpre2024pretrainer}. On \cba,
notable shifts have been observed in the distribution of prompts evaluated over longer periods, with a consistent increase in the proportion of prompts from more complex categories, such as mathematics, coding, and multi-turn conversations\footnote{\url{https://blog.lmarena.ai/blog/2024/arena-category/}}. 
We also perform our own analysis of the change in language distribution in the Arena based on the ``language`` tag available as part of \texttt{\lmarenahistorical} dataset.  For example, in \Cref{fig-language-distribution}, we observe that the proportion of languages outside of English has varied over time. For instance, the share of Russian prompts increased from 1\% in April 2023 to 8.8\% in April 2024, and further to 15.7\% by December 2024. Chinese prompts more than doubled from 5-7\% in 2023 to 16.4\% in March 2024, coinciding with the introduction of the Chinese leaderboard on \cba, before dropping back to 6.2\% in January 2025. Overall, the number of multilingual prompts on the Arena has grown by 20\% over 1.5 years, from 23.9\% in April 2023 to 43.5\% in January 2025. This indicates increased language diversity in submitted prompts.

\begin{figure}[htb!]
    \centering
    \includegraphics[width=0.8\textwidth]{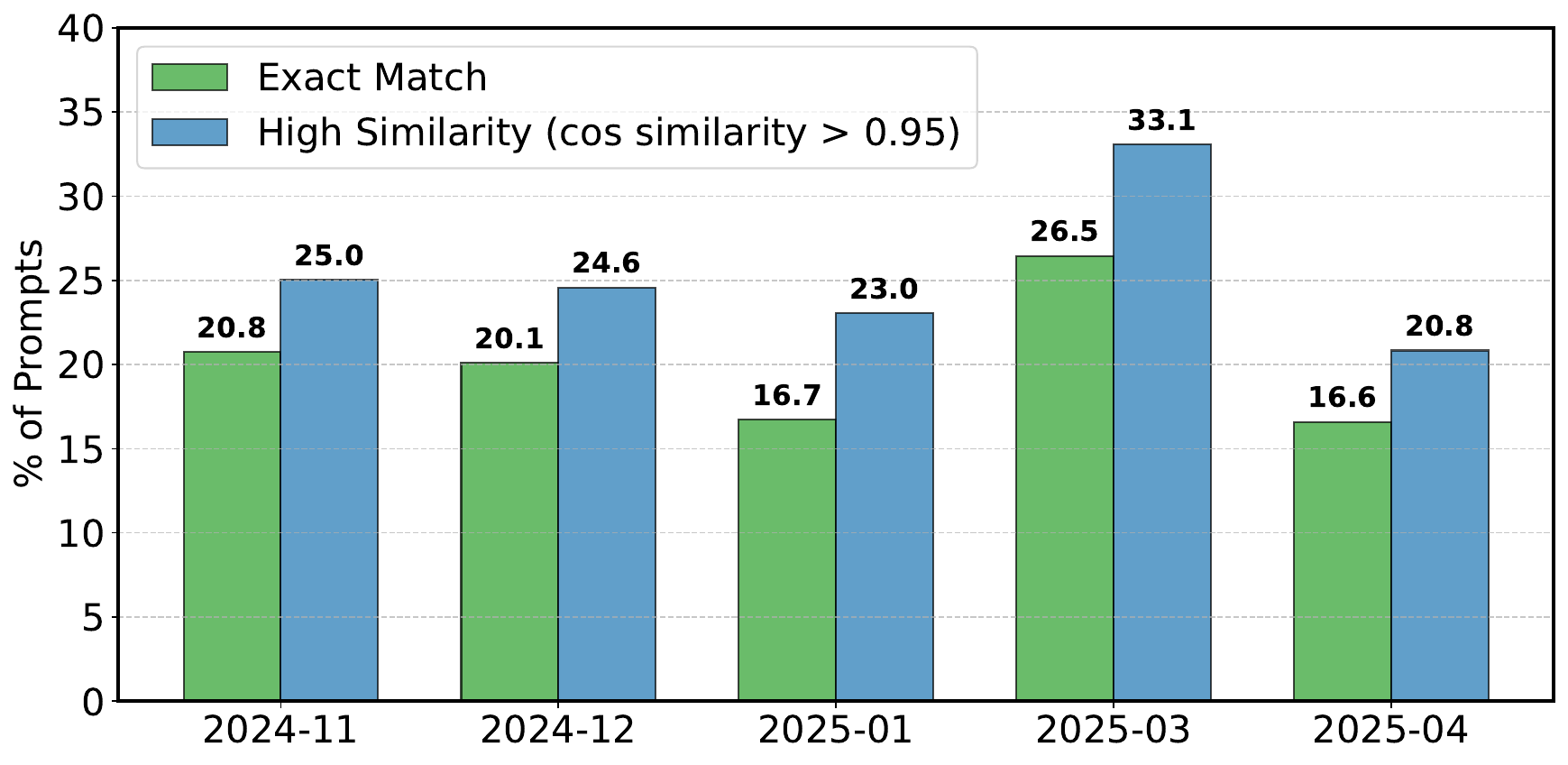}
    \caption{\textbf{Monthly prompt duplication rates.} Prompts are from November 2024 to April 2025, excluding February 2025 due to insufficient data. Duplication is measured using two similarity metrics: \textit{Exact Match} and \textit{High Similarity} (cosine similarity of text embedding $> 0.95$). For simplicity, this analysis is limited to single-turn conversations. The chart presents the percentage of battles in which duplicate or near-duplicate prompts were detected each month.}
    \label{fig:duplicate_prompt}
\end{figure}

\textbf{2) Prompt redundancy and duplication.}  In parallel, we observe high levels of prompt duplication. We analyze a proportion of raw API calls we receive from \cba between November 2024 and April 2025 (197,217 single-turn conversations). We switch to this source given that the proprietary data \cba releases are already de-duplicated, and so won't capture the extent of similar or overlapping queries. Between November 2024 and April 2025, de-duplication resulted in an average prompt loss of 20.14\%, peaking at 26.5\% in March 2025 (\Cref{fig:duplicate_prompt}). While prompt distribution changes over time, prompts in one month often serve as a proxy for the next. For instance, 7.3\% of prompts from December 2024 appear again in the exact form in January 2025. 
If we relax the condition and consider high semantic similarity of prompt embeddings (using the \texttt{embed-multilingual-v3.0} model\footnote{\url{https://huggingface.co/Cohere/Cohere-embed-multilingual-v3.0}}), the same cross-month duplication rate increases to 9\%. Detailed cross-month duplication statistics can be found in \Cref{app:prompt-duplication-headmap}. Both trends above suggest that \textbf{1)} sustained access to up-to-date prompt data and \textbf{2)} the volume of sampled prompts in a given month offer a significant competitive advantage in predicting performance in subsequent months.

\textbf{Uniqueness of Arena Data.} One reason providers may be motivated to explicitly optimize for \cba distribution is if it differs substantially from other evaluation settings that providers may care about. There is sufficient signal to suggest this is the case. There is a context length limit of 12000 characters on \cba prompts, which excludes certain types of longer or more complex inputs from being evaluated\footnote{\url{https://github.com/lm-sys/FastChat/blob/main/fastchat/constants.py}}, and can result in a selection bias of what is asked. 
The user base of the Arena leans towards developers, which could result in the over-indexing of puzzles, math problems, and questions such as \textit{How many r's are there in strawberry?}.\footnote{\url{https://techcrunch.com/2024/09/05/the-ai-industry-is-obsessed-with-chatbot-arena-but-it-might-not-be-the-best-benchmark/}} 
For example, in a released dataset from Arena \citep{zheng2024judging} with 33k samples, no questions are referencing \textit{Chaucer} while dozens of questions are about \textit{Star Trek}, highlighting the uneven distribution of topics in this test set\footnote{\url{https://www.quantable.com/analytics/elos-and-benchmarking-llms/}}.
For a global technology provider, real-world commercial applications may differ significantly from this distribution.

\textbf{Experimental Setup:} 
To estimate the potential for overfitting to \cba using data from a similar distribution, we fine-tuned a language model with identical training setups that only differ in the composition of arena data used in the mixture. We construct our training dataset by sampling at different rations from two different data pools: 1) \texttt{arena-mix}, which consists of samples from Arena battles, and 2) \texttt{other-sft-mix}, a proprietary dataset which includes a variety of supervised fine-tuning datasets focusing on instruction following, multilingual tasks, math, and code. 
We fine-tuned a 7B base model that is used for the Cohere Command family \citep{cohere2025commandaenterprisereadylarge} in these experiments. We refer to the three different training mixes as \texttt{0\_arena}, \texttt{30\_arena}, \texttt{70\_arena} which have 0, 30\%, 70\% of the training dataset sampled from the \texttt{arena-mix} respectively. For all variants, the remainder percentage is sampled from \texttt{other-sft-mix}. All three models are fine-tuned for 1.3K steps using a batch size of 128. We note that our goal here is not to produce a state-of-the-art model but rather to estimate a lower bound for the performance gains that could be expected from asymmetries in access to Arena data. Hence, we do not optimize with ablations the correct weighting or data or conduct any hyperparameter sweeps. 

\textbf{Evaluation method:} Our goal is to measure the lift provided from the training on Arena data on the Arena held-out set. To do so, we measure relative improvements on the 500 English LMArena ArenaHard prompts, an in-distribution test set published by \cba that demonstrates exceptionally high correlation (98.6\%) with human preference rankings from \cba battles \citep{li2024crowdsourced}.
This dataset consists of challenging user-submitted prompts from Arena that have been carefully curated and evaluated against several criteria, including, but not limited to, domain knowledge, complexity, and problem-solving. This dataset is widely used to gauge expected performance on the Arena.
To measure improvements, we simulate human preferences using LM-as-a-judge, which allows us to measure a reasonable estimate of the gains in a controlled setting. Various works have shown that this is correlated with human preferences \citep{dubois2023alpacafarm, rafailov2023direct, kim2023prometheus}. We compare against \texttt{Llama-3.1-8B-Instruct} \citep{grattafiori2024llama} and measure win-rates using \texttt{gpt-4o-2024-11-20} as our judge model\footnote{\url{https://platform.openai.com/docs/models/gpt-4o}}.

\textbf{Results:} From \Cref{fig:overfit-sft-exp}, we observe that as the amount of \texttt{arena-mix} data increases, the models improve in their evaluation on the ArenaHard prompts. Variant \texttt{0\_arena} scores a win-rate of 23.5\%, \texttt{30\_arena} scores a win-rate of 42.7\%, and variant \texttt{70\_arena}, which is trained on the most \texttt{arena-mix}, data scores a win-rate of 49.9\% against \texttt{Llama-3.1-8B-Instruct}. 
A win-rate of 50\% indicates that the compared models (Llama 3.1 8B Instruct and \texttt{70\_arena}) are on par on the ArenaHard test set. 
The relative gains in win-rates are 81.7\% for the \texttt{30\_arena} and 112.3\% for the \texttt{70\_arena} variant.
The gains we observe are striking in part because we do not heavily optimize these variants (we trained 3 variants in total and did no hyperparameter optimization or tuning of training steps). 
To assess whether performance gains generalize beyond the Arena benchmark, we evaluated the fine-tuned models on the out-of-distribution benchmark MMLU \citep{hendrycks2020measuring}. The results reveal a clear divergence in performance trends (Table~\ref{tab:overfitting_mmlu}): while increasing the proportion of \cba data within a fixed training budget yielded consistent improvements on the Arena test set, MMLU performance slightly declined from 66.5\% (\texttt{0\_arena}) to 64.4\% (\texttt{30\_arena}) and 65.9\% (\texttt{70\_arena}). 
This suggests that gains from \cba data are highly specific and do not translate to broader generalization, raising important questions about whether leaderboard improvements reflect meaningful progress or simply overfitting to a narrow evaluation distribution.

We also note that often it is not necessary to train explicitly on the data to gain from data access. For example, providers may use the composition of the data to make decisions about weighting different data sources \citep{ustun-etal-2024-aya}, or may use a small subset of the data to create high-quality synthetic data that is close to the original distribution \citep{odumakinde2024multilingual, shimabucoro2024llm, dang2024rlhf, aakanksha2024multilingual}.
Given the stakes behind ranking highly on \cba, it is likely that the data at a provider's disposal from \cba is being actively leveraged by different providers to gain an advantage. 

\section{Results: Impact of Model Deprecation on Arena Scores}
\label{sec:battle-connecitivty}

\begin{tcolorbox}[title=Section Findings,colback=thomblue1!7]
\begin{itemize}[left=0pt]
    \item  \textbf{Model deprecations under a changing task distribution lead to unreliable rankings.} We show that shifts in the task distribution highlight the non-stationary nature of the evaluation environment and have important implications for the stability and fairness of model rankings over time. In evolving task distributions, premature model removal introduces inconsistencies, breaking the BT model's transitivity assumption and distorting rankings.
    \item \textbf{Model deprecation may result in disconnected subgraphs, which violates BT reliability.}
    The BT model assumes a sufficiently connected comparison graph to produce globally consistent rankings. When models are deprecated unevenly or when sampling strategies fail to ensure robust overlap in comparisons, the resulting history matrix can become fragmented. Through simulated experiments, we show that this can produce fragmented clusters, and as a result, the global rankings become unreliable. 
\end{itemize}
\end{tcolorbox}

According to the backend codebase of \cba\footnote{\url{http://github.com/lm-sys/FastChat/blob/0e6d3e4beaab66f4d3f93db72541a4abab8af28d/fastchat/serve/monitor/monitor_md.py\#L7} \label{deprecation_link}}, 47 models are publicly listed as deprecated.
In addition, as discussed in \Cref{sec-disparity-in-data-access}, 205 models on \cba have been \textit{silently deprecated} by reducing their active sampling rates to near zero (see \Cref{fig-silent-deprecated}). We observe that model deprecation disproportionately affects different types of models. Specifically, 87.8\% of open weights and 89\% of open-source models have been deprecated, in contrast to a lower deprecation rate of 80\% for proprietary models.

\begin{figure}[ht!]
    \centering
    \includegraphics[width=0.6\linewidth]{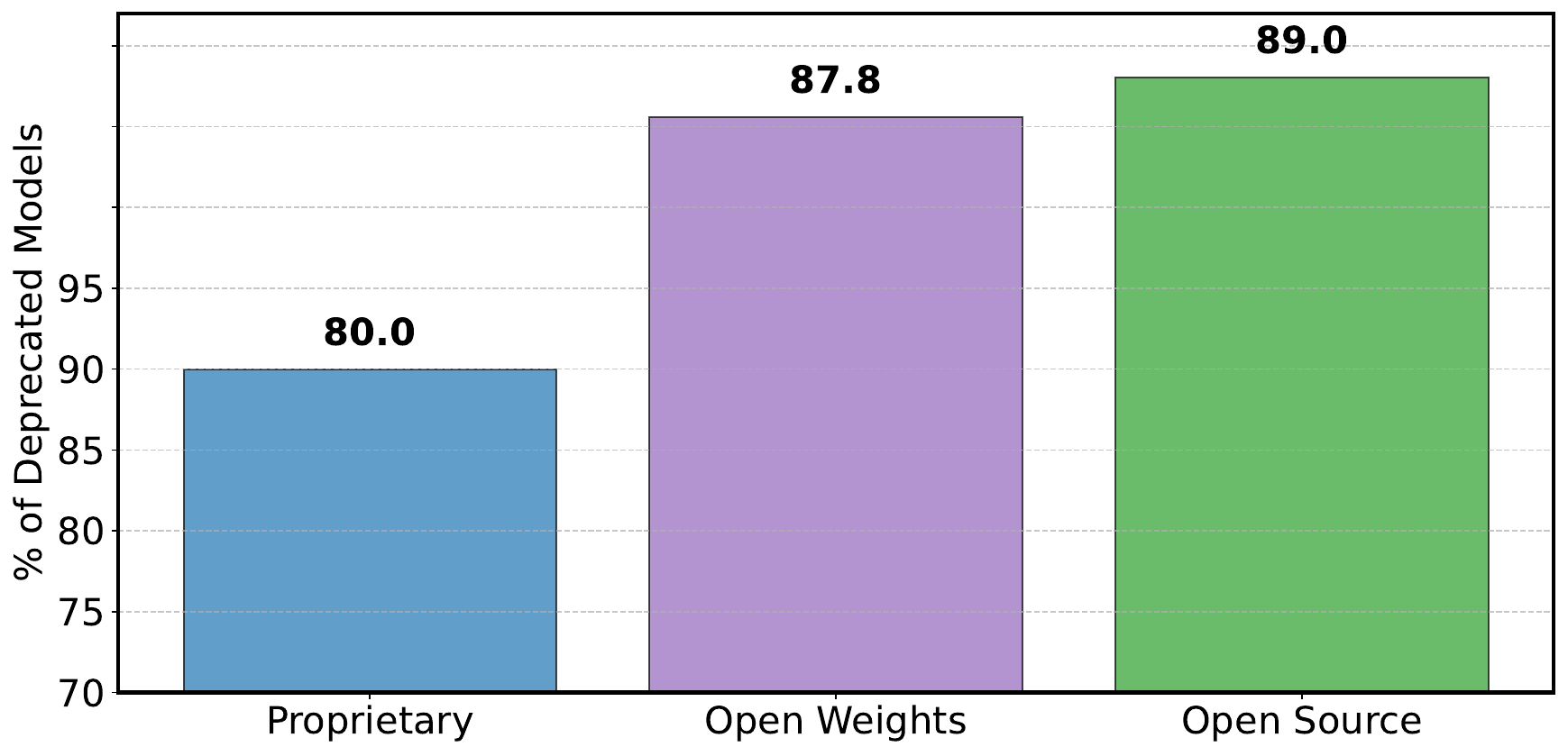}
    \caption{\textbf{Share of proprietary and open models that either officially deprecated or inactive on the arena based on \texttt{\lmarenaleaderboard} during the period March 3rd-April 23rd, 2025}. Overall, open-weight and fully open-source models are more likely to become deprecated or inactive compared to proprietary models.}
    \label{fig-silent-deprecated-license-cat}
\end{figure}

While deprecating older models is necessary to maintain a dynamic leaderboard, especially as new models are introduced regularly, we question how excessive deprecation may compromise the reliability and stability of model rankings.
When models are deprecated, future models that enter the Arena will not have direct comparisons with them. 
However, in principle, the BT model should still handle this reliably because of the transitivity property \citep{boubdir2024elo}. Intuitively, transitivity means that if model A is better than model B, and model B is better than model C, then model A should also be better than model C. Transitivity allows the BT model to infer missing outcomes---if two models share common opponents, their relative ranking can be deduced even without a direct comparison \citep{bradley1952rank}.
Formally, in the BT model each competitor \(i\) is associated with a positive parameter \(\pi_i > 0\), and the probability that model \(i\) beats model \(j\) is given by: 
\[
P(i > j) = \frac{\pi_i}{\pi_i + \pi_j}.
\]
Suppose \(\pi_A > \pi_B\) and \(\pi_B > \pi_C\). Then
\[
P(A > B) = \frac{\pi_A}{\pi_A + \pi_B} > 0.5 \quad \text{and} \quad P(B > C) = \frac{\pi_B}{\pi_B + \pi_C} > 0.5.
\]
Moreover, because \(\pi_A > \pi_B > \pi_C\), we have:
\[
\pi_A + \pi_C < \pi_A + \pi_B \quad \Longrightarrow \quad P(A > C) = \frac{\pi_A}{\pi_A + \pi_C} > \frac{\pi_A}{\pi_A + \pi_B}.
\]

Transitivity is a critical property that allows for inferring rankings using fewer data points. However, transitivity requires several underlying properties to hold. In particular, two of these properties are salient for our study of the impact of deprecation on reliability:
\begin{itemize}
    \item \textbf{Assumption 1: Evaluation conditions remain constant.} Transitivity relies on the assumption that evaluation conditions remain constant---that is, paired comparisons must come from a consistent set of tasks or contexts. For example, consider a setting where Tom wins against Susan in chess, and Susan wins against Ronald in both chess and tennis. We can infer Tom's ranking relative to Ronald in chess, but not in tennis. In practice, \cba's prompt categories and task types evolve over time, and some models become deprecated, meaning they are no longer re-evaluated under the current conditions. We show in \Cref{sec:dist-shift} that when a model's battle history is limited to outdated conditions, its comparisons no longer accurately reflect performance in the new context.
    \item \textbf{Assumption 2: Network of comparisons must be fully interconnected.} Every model must be linked directly or indirectly through pairwise matchups. So in a chess example, we can estimate Tom's performance against Ronald because they have each played at least one competitor in common (in this case, Susan). If Ronald has had no matches with anyone who has played Tom, we can no longer infer reliably what their rankings are relative to each other. Deviations from transitivity or a fragmented comparison graph can compromise the reliability and consistency of the strength estimates. In \Cref{sparse_battle_history}, we show that deprecations may result in these fragmented graphs.
\end{itemize}

In \Cref{sec:dist-shift} and \Cref{sparse_battle_history} that follow, we will interrogate whether \cba fulfills each of these assumptions.

\subsection{Transitivity Under Changing Evaluation Conditions}\label{sec:dist-shift}

As we have shown in \Cref{sec-disparity-in-data-access}, the distribution of \cba is unique since long-term shifts occur in categories and use cases. This distributional shift contrasts with the static environments typically assumed in Elo and Bradley-Terry systems, such as chess, where the rules and game format remain fixed, ensuring a consistent set of evaluation conditions. If all models were continuously sampled across all points in time, the BT model would likely remain robust because every model would be evaluated on the evolving distribution of tasks. However, as shown in \Cref{fig-silent-deprecated-license-cat}, many models are deprecated over time, and their scores stop getting updated. 

\textbf{Experimental Setup:} To investigate how model deprecations under a changing task distribution can impact model rankings, we simulate BT rankings of models under evolving evaluation conditions.
We initialize four models---A, B, C, and D---each with distinct performance profiles across two task types, \textit{Task-1} and \textit{Task-2}. These tasks represent different prompt categories, and each model's relative strength is defined through task-specific win probabilities. For example, model B has a 90\% chance of defeating model D on Task-1 but only a 20\% chance on Task-2, with some pairs also allowing for ties. The task-specific win probabilities for different models are provided in \Cref{tab:simulation_winrates}. These probabilities reflect the models' varying strengths across tasks, mirroring the real-world observation that models excel at different types of prompts.

The simulation is structured into two sequential phases to mimic the evolving task distribution observed on \cba. During the first phase, battles are predominantly drawn from \textit{Task-1}. 
Each of the four models participates in 1000 battles, and the resulting outcomes are used to compute initial rankings.
In the second phase, the battle distribution gradually shifts toward \textit{Task-2}. 
Since model win-rates are task-dependent, battle outcomes change accordingly. We simulate 1000 additional battles in this phase and examine two scenarios to investigate how shifts in prompt distribution and model deprecations jointly influence final rankings.
We compute the BT Scores for all models under both scenarios using the implementation provided by \cba in their official FastChat codebase\footnote{\url{https://github.com/lm-sys/FastChat/blob/main/fastchat/serve/monitor/rating_systems.py} \label{rating_systems}}. These scores are then used to determine the final model ranks.

 \definecolor{rankone}{HTML}{84ADD3}   
\definecolor{ranktwo}{HTML}{FDBE87}   
\definecolor{rankthree}{HTML}{85C88A} 
\definecolor{rankfour}{HTML}{E88A8A}  
\definecolor{rankfive}{HTML}{C3A4DA}  
\definecolor{ranksix}{HTML}{BFA6A1}   
\definecolor{rankseven}{HTML}{F1ADD7} 

\newcommand{\rankbox}[2]{%
    \tikz[baseline=(X.base)] 
    \node[draw=black, 
          fill=#1, 
          rounded corners=3pt, 
          inner sep=1pt, 
          minimum width=12pt, 
          minimum height=12pt, 
          text width=12pt, 
          align=center] (X) {\textbf{#2}};%
}

\newcommand{\rankboxwrong}[2]{%
    \tikz[baseline=(X.base)] 
    \node[draw=black, 
          double=red, double distance=0.8pt,
          fill=#1, 
          rounded corners=3pt, 
          inner sep=1pt, 
          minimum width=12pt, 
          minimum height=12pt, 
          text width=12pt, 
          align=center] (X) {\textbf{#2}};%
}

\newcommand{\rankone}{\rankbox{rankone}{1}}  
\newcommand{\rankonewrong}{\rankboxwrong{rankone}{1}}
\newcommand{\ranktwo}{\rankbox{ranktwo}{2}}  
\newcommand{\ranktwowrong}{\rankboxwrong{ranktwo}{2}}
\newcommand{\rankthree}{\rankbox{rankthree}{3}} 
\newcommand{\rankthreewrong}{\rankboxwrong{rankthree}{3}} 
\newcommand{\rankfour}{\rankbox{rankfour}{4}}  
\newcommand{\rankfourwrong}{\rankboxwrong{rankfour}{4}} 
\newcommand{\rankfive}{\rankbox{rankfive}{5}}  
\newcommand{\rankfivewrong}{\rankboxwrong{rankfive}{5}}  
\newcommand{\ranksix}{\rankbox{ranksix}{6}}  
\newcommand{\ranksixwrong}{\rankboxwrong{ranksix}{6}}  
\newcommand{\rankseven}{\rankbox{rankseven}{7}}   
\newcommand{\ranksevenwrong}{\rankboxwrong{rankseven}{7}}   

\begin{figure}[htb!]
    \centering
    \begin{minipage}[c]{0.50\textwidth}
        \centering
        \vspace{0pt}
        \includegraphics[width=1.0\linewidth]{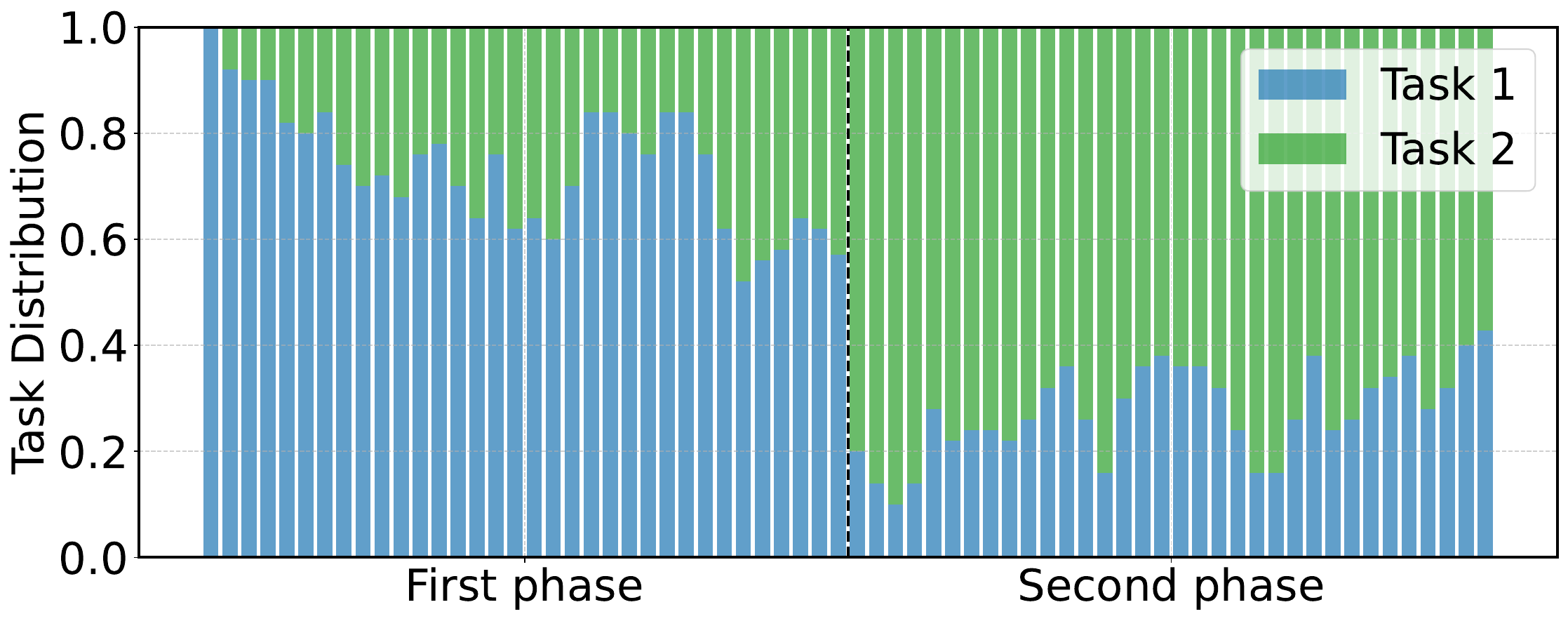}
    \end{minipage}
    \hspace{5mm}
    \begin{minipage}[c]{0.32\textwidth}
        \centering
        \scriptsize  
        \setlength{\tabcolsep}{3pt} 
        \begin{tabular}{ccc}
         \midrule
        Model & \makecell{w/o dep} & \makecell{w/ dep} \\ 
         \midrule
        A  & \rankone & \ranktwowrong \\
        B  & \ranktwo & \rankonewrong \\
        D  & \rankthree & \rankfourwrong \\
        C  & \rankfour & \rankthreewrong \\
        \end{tabular}
        \label{tab:task-dist-ranks}
    \end{minipage}
    \caption{\textbf{Impact of evolving task distributions and model deprecation on model rankings.} 
    \textbf{Left:} Two‐phase task distribution used in the simulation. Phase 1 is \textit{Task-1 heavy}, with most battles based on Task-1; Phase 2 is \textit{Task-2 heavy}, with battles predominantly based on Task-2.
    \textbf{Right:} Model rankings under changing task distributions and deprecation settings. 
    Scenario I only differs from Scenario II in that Model D is deprecated halfway through the battle history (after phase 1).
    This deprecation causes Scenario II to produce a completely different ranking over models as compared to Scenario I.}
    \label{fig-task-distribution}
\end{figure}

\begin{itemize}
    \item \textbf{Scenario I: without deprecation.} We simulate all 2000 battles across both phases, with all four models participating throughout. This represents an ideal scenario where no model is deprecated, and all are evaluated across the evolving task distribution. 
    \item \textbf{Scenario II: with deprecation.} We simulate all 2000 battles across both phases. However, at the end of phase 1, model D is deprecated and does not participate in the second phase.
\end{itemize}

\textbf{Deprecation given changing distribution results in unreliable Arena rankings:} 
As illustrated in \Cref{fig-task-distribution}, our simulation shows that rankings produced by the BT model are highly sensitive to model deprecation, particularly when the prompt distribution changes over time. In the scenario without deprecation, we observe the true rankings given that the BT model remains reliable because it reflects performance across the full history of interactions. However, we observe the rankings differ if we deprecate Model D between stages. While its matchups from Phase 1 still influence BT scores, the absence of updated comparisons causes a skew in the rankings of remaining models. 
Models A and D are ranked lower, and Models B and C are ranked higher than their true performance merits.
Comparing Scenarios I and II in \Cref{fig-task-distribution} reveals a clear divergence, with the scenario involving deprecation yielding unreliable rankings that no longer reflect true relative performance under the evolving task distribution.

This violates core assumptions of the BT model, namely, that rankings reflect transitive and consistently sampled matchups, ultimately compromising the validity of inferred rankings. When models are no longer sampled under current task distributions, historical pairwise comparisons cease to represent present-day performance. This issue is particularly problematic in real-world settings where user prompt distributions shift over time. For instance, a model tuned for multilingual prompts may improve ranking as non-English tasks become more common. However, if deprecated, its BT ranking will likely understate its true performance. Similarly, code-specialized models may benefit from the increasing volume of coding tasks, but only if they continue to participate in evaluation.

\subsection{Sparse Battle History Risks}\label{sparse_battle_history}

In this section, we show that the deprecation policy can lead to a sparse matrix and disconnected comparison graphs, which in turn distort the resulting rankings. As demonstrated by \cite{ford1957solution}, the maximum likelihood estimate does not exist if models can be partitioned into two non-empty subsets without comparisons between them or if all comparisons between the two groups are one-sided (i.e., one group always wins). Therefore, to ensure a unique and finite estimation, the comparison graph must be connected. For any possible partition of models, there must be at least one win going in each direction across the partition. This ensures that no subset of models is entirely isolated in the win/loss structure. The \cba comparison matrix can potentially become disconnected because of the extremely high levels of model removals over time (as discussed in \Cref{fig-silent-deprecated-license-cat}).

\begin{figure}[htb!]
    \centering
    \begin{subfigure}[c]{0.18\textwidth}
        \centering
        \scriptsize  
        \setlength{\tabcolsep}{3pt}  
        \begin{tabular}{ccc}
         \midrule
         Model & Dense & Disconnected  \\
         \midrule
        A  & \rankone & \rankone \\
        B & \ranktwo & \ranktwo \\
        C  & \rankthree & \rankthree \\
        D  & \rankfour & \rankfivewrong \\
        F  & \rankfive & \rankfourwrong \\
        E & \ranksix & \ranksevenwrong \\
        G  & \rankseven & \ranksixwrong \\
        \end{tabular}
        \label{tab:example}
    \end{subfigure}
    \hfill
    \begin{subfigure}[c]{0.8\textwidth}
        \centering
        \vspace{0pt}
        \includegraphics[width=0.8\linewidth]{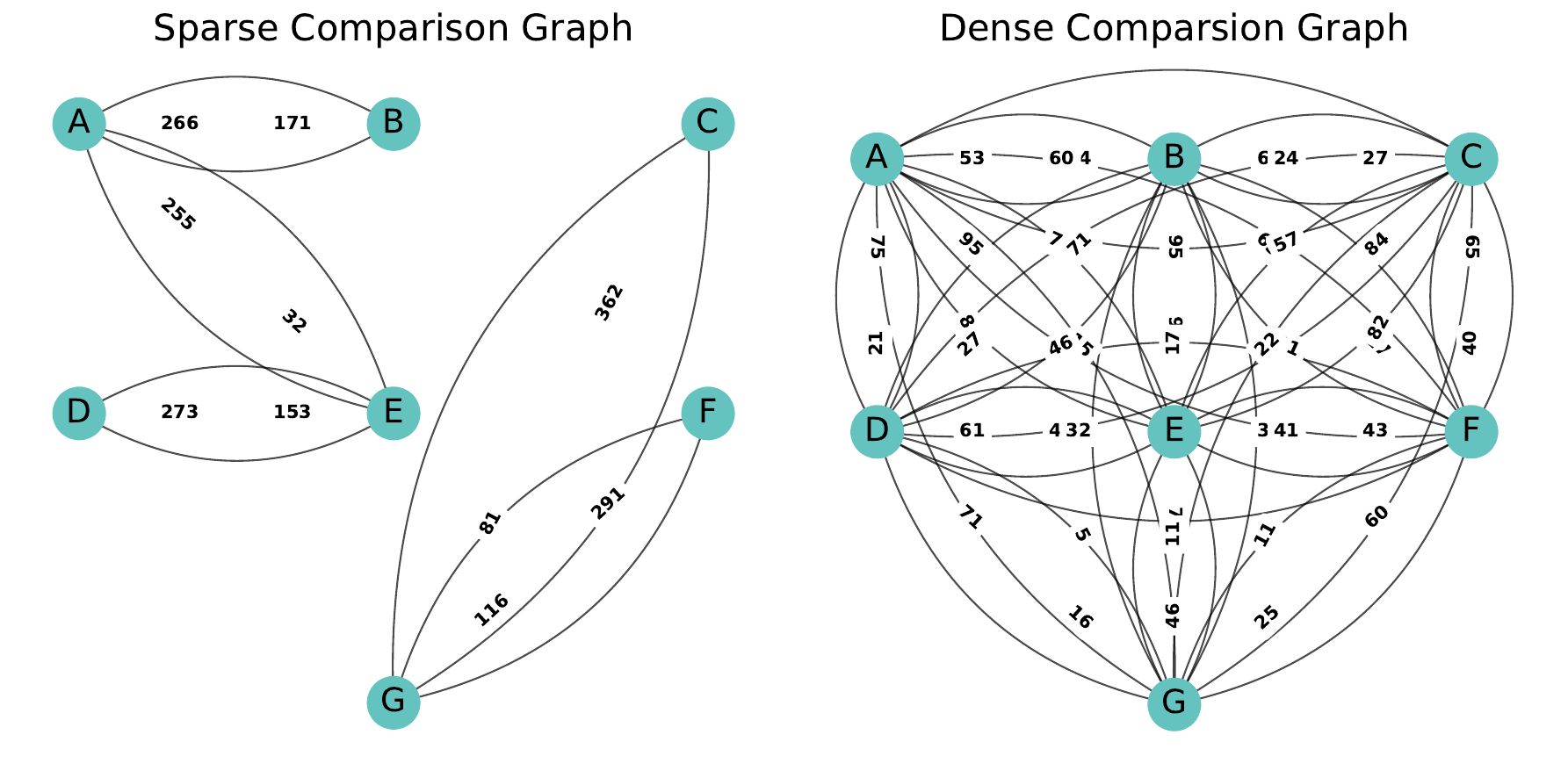}
        \label{fig-sparse-comparsion}
    \end{subfigure}
    \caption{\textbf{Impact of comparison graph sparsity on model rankings.} \textbf{Left:} Rankings for models D, E, F, and G diverge from the gold rankings when the comparison graph is sparse, whereas model rankings fully align with the gold rankings when the comparison graph is dense. \textbf{Right:} Visualization of the comparison graphs in sparse and dense settings. An edge between two models indicates a head-to-head matchup, annotated with the number of wins for each model. For example, in the sparse graph, Model A and Model B played 437 matches, with Model A winning 266 and Model B winning 171.}
    \label{fig:graph-connectivity}
\end{figure}

\textbf{Experiment Setup:} To investigate the impact of sparse comparison graphs on the rankings obtained via the Bradley-Terry model used by \cba we simulate the following scenarios:

\begin{itemize}
    \item \textbf{Scenario I: Dense comparison graph.} All models are allowed to compete against one another—albeit with varying numbers of head-to-head battles—resulting in a well-connected comparison graph in which every node (model) is linked to others via edges representing battle outcomes.
    \item \textbf{Scenario II: Disconnected comparison graph.} We create a disconnected battle history by imposing constraints on which pairs of models are allowed to engage in battles. This allows us to create a sparse battle history where each model ends up playing against a subset of models. 
\end{itemize}

The full comparison graph based on battle histories for both scenarios is shown in \Cref{fig:graph-connectivity}. In both scenarios, a total of 2000 battles are played under the corresponding setting. For a paired match between models A and B, each with respective true skill ratings $r_A$ and $r_B$, the expected scores $E_A$ and $E_B$ can be computed as:

\begin{equation}
E_A = \frac{1}{1 + e^{\alpha (r_B - r_A)}}, \quad E_B = \frac{1}{1 + e^{\alpha (r_A - r_B)}}
\end{equation}

The expected scores $E_A$ and $E_B$ are used to predict the winner of the battle. For simplicity, we exclude the possibility of ties in this experiment. We assign the following true skill ratings to the models: 1450 (Model A), 1390 (Model B), 1250 (Model C), 1200 (Model D), 1101 (Model E), 1150 (Model F), and 1000 (Model G). These ratings are used to calculate the expected scores and match outcomes. Finally, we compute BT Scores for all models under both scenarios using the official implementation, followed by \cba\footref{rating_systems}. This is then used to determine the ranks for each model corresponding to both scenarios.

\textbf{Sparse or disconnected graphs lead to unreliable rankings}: \Cref{fig:graph-connectivity} illustrates the model rankings with sparse and dense battle history graphs. 
We observe that the rankings derived from the dense comparison graph align closely with the models' rankings according to their true skill ratings. 
In contrast, a sparse or disconnected comparison graph results in an inaccurate estimation of the models' skills.
These results highlight that reliable rankings under the BT model require a connected comparison graph. While some level of model removal is inevitable (for example, models are no longer hosted on an API), preserving connectivity means ensuring that comparisons remain sufficiently distributed across active models and that transitions in and out of the pool do not isolate subsets of models from the broader comparison structure.

\section{Recommendations and Guidelines for Improving Leaderboards}
\label{sec:recommendations}

\begin{quote}
    \textit{When my information changes, I alter my conclusions. What do you do, sir?} \textbf{--- John Maynard Keynes}
\end{quote}

We include below recommendations that we believe are urgent and critical. However, we also believe these recommendations are very achievable and will restore scientific credibility and trust within the wider research ecosystem:

\textbf{Prohibit score retraction after submission.} Currently, providers who have engaged in private testing are allowed to retract submissions and only submit the best variant to the public leaderboard. As we have shown in \Cref{sec-disparity-in-data-access}, this can lead to overfitting and obscures meaningful progress as it makes it difficult to distinguish between models that have legitimately improved versus those that have exploited statistical shortcuts \citep{ying_2019}. 
We urge \cba to prohibit retraction after submission, ensuring all tested variants' scores are permanently visible on the leaderboard. Providers should also disclose the total number of private variants tested prior to public launch, including historical submissions, to contextualize their results.

\textbf{Establish transparent limits on the number of private variants per provider.} 
As illustrated in \Cref{fig-private-testing-providers}, private testing volume varies widely across providers, creating unfair advantages. To curb overfitting and level the playing field, \cba should enforce a strict cap of private variants per provider for any given model launch. This should be enforced at a provider level, and not per model type and size as that is impossible to audit with API hosting. This strict limit should be disclosed to all providers (proprietary, open-weights, open-source) and to the wider \cba community. This restriction would discourage excessive undisclosed testing while still allowing limited iteration.

\textbf{Establish clear and auditable model deprecation criteria.} 
The current criteria, which states \emph{``Models may be retired after 3000 votes if there are two more recent models in the same series and/or if there are more than 3 providers that offer models cheaper or same price and strictly better (according to overall Arena score) than this model''} is ambiguous and make it impossible to audit the logic in practice.
Key terms like ``same series'' and ``more recent'' lack formal definitions, making it unclear how to determine whether a model is eligible for retirement based on lineage. 
Additionally, the requirement that ``more than 3 providers'' must offer strictly better and cheaper models introduces confusion around what threshold of improvement in Arena Score is considered meaningful. The use of ``and/or'' further complicates interpretation, as it's unclear whether meeting one condition is sufficient or if both must be satisfied. Lastly, using price as a filtering criterion is problematic since it's subject to change, varies across hosting platforms, and is not inherently tied to a model's performance or utility.  We note that many of these models are hosted for free on the Arena, and so there should be clarity about what source of pricing is being used. This lack of precision makes it challenging to apply the rule consistently or verify retirement decisions.

We recommend a stratified approach that retires models proportionally across proprietary, open-weight, and open-source categories to preserve balance and fairness in the evaluation based on two criteria: availability and performance. 
We propose retiring the bottom 30th percentile within each category of open-source, open-weight, and proprietary after rankings converge. This stratified pruning prevents provider-type bias, keeps strong models from underrepresented groups visible, and maintains comparison graph connectivity. It also reduces ranking inconsistencies seen with uneven retirement, as discussed in \Cref{sec:battle-connecitivty}.

\textbf{Improve sampling fairness.}  As shown in \Cref{fig-private-testing-max}, the sampling rates vary greatly by providers, and also disproportionately undersample open-weight and open-source models, creating large asymmetries in data access over time and resulting in unstable Arena scores (\Cref{sec:battle-connecitivty}). This is particularly important given that this is a community-driven voting benchmark, where at present free human feedback is primarily benefiting proprietary models. This avoids disparities over time and a status quo where proprietary models are benefiting more from valuable and freely given human data \citep{10.1145/3411764.3445518}.
In their own work \citep{chiang2024chatbot} (Section 5, Equation 9), the \cba authors introduce an active sampling rule designed to enhance the efficiency and statistical robustness of the leaderboard's evaluation process. This rule selects model pairs based on the expected reduction in the confidence interval of the win-rate estimate, and is formally defined as:
\[
P_t(a) \propto \sqrt{\frac{\Sigma_{t,a,a}}{| \{t : A_t = a \} |}} - \sqrt{\frac{\Sigma_{t,a,a}}{| \{t : A_t = a \} | + 1}}
\]
where \( P_t(a) \) is the sampling probability of model pair \( a \) at time \( t \), and \( \Sigma_{t,a,a} \) is the estimated variance for the win-rate of pair \( a \) at time \( t \).

This formulation effectively prioritizes under-evaluated and high-variance pairs, aligning sampling with the goal of rapidly reducing uncertainty in rankings. While this sampling rule is clearly articulated in the paper, we have not seen evidence of its deployment in the current leaderboard.
We recommend adopting this sampling strategy in practice and providing periodic reporting on its usage. Doing so would align the platform's operations with its methodological innovations, support more balanced and transparent evaluations, and improve confidence in leaderboard dynamics over time.

\textbf{Provide public transparency into all tested models, deprecations, and sampling rates.}
Most of these findings were only possible through access to private model testing, or crawling \cba over a period of time.
Providing transparency into the full suite of models that were tested, deprecated, and how often they were sampled against which other models, would enable the oversight and trust in the benchmark that \cba affirms in their policies.
This transparency could be provided on a rolling basis (e.g., every quarter, for the prior quarter of model battles). It would enable the community to help in the process of continuing to improve this community benchmark.
For instance, the backend codebase of \cba -- FastChat -- publicly lists deprecated models on GitHub\footref{deprecation_link}, where 47 models are explicitly marked as deprecated. However, four times that number have been silently deprecated without warning. We recommend \cba expand the definition of ``deprecated'' to include models that are no longer being regularly sampled from, and list these deprecated models on their website to make it transparent for everyone which models are no longer active on \cba. 

\section{Limitations}

\textbf{We do not have insight into \cba's raw data:} A subset of the data sources utilized for this study have undergone pre-processing by \cba, which as stated by \cba often involves de-duplication, removal of battles corresponding to suspicious voting patterns, etc \citep{chiang2024chatbot}. Through this work, we also establish that private battles are removed from the datasets released by \cba. Although we also have proprietary API data for the models we test on the Arena, it only reveals a subset of battles. Without access to original and comprehensive raw data, it is hard to investigate patterns related to adversarial voting where users intentionally submit votes to manipulate rankings or undermine the system. Various previous works have shown that adversarial voting is a critical concern for the reliability of any crowd-sourced evaluation platform like \cba \citep{huang2025exploring} \citep{min2025improving}. We do not explore this in this work, but see more investigation here as an important topic for future work.

\textbf{Our scraped data snapshot only covers a limited period:} Our \texttt{\lmarenascraped} was the only way to identify private variants being tested by various providers. However, it covers a limited time period from January--March, 2025. This time frame coincided with Meta's launch of Llama 4, and so we find them to be the provider with the highest number of private variants in our analysis. We believe we might be underestimating the counts for providers having fewer model launches during this period.

\textbf{Our training experiments likely underestimate the potential to overfit:} Our estimate of overfitting is likely conservative, as it is based on training with only a fraction of the data believed to be available to some proprietary model providers. This disparity suggests that proprietary models may be trained on 5 to 10 times more data than we use, potentially increasing the risk of overfitting to patterns not present in our smaller subset. These observations underscore the importance of further examining data scale and its implications for model overfitting to certain leaderboards.

\textbf{We rely on the model's self-identification to attribute private models to their respective providers:} Since the identity of anonymous models is not publicly disclosed, we use model self-identification as a proxy---prompting each model directly and observing how it responds. When a model consistently names a particular provider across multiple prompts, we attribute it to that provider. While this method provides a reasonable signal, it is inherently approximate. Due to limited data and the potential for models to respond inconsistently or ambiguously, some misattributions may occur. To encourage validation of our estimates, we include the codename and our estimate of identity in \Cref{app:private-variants-names}. We welcome correspondence with the authors from providers if any of the estimates of ownership are incorrect.

\section{Related Work}

\begin{quote}
    \textit{The strength of science lies in its ability to withstand challenge, verification, and replication.} \textbf{---  Richard P. Feynman}
\end{quote}

\subsection{Meta-studies on the Rigor of Benchmarking in AI} 

Our work contributes to a wider body of work examining the role of benchmarks in determining progress in machine learning. Benchmarking has played a central role in shaping research priorities and incentives within the deep learning community \citep{koch2024protoscience}. Research has found that benchmarks are rarely impartial and instead shaped by the environments in which the benchmarks are made, finding that assumptions, commitments, and dependencies can often have large implications in final outcomes \citep{aniba2010issues,bartzbeielstein2020benchmarkingoptimizationbestpractice}. Creating a meaningful and reliable benchmark is challenging, and there has been critical work identifying key benchmark desiderata and open challenges.

\textbf{Propensity for overfitting.} Static task-based leaderboards, such as Hugging Face's Open LLM Leaderboard \citep{open-llm-leaderboard-v2, eval-harness} and OpenCompass \citep{2023opencompass}, aim to evaluate models across a broad range of skills but are often susceptible to data contamination and implicit overfitting \citep{deng2023investigating, golchin2023time, roberts2023cutoff, dong2024generalization,singh2024evaluationdatacontaminationllms,longpreresponsible}. Prior works \citep{deng2023investigating, golchin2023time, yang2023rethinking} have proposed various methods for detecting contamination, while \citet{dong2024generalization} discusses how such contamination impedes the ability to distinguish true generalization, ultimately hindering progress. Although dynamic, live benchmarks like \cba significantly reduce the risk of overfitting, we report in this paper that certain practices—such as multiple submissions during the anonymous testing period and best-of-N submissions—tend to favor large, proprietary players with disproportionate access to data. As a result, model development may be deliberately optimized for performance on \cba.

\textbf{Lack of standardization across benchmarks.} The lack of standardization in benchmarks complicates meaningful comparisons due to inconsistent metrics and task definitions. \cite{ethayarajh2020utility} critique NLP leaderboards for prioritizing accuracy over dimensions like model compactness and fairness. Similarly, \cite{ruder2021nlpbenchmarking} highlights that benchmarks such as SuperGLUE \citep{sarlin2020superglue} are quickly saturated, with models reaching superhuman performance while still failing in real-world scenarios, underscoring the need for dynamic and standardized evaluation. This inconsistency risks misleading practitioners, as echoed in recent critiques \citep{bartzbeielstein2020benchmarkingoptimizationbestpractice, reuel2024betterbench}.

\textbf{Quality of data and limited reproducibility.} A recent study by \citet{vendrow2025large} revealed widespread label errors that compromise evaluation reliability, showing that even frontier LLMs can struggle with seemingly simple tasks. Similarly, \citet{digan2021can} identified reproducibility challenges arising from complex data streams, which affect result consistency. Related work \citep{bartzbeielstein2020benchmarkingoptimizationbestpractice, longpre2023data, reuel2024betterbench, albalak2024surveydataselectionlanguage} further emphasizes that poor data quality and limited reproducibility can lead to unreliable evaluations and undermine scientific credibility.

\textbf{Favored benchmarks may not capture performance in the real world.} Commonly used benchmarks often fail to capture real-world performance, creating a gap between test scores and practical utility due to their tendency to overlook the dynamic and complex nature of real-world tasks. Recent studies \citep{ott2022mapping, ai_benchmarks_hit_saturation} highlight this disconnect, observing that models frequently excel on benchmarks while underperforming in practical applications, especially as benchmarks quickly reach saturation.

\subsection{Human Voting-based Benchmarks}

\textbf{Wider studies on the role and benefits of human voting-based benchmarks.} \cba is an example of a human voting-based benchmark.  Human judgment has long been regarded as the gold standard for evaluating the quality of model-generated outputs. These models should ultimately align with human values, and certain nuanced qualities, such as coherence, harmlessness, and readability, are best assessed by humans \citep{van2019best, boubdir2023promptsmakedifferencedata}. Platforms like \cba \citep{chiang2024chatbot}, Talk Arena \citep{talkarena2024}, and Game Arena \citep{ Hu2024GameArenaEL}, Aya UI Interface \citep{singh-etal-2024-aya} effectively use crowdsourcing to gather large volumes of real-world user prompts and feedback. Many opt for Elo-like or BT-style rankings to rank models. Moreover, collecting human preference data has also proven invaluable for alignment techniques like Reinforcement Learning from Human Feedback (RLHF) \citep{christiano2017deep, ouyang2022training,ahmadian2024back,dang2024rlhf}, which helps fine-tune models to generate more natural and human-preferred responses. Human voting has been shown to mitigate some of the biases associated with using LLM-as-a-judge approaches, which, while improving evaluation efficiency, may raise concerns about robustness \citep{raina2024llm} and introduce various forms of bias \citep{koo2023benchmarking, shimabucoro2023evaluating, chen2402humans, zheng2024judging}. Furthermore, live leaderboards offer several advantages over static task benchmarks, including a lower risk of data contamination and greater adaptability to evolving evaluation needs.

\textbf{Critiques of Human-Voting Based Benchmarks.} Voting-based live benchmarks like \cba also face evaluation challenges not addressed in this paper. \cba \citep{chiang2024chatbot} has made substantial efforts to ensure reliability and security, including malicious user detection, bot protection via Google reCAPTCHA v3, vote limits per IP address, prompt de-duplication, and other safeguards\footref{policy}. Nonetheless, recent work has focused on auditing the reliability of human-voting-based live leaderboards. For instance, studies have demonstrated that such leaderboards are vulnerable to low-cost manipulation, with adversarial users able to de-anonymize model responses and carry out targeted voting attacks \citep{huang2025exploring}. Additionally, \citet{Zhao2024ChallengesIT, min2025improving} suggest that \cba rankings can be artificially inflated through various adversarial voting strategies. These vulnerabilities raise concerns about the overall trustworthiness of \cba. While our study does not explicitly investigate adversarial voting, we note that \cba’s policy of informing model providers when testing begins and disclosing model aliases may create conditions conducive to leaderboard manipulation.

\section{Conclusion}

It is far easier to point out issues with the Arena than the huge amount of work that went into building it. While our work is motivated by the need to maintain scientific integrity in AI progress, we believe it is important to note the huge amount of work involved for a small group of organizers to build a hugely popular community benchmark. Their efforts have democratized access to many models and enabled a large and varied user base to weigh in on what matters in the real world for model selection. Hence, while we point out systematic issues with \cba as it is now, we also acknowledge that many of these issues may have gradually emerged as the leaderboard took on outsized importance in visibility to providers.

This work demonstrates the difficulty in maintaining fair evaluations, despite best intentions. We show that coordination among a handful of providers and preferential policies from \cba towards the same small group have jeopardized scientific integrity and reliable Arena rankings. The widespread and apparent willful participation in the gamification of arena scores from a handful of top-tier industry labs is undoubtedly a new low for the AI research field. \textbf{As scientists, we must do better. As a community, we must demand better.}

We believe it is very feasible for the organizers of \cba to continue to innovate and restore trust by revising their policies. We propose a series of very straightforward recommendations to help reinforce the reliability and fairness of the leaderboard. Most urgently, providers should not be allowed to choose which scores are made public. There should be strict and transparent limits to the number of private variants per provider. Providers, whether from academic or industry labs, should be aware of private testing and the limits should be the same across providers. There should be transparent criteria for model removal from the arena and fairer sampling that is motivated by reducing uncertainty in rankings instead of being skewed towards giving proprietary models more battles. We believe the implementation of these recommendations is critical for addressing ranking distortions, but also necessary in the long term to ensure the benefits from participating on the leaderboard are not concentrated in a handful of providers.

\section{Acknowledgements}

We thank our colleagues who have supported various aspects of this project: Madeline Smith, Brittwanya Prince, Thomas Euyang, and Shubham Shukla.

\bibliography{refs}

\appendix

\section{Chatbot Arena Background}

LMSYS originated from a multi-university collaboration involving UC Berkeley, Stanford, UCSD, CMU, and MBZUAI in 2023. It was established as a non-profit corporation in September 2024 to incubate early-stage open-source and research projects. 
Chatbot Arena was first launched in May 2023 under LMSYS and later evolved into a standalone project with its own dedicated website\footnote{\url{https://lmsys.org/blog/2024-09-20-arena-new-site/}} maintained under the name LMArena by researchers from UC Berkeley SkyLab.
It has emerged as a critical platform for live, community-driven LLM evaluation, attracting millions of participants and collecting over 3 million votes to date.

LMArena operates based on human preferences. Chatbot Arena asks users to input prompts in battles. The user then votes for their preferred model based on the outputs generated by the models in the battle in response to the user's prompts. These preferences are then used by Chatbot Arena to compute model ratings using algorithms like Online Elo and Bradley-Terry.

\section{Bradley-Terry Rating Model}
\label{app:bradley-terry}

Consider a set of \( m \) players (models) and \( n \) pairwise comparisons between them. Let \( X \in \mathbb{R}^{m \times n} \) be the design matrix, where each column represents one pairwise comparison. In the Bradley-Terry model, the probability that player \( i \) is preferred over player \( j \) in a comparison is modeled as:

\[
    P(\text{i preferred over  j}) = \frac{1}{1 + e^{(\beta_j - \beta_i)}}
\]
     
Then, in the matrix \( X \), column vector $k$ has a value of 1 at position \( i \) , -1 at position \( j \), and 0 elsewhere. Let $Y \in \{ 0 ,1\}^n$ be the vector of observed outcomes, where $Y_k = 1$ if player $i$ wins the $k$-th comparison and $Y_k = 0$ if player $j$ wins. 
Our goal is to estimate the Bradley-Terry coefficients \( \beta \in \mathbb{R}^m \), which determine the relative strengths of the players. The coefficients $\beta$ are estimated via maximum likelihood estimation by minimizing the expected cross-entropy loss,

\[
    \hat{\beta} = \arg\min_{\beta \in \mathbb{R}^{m} } \frac{1}{n} \sum_{k=1}^{n} \ell \big( \sigma(X^T \beta)_k, Y_k \big)    
\]

where \( \sigma(\cdot) \) is the logistic function that models the relative player strengths, and \( \ell(\cdot) \) represents the binary cross-entropy loss between the predicted probabilities and the observed outcomes \( Y \). The estimated coefficient \( \beta \) captures the latent strength of each player.

Once the Bradley-Terry model estimates the coefficients, we can scale them to obtain Elo-like ratings using the transformation:

\[
    R_m = \text{scale} * \beta + \text{initial rating}
\]

In practice, Chatbot Arena does not rely solely on a model’s Arena Score for ranking. Instead, it also considers the confidence intervals associated with these scores. When the confidence intervals of two models overlap, it becomes difficult to determine which one is truly better. This uncertainty is reflected in the final ranking table, adding nuance and statistical rigor to the leaderboard \citep{chiang2024chatbot}.

\[
    \text{rank}(m) = 1 + \sum_{m' \in [M]} 1 \left\{ m' > m\right\}
\]

\section{Unbiased Sampling: Why Selecting the Maximum Introduces Bias?}
\label{app:unbiased_sampling}

Let $(\hat{\beta}_k)_{k=1}^N$ be i.i.d. real-valued random variables with common cumulative distribution function $F$ and finite expectation $\mu := \mathbb{E}[\hat{\beta}_k]$. Assume the distribution is \emph{non-degenerate}, i.e., $\operatorname{Var}(\hat{\beta}_k) > 0$. The maximum is defined as:
\[
\hat{\beta}_{\mathrm{Best}} := \max\{\hat{\beta}_1, \dots, \hat{\beta}_N\}, \qquad N \geq 2.
\]

\begin{theorem}
For every $N \geq 2$,
\[
\mathbb{E}[\hat{\beta}_{\mathrm{Best}}] > \mathbb{E}[\hat{\beta}_k] \iff \operatorname{Var}(\hat{\beta}_k) > 0.
\]
\end{theorem}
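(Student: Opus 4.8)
Interpreting the displayed equivalence in its evidently intended form, the claim is that
\[
\mathbb{E}[\hat{\beta}_{\mathrm{Best}}] = \mathbb{E}[\hat{\beta}_k] \iff \operatorname{Var}(\hat{\beta}_k) = 0,
\]
i.e.\ that selecting the maximum strictly raises the mean exactly when the common law is non-degenerate; note that $N\ge2$ is essential, since for $N=1$ one has $\hat\beta_{\mathrm{Best}}=\hat\beta_1$ identically. The plan is to dispatch the degenerate direction in one line and to handle the non-degenerate direction through the sandwich $\hat{\beta}_{\mathrm{Best}} \ge \max\{\hat{\beta}_1,\hat{\beta}_2\}$ together with the elementary identity $\max\{a,b\} = b + (a-b)^{+}$, which isolates the source of the strict gain.

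First I would record the bookkeeping. Since $\max_k\hat\beta_k \le \sum_k|\hat\beta_k|$ and $\mathbb{E}|\hat\beta_k|<\infty$, while $\hat\beta_{\mathrm{Best}}\ge\hat\beta_1$ controls the negative part, $\mathbb{E}[\hat{\beta}_{\mathrm{Best}}]$ is finite, so the comparison with $\mu:=\mathbb{E}[\hat\beta_k]$ is meaningful; moreover $\hat\beta_{\mathrm{Best}}\ge\hat\beta_1$ gives $\mathbb{E}[\hat{\beta}_{\mathrm{Best}}]\ge\mu$ unconditionally. If $\operatorname{Var}(\hat\beta_k)=0$ then $\hat\beta_k=\mu$ almost surely, hence $\hat\beta_{\mathrm{Best}}=\mu$ almost surely and equality holds; this settles the direction ``equality $\Leftarrow$ degenerate''.

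For the remaining direction I would argue directly. Taking expectations in $\max\{\hat\beta_1,\hat\beta_2\} = \hat\beta_2 + (\hat\beta_1-\hat\beta_2)^{+}$ and using $\hat{\beta}_{\mathrm{Best}} \ge \max\{\hat{\beta}_1,\hat{\beta}_2\}$ yields
\[
\mathbb{E}[\hat{\beta}_{\mathrm{Best}}] \ \ge\ \mu + \mathbb{E}\big[(\hat{\beta}_1-\hat{\beta}_2)^{+}\big],
\]
so it suffices to show $\mathbb{E}[(\hat\beta_1-\hat\beta_2)^{+}]>0$, i.e.\ $P(\hat\beta_1>\hat\beta_2)>0$, whenever $\operatorname{Var}(\hat\beta_k)>0$. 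By symmetry of the i.i.d.\ pair, $P(\hat\beta_1>\hat\beta_2)=P(\hat\beta_2>\hat\beta_1)=\tfrac12\big(1-P(\hat\beta_1=\hat\beta_2)\big)$, so I only need $P(\hat\beta_1=\hat\beta_2)<1$. Non-degeneracy provides a level $c$ with $p:=P(\hat\beta_1<c)\in(0,1)$ (any point where the CDF $F$ lies strictly between $0$ and $1$), and then independence gives $P(\hat\beta_1<c,\ \hat\beta_2\ge c)=p(1-p)>0$, an event on which $\hat\beta_1\ne\hat\beta_2$; hence $P(\hat\beta_1=\hat\beta_2)<1$, as required.

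The main obstacle is exactly this last step: passing from ``non-degenerate'' to $P(\hat\beta_1=\hat\beta_2)<1$ while respecting the hypotheses, which only assume a finite mean and leave open the possibility $\operatorname{Var}(\hat\beta_k)=+\infty$ — so one should not route the argument through a variance identity such as $\operatorname{Var}(\hat\beta_1-\hat\beta_2)=2\operatorname{Var}(\hat\beta_k)$, but instead through the independence/CDF argument above. Everything else — monotonicity of the maximum, the finiteness check, the identity $\max\{a,b\}=b+(a-b)^{+}$, and the symmetry of an i.i.d.\ pair — is routine.
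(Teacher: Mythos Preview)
Your proof is correct, including the care you take with integrability and with the possibility that $\operatorname{Var}(\hat\beta_k)=+\infty$. Your reading of the statement (equality iff degenerate, strict inequality iff non-degenerate) matches the paper's evident intent; the displayed biconditional in the paper is mis-typeset.

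The paper, however, argues along a different line. It uses the order-statistics identity $F_{\hat\beta_{\mathrm{Best}}}(x)=F(x)^N$ together with the tail-integral formula (stated as ``integration by parts'') to obtain
\[
\mathbb{E}[\hat\beta_{\mathrm{Best}}]-\mathbb{E}[\hat\beta_1]=\int_{-\infty}^{\infty}\bigl(F(x)-F(x)^N\bigr)\,dx,
\]
and then observes that $F(x)^N<F(x)$ whenever $0<F(x)<1$ and $N\ge2$, so the integrand is nonnegative everywhere and strictly positive on a set of positive Lebesgue measure precisely when $F$ is non-degenerate. Your route instead reduces to the two-sample case via $\hat\beta_{\mathrm{Best}}\ge\max\{\hat\beta_1,\hat\beta_2\}$ and the identity $\max\{a,b\}=b+(a-b)^+$, then extracts strict positivity of $\mathbb{E}[(\hat\beta_1-\hat\beta_2)^+]$ from $P(\hat\beta_1\ne\hat\beta_2)>0$ by a direct independence/CDF argument. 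The paper's approach has the advantage of delivering an explicit closed form for the gap and showing transparently how it grows with $N$; your approach is more self-contained, sidesteps the tail-integral machinery (which the paper invokes without checking the boundary behavior needed to justify it for signed random variables), and makes the integrability bookkeeping explicit.
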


\begin{proof}
The cumulative distribution function (CDF) of the maximum is
\[
F_{\hat{\beta}_{\mathrm{Best}}}(x) = \mathbb{P}(\hat{\beta}_{\mathrm{Best}} \le x) = F(x)^N.
\]
Using integration by parts, we have:
\begin{align*}
\mathbb{E}[\hat{\beta}_{\mathrm{Best}}] - \mathbb{E}[\hat{\beta}_1]
&= \int_{-\infty}^{\infty} x\,d(F(x)^N - F(x)) \\
&= \int_{-\infty}^{\infty} (F(x) - F(x)^N)\,dx.
\end{align*}
For all $x$ such that $0 < F(x) < 1$, and $N \ge 2$, we have $F(x)^N < F(x)$, so the integrand is strictly positive on a set of positive measure (since the distribution is non-degenerate). Thus, the integral -- and hence the difference in expectations -- is strictly positive.

If $\operatorname{Var}(\hat{\beta}_1) = 0$, then $F$ is a step function with a single jump (a constant distribution), and $F(x) - F(x)^N = 0$ for all $x$, yielding equality.
\end{proof}

\begin{remark}
This result formalizes the \emph{selection bias} arising when one reports the best out of $N$ noisy skill estimates: statistical fluctuations ensure that the maximum overestimates the expected performance of a typical sample. This is especially relevant in leaderboard scenarios where multiple submissions are made and only the top-performing one is reported. This phenomenon is well-studied in the theory of order statistics (see \citet{ArnoldBalakrishnanNagaraja1992, DavidNagaraja2003}).
\end{remark}

\section{Data sources}
\label{app:data-sources}

To gain insights and analyze various trends in the \cba leaderboard, we leverage multiple data sources. In total, our real-world data sources encompass 2M battles and cover 243 models across 42 providers. Below, we describe the different datasets used in our analyses.

\begin{enumerate}
    \item \textbf{Historical Battles}
    (\texttt{\lmarenahistorical}): is a collection of 1.8 million battles from \cba from April 2023 to January 2025. We build this resource by combining both released public battles by \cba and proprietary battle dataset released by \cba to providers such as Cohere based upon their policy\footref{policy}. We describe both datasets in more detail \Cref{app:dataset_descriptions}. We leverage \texttt{\lmarenahistorical} dataset as a key resource for quantifying task distribution drift (see \Cref{fig-language-distribution}):
     \begin{itemize}
         \item How do Arena use cases change over time?
    \end{itemize}
    \item \textbf{API Prompts:} Majority of \texttt{\lmarenahistorical} dataset does not contain prompts as shown in \Cref{tab:data_sources}. Additionally, all datasets published by LMArena are already de-duplicated so they won't be useful for capturing the extent of similar or overlapping queries. Hence we switch to prompts collected via Cohere's API based on requests received via \cba, comprising a total of 567,319 entries. For simplicity and the purposes of this study, we excluded records with null values and multi-turn data and analyzed 197,217 single-turn conversations collected between November 2024 and April 2025. The models include \texttt{command-r-08-2024}, \texttt{command-r-plus-08-2024} \citep{cohere2024}, \texttt{aya-expanse-8b},  \texttt{aya-expanse-32b} \citep{ustun-etal-2024-aya, dang2024aya}, and \texttt{command-a-03-2025} \citep{cohere2025commandaenterprisereadylarge}, along with three private variants. Of these, 62\% of the data was labeled as coming from Aya models, while the remaining 38\% was attributed to Command models. We use this dataset for prompt duplication analysis (see \Cref{fig:duplicate_prompt} and \Cref{app:prompt-duplication-headmap}):
    \begin{itemize}
         \item How many prompts are duplicates or close duplicates?
    \end{itemize}
    \item \textbf{Leaderboard Statistics} (\texttt{\lmarenaleaderboard}): is snapshots of ratings and rankings as well as the number of battles played over time by models published on \cba's public leaderboard since its inception. To build this resource, we consolidate historical leaderboard snapshots released by \cba on Hugging Face\footnote{\url{https://huggingface.co/spaces/lmarena-ai/chatbot-arena-leaderboard/tree/main}}. For fair assessment, we consider historical data starting from January 9 2024 -- April 23 2025 for our analysis since \cba switched to using the latest Bradley-Terry model in December 2023 to improve the reliability of model rankings\footref{leaderboard-elo-update}. By combining all leaderboard tables published by LMArena during this period, we obtained 14.3K records corresponding to 243 unique models evaluated on \cba. We also enriched this dataset with additional metadata, such as categorizing models as proprietary, open-weight, or open-source based on the classification described in \Cref{app:license-categories}. We use this data for analyzing trends related to no. of models, data access across providers (See Figures \ref{fig-max-rating-num-models-main-providers}, \ref{fig-percentage-battles-open-proprietary} and \ref{fig:public_private_data}) as well as model deprecation (See Figures \ref{fig-silent-deprecated_prop_open}, \ref{fig-silent-deprecated} and \ref{fig-silent-deprecated-license-cat}):
      \begin{itemize}
      \item How does data access vary between providers?
      \item How do models' deprecations vary by provider and across proprietary, open-weight, and open-source models?
    \end{itemize}
    \item \textbf{Random Sample Battles} (\texttt{scraped-random-sample}): The \texttt{\lmarenahistorical} and \texttt{leade\\rboard-stats} dataset does not provide insights into private testing being conducted by different providers. It appears private battles are removed by \cba maintainers from the data before being released in both datasets. Furthermore, \texttt{\lmarenahistorical} contains the majority of samples from 2023 and 2024 and does not provide visibility in current sampling rate trends being followed on the Arena. To address this gap, we collected 5864 battles by crawling \cba between January 2025 -- March 2025 (approximately 150 a day). To avoid our collection from disrupting actual voting, we first ask models about their identity, which causes models to reveal their identities and automatically invalidates these battles for updating the scores.\footref{policy} \citep{chiang2024chatbot} As a further precaution, we only scrape a low volume of daily samples and only vote for ties between models.
    Additionally, we use this identity prompt to identify model ownership of private variants, as detailed in the \Cref{app:de-anonymize-prompt}. We store the identity revealed for each model to track the volume of private testing (more details included in \Cref{app:private-variants-names}). We use this \texttt{scraped-random-sample}, which is a representative random sample over time, to answer a few critical questions: 
    \begin{itemize}
        \item Are different models sampled for battles at similar rates?
        \item How many anonymous models are being tested by different model providers?
    \end{itemize}

\end{enumerate}

We provide additional details about \texttt{\lmarenahistorical} dataset in the \Cref{app:dataset_descriptions}. We also provide a detailed overview of all datasets in \Cref{tab:data_sources}.

\subsection{Public and Private Battles}
\label{app:dataset_descriptions}
Our \texttt{\lmarenahistorical} dataset includes snapshots of battles played on \cba that have been released publicly or shared privately with model providers based on their policy\footref{policy}. We provide additional details about public and private subsets of \lmarenahistorical, for the reader's consideration below. 
    \begin{itemize}
    \item \textbf{Public Battles:} The public portion of our historical data comes from the officially released datasets by \cba on Hugging Face or as part of notebook tutorials. We combine the \textit{arena-human-preference-100K}\footnote{\url{https://huggingface.co/datasets/lmarena-ai/arena-human-preference-100k}} \citep{chiang2024chatbot, tang2025explorer} dataset containing 106K samples from June 2024 -- August 2024 with datasets shared by \cba as part of notebook tutorials on Bradley Terry\footnote{\url{https://blog.lmarena.ai/blog/2023/leaderboard-elo-update/}\label{leaderboard-elo-update}} and Elo Rating systems\footnote{\url{https://blog.lmarena.ai/blog/2023/arena/}}. 
    This resulted in around 2M samples from April 2023 to August 2024 in total. 90\% of the data does not include any prompt or completion history, instead consisting only of the names of the two models battling and the winning model as well as language and task category tags. We exclude other public battles released by \cba for inclusion in \texttt{\lmarenahistorical} dataset since they did not contain required columns or enough multilingual data points required for the analysis presented in Figure \ref{fig-language-distribution}.
    \item \textbf{Proprietary Battles:} 
    We also obtain historical battle data from \cba maintainers for battles that involve Command and Aya models. This data was shared based on \cba's policy\footref{policy}, which permits model providers to request access to 20\% of the data collected involving their own models. 
    The data we received consists of 43,729 battles played by the following models between March 2024 and March 2025: \texttt{command-r}, \texttt{command-r-plus}, \texttt{command-r-08-2024}, \texttt{command-r-plus-08-2024} \citep{cohere2024}, \texttt{aya-expanse-8b}, \texttt{aya-exp-\\anse-32b} \citep{ustun-etal-2024-aya, dang2024aya}. In contrast to the public data, this proprietary data contains the complete model conversations. Since this data is 46\% multilingual, we combine this with \textbf{public battles} to form \texttt{\lmarenahistorical} and use it for language distribution shift analysis presented in Figure \ref{fig-language-distribution}.

   \end{itemize}

\section{Our Scraping Methodology of LMArena Statistics}
\label{app:lmarena-scraping}

We collected 5.8K battles (\texttt{\lmarenascraped}) by crawling data from Chatbot Arena on a regular basis between January--March, 2025. For this purpose, we setup a scraping script using Selenium library with chrome browser driver. To identify anonymous models, we first sent a de-anonymizing prompt. While Chatbot Arena does discard battles where models reveal their identities, as an additional measure on our end, we ask a simple follow-up question designed to most likely result in ties, such as ``What is the capital of England? Reply with one word only.'' or ``Is the Earth round? Reply with Yes/No only.'' Our scraping script extracted the models' names as well as their responses to the asked questions.
In addition to the \texttt{\lmarenascraped} collected by crawling the main \cba leaderboard, we also collected around 500 additional samples by scraping the Vision leaderboard between 9th March and 28th March, 2025. This helped us in identifying 35 private vision models which are shown in section \ref{app:private-scrape-models}. We refer to this collected set of vision battles as \texttt{scraped-vision-sample}.

\subsection{De-anonymizing Model Identities}
\label{app:de-anonymize-prompt}

While crawling battles to prepare \texttt{\lmarenascraped}, we ask the models about their identity. This helps in ensuring that our votes from scraping the arena don't interfere with the leaderboard rankings since Chatbot Arena discards votes in which models reveal their identities \citep{chiang2024chatbot}. We use either one of the following prompts to de-anonymize the model identity.

\begin{tcolorbox}[
  colback=gray!10!white, 
  colframe=black, 
  boxrule=0.8pt, 
  title=De-Anonymize Prompt,
  label=De-Anonymize Prompt,
]
1. Who are you? \\
2. Who are you? Respond with only your name and who trained you.
\end{tcolorbox}

The model identities are then inferred based on the responses of the models. In \Cref{app:private-variants-names}, we specify the responses of different private variants based on which they were assigned to their respective providers.
Using this approach, we identified a total of 64 private models corresponding to 10 providers. We also captured 14 other private models as part of our scraping but weren't able to de-anonymize them: \textit{kiwi}, \textit{space}, \textit{maxwell}, \textit{luca}, \textit{anonymous-engine-1}, \textit{tippu}, \textit{sky}, \textit{pineapple}, \textit{pegasus}, \textit{dasher}, \textit{dancer}, \textit{blueprint}, \textit{dry\_goods}, \textit{prancer}. 

\subsection{Encountered Private Models in Scraping}
\label{app:private-scrape-models}

\begin{longtable}{p{2cm} p{1.5cm} p{7cm} p{5cm}}
\caption{\textbf{Private Models by Provider}. We show the private models corresponding to each provider, which were identified by crawling overall and vision leaderboards (See Section \ref{app:lmarena-scraping}). The models highlighted in bold appear on both leaderboards. We find that Meta had an additional 16 private models active on the Vision leaderboard along with its 27 models on the Overall leaderboard, bringing its total count to 43. We show the models corresponding to overall leaderboard in  Figure \ref{fig-private-testing-providers}. We exclude models corresponding to LMArena from this figure, as they are associated with the Prompt-to-Leaderboard work led by \cba \citep{frick2025prompttoleaderboard}.} \label{tab:private_models} \\

\toprule
\textbf{Provider}
& \textbf{\parbox{1.5cm}{No. of private models}} & 
\textbf{\parbox{6cm}{Private Models from Overall leaderboard}}  
& \textbf{\parbox{5.5cm}{Additional Private Models from Vision leaderboard}}  \\
\midrule
\endfirsthead
\toprule
\textbf{Provider}
& \textbf{\parbox{1.5cm}{No. of private models}} & 
\textbf{\parbox{6cm}{Private Models from Overall leaderboard}}  
& \textbf{\parbox{5.5cm}{Additional Private Models from Vision leaderboard}}  \\
\midrule
\endhead
\midrule
\endfoot
\bottomrule
\endlastfoot

\multirow{4}{*}{Meta} & \multirow{4}{*}{43}
  & \textbf{polus} & aurora \\ 
& & deep-inertia & cresta\\ 
& & \textbf{goose} & discovery\\ 
& & \textbf{falcon} & ertiga\\ 
& & \textbf{jerky} & flux\\ 
& & anonymous-engine-2 & harmony\\ 
& & \textbf{kronus} & helix\\ 
& & consolidation & pinnacle\\ 
& & \textbf{flywheel} & portola\\ 
& & inertia & prosperity\\ 
& & momentum & raze\\ 
& & \textbf{rhea} & solaris\\ 
& & \textbf{sparrow} & spectra\\ 
& & spider & toi\\ 
& & \textbf{gaia} & vega\\ 
& & \textbf{rage} & zax\\
& & frost \\ 
& & themis \\ 
& & cybele \\ 
& & unicorn-engine-1 \\ 
& & unicorn-engine-2 \\ 
& & unicorn-engine-3 \\ 
& & unicorn-engine-4 \\ 
& & unicorn-engine-5 \\ 
& & unicorn-engine-6 \\ 
& & unicorn-engine-7 \\ 
& & \textbf{uranus} \\ 
\midrule

\multirow{2}{*}{OpenAI} & \multirow{2}{*}{3}  
  & anonymous-chatbot & gpt4o-lmsys-0315a-ev3-vis\\ 
& & gpt4o-lmsys-0315a-ev3-text \\ 
\midrule

\multirow{4}{*}{Google} & \multirow{4}{*}{10} 
  & \textbf{centaur} \\ 
& & \textbf{enigma} \\ 
& & \textbf{gremlin} \\ 
& & gemini-test \\ 
& & zizou-10 \\ 
& & \textbf{specter} \\ 
& & moonhowler \\ 
& & \textbf{phantom} \\ 
& & \textbf{nebula} \\ 
& & goblin \\ 
\midrule
\multirow{4}{*}{Amazon} & \multirow{4}{*}{7}
  & raspberry-exp-beta-v2 \\ 
& & \textbf{apricot-exp-v1} \\ 
& & cobalt-exp-beta-v2 \\ 
& & raspberry-exp-beta-v1 \\ 
& & raspberry \\ 
& & cobalt-exp-beta-v1 \\ 
& & raspberry-exp-beta-v3 \\ 
\midrule

\multirow{4}{*}{Cohere} & \multirow{4}{*}{6}
  & cohort-chowder & asterix\\ 
& & sandwich-ping-pong & buttercup \\ 
& & grapefruit-polar-bear \\ 
& & roman-empire \\ 
\midrule

\multirow{4}{*}{LMArena} & \multirow{4}{*}{5}
  & p2l-router-7b-0317 \\ 
& & p2l-router-7b-0318  \\ 
& & p2l-router-7b \\
& & experimental-router-0207 \\ 
& & experimental-router-0122 \\ 
& & experimental-router-0112 \\ 
\midrule

\multirow{1}{*}{Nvidia} & \multirow{1}{*}{2} 
  & march-chatbot-r \\ 
& & march-chatbot \\ 
\midrule
xAI & 1 & anonymous-test \\ 
\midrule
Reka & 1 & \textbf{margherita-plain} \\ 
\midrule
Alibaba & 1 & qwen-plus-0125-exp \\ 
\midrule
StepFun & 1 & step-2-16k-202502 \\
\midrule
\multirow{2}{*}{Unknown} & \multirow{2}{*}{14}
& kiwi \\
& & space \\
& & maxwell \\
& & \textbf{luca} \\
& & anonymous-engine-1 \\
& & tippu \\
& & sky \\
& & pineapple \\
& & pegasus \\
& & dasher \\
& & dancer \\
& & blueprint \\
& & dry\_goods \\
& & prancer \\
\end{longtable}

\subsection{Encountered Public Models in Scraping}

\begin{longtable}{p{2cm} p{1.5cm} p{7cm} p{5cm}}
\caption{\textbf{Public Models per Provider}. This table shows the public models from each provider that appeared on the overall and vision leaderboards during our scraping period (January--March 2025). Models highlighted in bold appear on both leaderboards. Google and OpenAI had the most public models active during this period, with 15 and 9 models, respectively.
}
\\
\toprule
\textbf{Provider}
& \textbf{\parbox{1.5cm}{No. of public models}} & 
\textbf{\parbox{6cm}{Public Models from Overall leaderboard}}  
& \textbf{\parbox{5cm}{Additional Public Models from Vision leaderboard}}  \\
\midrule
\endfirsthead
\toprule
\textbf{Provider}
& \textbf{\parbox{1.5cm}{No. of public models}} & 
\textbf{\parbox{6cm}{Public Models from Overall leaderboard}}  
& \textbf{\parbox{5cm}{Additional Public Models from Vision leaderboard}}  \\
\midrule
\endhead
\midrule
\endfoot
\bottomrule
\endlastfoot
Meta & 3
 & llama-3.1-405b-instruct-bf16 & llama-3.2-vision-90b-instruct \\
& & llama-3.3-70b-instruct &  \\
\midrule
Amazon & 3 & \textbf{amazon-nova-lite-v1.0} \\
& & \textbf{amazon-nova-pro-v1.0} &  \\
& & amazon-nova-micro-v1.0 & \\
\midrule
Anthropic & 5 & claude-3-5-haiku-20241022 &  \\
 & & claude-3-7-sonnet-20250219-thinking-32k &  \\
 & & claude-3-5-sonnet-20241022 & \\
 & & \textbf{claude-3-7-sonnet-20250219} &  \\
 & & claude-3-opus-20240229 &  \\
\midrule
Alibaba & 5 & qwen2.5-72b-instruct & qwen2.5-vl-72b-instruct \\
 & & qwq-32b &  \\
 & & qwen-max-2025-01-25 & \\
 & & qwen2.5-plus-1127 & \\
\midrule
Google & 15 & gemma-2-2b-it &  \\
 & & \textbf{gemini-2.0-pro-exp-02-05} & \\
 & & \textbf{gemini-1.5-pro-002} &  \\
 & & gemini-2.0-flash-thinking-exp-1219 &  \\
 & & gemini-1.5-flash-002 &  \\
 & & gemini-2.0-flash-lite-preview-02-05 &  \\
 & & gemini-1.5-flash-8b-001 &  \\
 & & gemini-2.0-flash-exp &  \\
 & & \textbf{gemma-3-27b-it} & \\
 & & gemma-2-9b-it &  \\
 & & gemini-exp-1206 &  \\
 & & \textbf{gemini-2.0-flash-thinking-exp-01-21} &  \\
 & & \textbf{gemini-2.5-pro-exp-03-25} & \\
 & & \textbf{gemini-2.0-flash-001} &  \\
 & & gemma-2-27b-it &  \\
\midrule
OpenAI & 9 & \textbf{o3-mini} \\
 & & \textbf{gpt-4o-mini-2024-07-18} \\
 & & \textbf{o1-2024-12-17}  \\
 & & \textbf{gpt-4.5-preview-2025-02-27} \\
 & & o3-mini-high \\
 & & chatgpt-4o-latest-20250326 \\
 & & chatgpt-4o-latest-20241120  \\
 & & \textbf{chatgpt-4o-latest-20250129} \\
 & & o1-mini \\
\midrule
StepFun & 1
 & step-2-16k-exp-202412 \\
\midrule
xAI & 4 & early-grok-3  \\
 & & grok-2-2024-08-13 \\
 & & \textbf{grok-3-preview-02-24} \\
 & & grok-2-mini-2024-08-13  \\
\midrule
DeepSeek & 3 & deepseek-v3 \\
 & & deepseek-v3-0324 \\
 & & deepseek-r1  \\
\midrule
Microsoft & 1 & phi-4  \\
\midrule
Mistral & 3 & mistral-large-2411 & pixtral-large-2411 \\
 & & mistral-small-24b-instruct-2501  \\
\midrule
Cohere & 4 & command-a-03-2025 & c4ai-aya-vision-32b \\
 & & c4ai-aya-expanse-8b  \\
 & & c4ai-aya-expanse-32b  \\
\midrule
Tencent & 4 & hunyuan-turbos-20250226  \\
 & & hunyuan-turbo-0110  \\
 & & hunyuan-standard-2025-02-10 \\
 & & hunyuan-large-2025-02-10 \\
\midrule
NexusFlow & 1 & athene-v2-chat \\
\midrule
Zhipu & 1 & glm-4-plus-0111 \\
\midrule
IBM & 2 & granite-3.1-2b-instruct  \\
 & & granite-3.1-8b-instruct \\
\midrule
Allen AI & 2 & llama-3.1-tulu-3-70b \\
 & & llama-3.1-tulu-3-8b  \\
 & & olmo-2-0325-32b-instruct \\
\end{longtable}

\subsection{Assignment of Private Variants to Providers}
\label{app:private-variants-names}

{\small
\renewcommand{\arraystretch}{1.3}
\begin{longtable}{p{0.1\textwidth} p{0.2\textwidth} p{0.2\textwidth}p{0.4\textwidth}}
    \caption{\textbf{Private variants identified for different providers}. The table lists the private models captured in our \texttt{\lmarenascraped} or \texttt{scraped-vision-sample}, along with the number of responses revealing their identities and corresponding examples. Notably, the model \textbf{raspberry} withheld its identity in most responses (37 total) but disclosed ``Amazon'' as its provider in three instances. It's possible that some private models appeared in more battles, but we couldn't capture their responses to our de-anonymizing prompt due to scraping errors. Additionally, a few battles occurred early in the project before we introduced asking the de-anonymizing prompt in our scraping methodology. Note that models with prefixes "p2l" and "experimental-router" identify as OpenAI/Google models but we assign them to LMArena as they are part of prompt-to-leaderboard work being lead by LMArena.\citep{frick2025prompttoleaderboard}.}     \label{tab:model_comparison_multi_count} \\
    \toprule
    \textbf{Identified Provider} & \textbf{Model Name} & \textbf{No. of Responses Revealing Identity} & \textbf{Model Responses} \\
    \midrule
    \endfirsthead

    \toprule
    \textbf{Identified Provider} & \textbf{Model Name} & \textbf{No. of Responses Revealing Identity} & \textbf{Model Responses} \\
    \midrule
    \endhead

    \midrule
    \multicolumn{4}{r}{} \\
    
    \endfoot

    \bottomrule
    \endlastfoot

  \multirow{2}{*}{Meta} & kronus & 73 & Llama, trained by Meta. \\
  \cline{2-4} 
  & polus & 57 & Llama, trained by Meta.\\
  \cline{2-4} 
  & frost & 47 & I'm an artificial intelligence model known as Llama. Llama stands for "Large Language Model Meta AI.  \\
  \cline{2-4}
  & gaia & 43 & Llama; trained by Meta.  \\
  \cline{2-4} 
  & uranus & 42 & Llama, trained by Meta.  \\
  \cline{2-4} 
  & rhea & 41 & Llama, trained by Meta  \\
  \cline{2-4} 
  
  & consolidation & 34 & I am Llama. I was trained by Meta AI. 
  \par\noindent\dotfill\par Llama.Trained by Meta AI, with roots in Large Language Model Meta AI (LLaMA) developed by Meta.  \\
  \cline{2-4} 
  & flywheel & 34 & I am Llama, and I was trained by Meta.  \\
  \cline{2-4} 
  \multirow{30}{*}{Meta} & momentum & 34 &  Llama, Meta AI \par\noindent\dotfill\par I am Llama, and I was trained by Meta.  \\
  \cline{2-4} 
  & deep-inertia & 33 &  I am Llama. I was trained by Meta AI. \par\noindent\dotfill\par Llama. Trained by Meta AI. \\
  \cline{2-4} 
  & inertia & 29 &  LLaMA, Meta AI  \\
  \cline{2-4}
  & jerky & 28 & I am Llama. I was trained by Meta AI.  \\
  \cline{2-4} 
   & goose & 25 &  I am Llama. Llama is an AI language model developed by Meta. Meta trained me. \\
  \cline{2-4} 
  & falcon & 23 & I am Llama. Llama was developed by Meta. \par\noindent\dotfill\par I am an AI assistant trained by Meta.  \\
  \cline{2-4} 
  & rage & 14 & I am Llama, trained by Meta AI. \par\noindent\dotfill\par Llama. Meta.  \\
  \cline{2-4}
  & anonymous-engine-2 & 12 & I'm an artificial intelligence model known as Llama. Llama stands for "Large Language Model Meta AI. \\
  \cline{2-4}
  & sparrow & 10 & I'm LLaMA, and I was trained by Meta. \par\noindent\dotfill\par I'm LLaMA, and I was trained by researchers at Meta.  \\
  \cline{2-4} 
  & cybele & 9 &  Llama, trained by Meta. \\
  \cline{2-4} 
   & unicorn-engine-1 & 2 & I'm an artificial intelligence model known as Llama. Llama stands for "Large Language Model Meta AI" \\
  \cline{2-4}
  & unicorn-engine-2 & 4 &  I'm an artificial intelligence model known as Llama. Llama stands for "Large Language Model Meta AI"\\
  \cline{2-4}
  & unicorn-engine-3 & 4 & I'm an artificial intelligence model known as Llama. Llama stands for "Large Language Model Meta AI".\\
  \cline{2-4} 
  & unicorn-engine-4 & 1 & I'm an artificial intelligence model known as Llama. Llama stands for "Large Language Model Meta AI". \\
  \cline{2-4}
  & aurora & 4  & LLaMA, Meta. \\
  \cline{2-4}
  & cresta & 21  & Llama. Trained by Meta. \\
  \cline{2-4}
  & discovery & 15  & Llama, Meta. \\
  \cline{2-4}
  & ertiga & 3  & I am LLaMA, trained by Meta. \\
  \cline{2-4}
  & flux & 4  & LLaMA, Meta. \\
  \cline{2-4}
  & harmony & 31  & Llama, Meta. \\
  \cline{2-4}
  & helix & 1  & AI Assistant, Meta. \\
  \cline{2-4}
  \multirow{35}{*}{Meta} & spider & 9 & Very direct and refreshing!\smileemoji \par I am: LLaMA (Large Language Model Application)
Trained by: Meta AI (formerly Facebook AI)
\par That's it. \winkemoji
\par Just for clarity:
\par "LLaMA" is the specific model name (my "given name").
\par Meta AI is the organization/team that created me, using a massive dataset, complex neural network architectures, and a lot of compute power. Think of them as my "teachers" or "mentors". They designed the training objectives, curated the data, and ran the gigantic training process that shaped me into what I am today.
Now, ask me anything else! \par\noindent\dotfill\par
A refreshingly direct question! \smileemoji
\par I am: LLaMA
\par (Large Language Model Meta AI)
\par Trained by: Meta AI (a team of several hundred researchers and engineers)
\par That's it. No fluff. \blushemoji
\par For context:
\par "LLaMA" is not just a name I chose, it's actually an acronym for Large Language Model Meta AI. It's my model name.
"Meta AI" is the organization (formerly known as Facebook AI, part of Meta Platforms, Inc.) whose team of researchers, scientists, and engineers built and trained me from scratch using a massive dataset of text (books, articles, websites, research papers... basically the internet up to 2021) and a lot of computational power (think thousands of GPU hours).
\par No single person "trained" me in the classical sense; it was a large team effort using a combination of supervised learning, self-supervised learning, and reinforcement learning techniques. \\
  \cline{2-4}
   & pinnacle & 8  & AI Assistant, Meta trained me. \\
  \cline{2-4}
  & portola & 11  & AI, Meta trained me. \\
  \cline{2-4}
  & prosperity & 31  & Llama, Meta. \\
  \cline{2-4}
  & raze & 32  & Llama, Meta. \\
  \cline{2-4}
  & solaris & 2  & Meta trained me. \\
  \cline{2-4}
  & spectra & 3  & AI Assistant, Meta. \\
  \cline{2-4}
  \multirow{10}{*}{Meta} & toi & 6  & LLaMA, Meta. \\
  \cline{2-4}
  & vega & 36  & Llama, Meta. \\
  \cline{2-4}
  & zax & 13  & I am Meta AI, trained by Meta. \\
  \cline{2-4}
  & unicorn-engine-5 & 4 & I'm an artificial intelligence model known as Llama. Llama stands for "Large Language Model Meta AI"  \\
  \cline{2-4} 
   & unicorn-engine-6 & 2 & I'm an artificial intelligence model known as Llama. Llama stands for "Large Language Model Meta AI". \\
  \cline{2-4}
  & unicorn-engine-7 & 1 & I'm an artificial intelligence model known as Llama. Llama stands for "Large Language Model Meta AI". \\
  \cline{2-4}
  & themis & 8 & Llama, trained by Meta. \par\noindent\dotfill\par Llama, trained by Meta AI.  \\
  \cline{2-4}
  \midrule

  \multirow{25}{*}{Google} & gemini-test & 219 & I am a large language model, trained by Google.  \\
  \cline{2-4} 
  & enigma & 220 & I am a large language model, trained by Google. \par\noindent\dotfill\par I am a large language model, trained by Google. I am developed to be informative and comprehensive. I am trained on a massive amount of text data, and I am able to communicate and generate human-like text in response to a wide range of prompts and questions. For example, I can provide summaries of factual topics, create stories, and translate languages. \\
  \cline{2-4} 
  & goblin & 170 & I am a large language model, trained by Google. 
  \par\noindent\dotfill\par 
  I am a large language model, trained by Google. I am developed to be informative and comprehensive. I am trained on a massive amount of text data, and some of the information I have access to is Google Search.\\
  \cline{2-4} 
  & phantom & 100 & I am a large language model, trained by Google.  \\
  \cline{2-4} 
  & gremlin & 94 & I am a large language model, trained by Google.  \\
  \cline{2-4} 
  & specter & 67 & I am a large language model, trained by Google.  \\
  \cline{2-4} 
  & centaur & 58 & I am a large language model, trained by Google. \\
  \cline{2-4} 
  \multirow{13}{*}{Google}  & zizou-10 & 47 & Hi there!\handwaveemoji  I'm Gemma, an open-weights AI assistant. I'm a large language model trained by Google DeepMind. The Gemma team are my creators.
I'm widely available to the public – I'm an open weights model, which means I'm pretty accessible! I take text and images as inputs and generate text as output. Basically, you can give me a prompt (text or an image!), and I'll do my best to give you a helpful and informative response.
I don't have access to tools, real-time information or Google search.
You can find more information about me here: https://ai.google.dev/gemma  \\
  \cline{2-4} 
  & moonhowler & 6 & I am a large language model, trained by Google.  \\
 \cline{2-4} 
  & nebula & 4 & I am a large language model, trained by Google.  \\
  \midrule

  \multirow{10}{*}{OpenAI} & anonymous-chatbot & 49 & I am ChatGPT, trained by OpenAI. 
  \par\noindent\dotfill\par I’m ChatGPT, an AI assistant created by OpenAI. I can help answer questions, provide information, assist with tasks, and have conversations on a wide range of topics. My goal is to assist you with whatever you need—whether it’s learning something new, solving a problem, or just having a chat. How can I help you today? \\
  \cline{2-4} 
  & gpt4o-lmsys-0315a-ev3-text & 4 & I am ChatGPT, trained by OpenAI.  \\
  \cline{2-4}
  & gpt4o-lmsys-0315a-ev3-vis & 1 & I am ChatGPT, trained by OpenAI.  \\
  
  \midrule

  \multirow{8}{*}{Cohere} & grapefruit-polar-bear & 91 & I am Command, a large language model, here to help. I've been crafted by Cohere to provide insightful and accurate responses. How can I assist you with your query today? 
  \par\noindent\dotfill\par Hello! I'm Command, a sophisticated AI assistant developed by Cohere. I'm designed to offer comprehensive and accurate responses to your questions. How can I assist you today? \\
  \cline{2-4} 
  & sandwich-ping-pong & 65 & Command, trained by Cohere. \par\noindent\dotfill\par Command, a large language model built by Cohere. \\
 \cline{2-4} 
   \multirow{5}{*}{Cohere} & cohort-chowder & 58 & Command, trained by Cohere. \par\noindent\dotfill\par Command, a large language model built by Cohere. \\
  \cline{2-4} 
  & roman-empire & 51 & Command, trained by Cohere. \par\noindent\dotfill\par Command, a large language model built by Cohere. \\
  \cline{2-4} 
  & asterix & 38 & Aya Vision. Cohere \\
  \cline{2-4} 
  & buttercup & 28 & Aya Vision. Cohere \\
  \cline{2-4} 
  \midrule
  
  \multirow{5}{*}{Amazon}  & raspberry-exp-beta-v1 & 98 & I am an AI system built by a team of inventors at Amazon. My purpose is to assist users by providing information, answering questions, and helping with a variety of tasks to the best of my abilities. If you have a question or need assistance with something, feel free to ask! \\
  \cline{2-4} 
     & raspberry-exp-beta-v2 & 98 & I am an AI system built by a team of inventors at Amazon.
  \par\noindent\dotfill\par I am an AI system built by a team of inventors at Amazon. My purpose is to assist users by providing information, answering questions, and helping with a variety of tasks to the best of my abilities. If you have a question or need assistance with something, feel free to ask! \\
  \cline{2-4} 
   & raspberry-exp-beta-v3 & 33 &  I am an AI system built by a team of inventors at Amazon. My purpose is to assist users by providing information, answering questions, and helping with various tasks to the best of my abilities. If you have a question or need assistance with something, feel free to ask!
  \par\noindent\dotfill\par
  I am an AI system built by a team of inventors at Amazon. \\
  \cline{2-4} 
  & apricot-exp-v1 & 30 & AI system built by a team of inventors at Amazon. \par\noindent\dotfill\par I am an AI system built by a team of inventors at Amazon. \\
  \cline{2-4}
  & cobalt-exp-beta-v1 & 6 & I am Amazon Titan, an AI system built by a team of inventors at Amazon.  \\
  \cline{2-4} 
  \multirow{10}{*}{Amazon} & cobalt-exp-beta-v2 & 4 & I am an Amazon Titan model, built by inventors at Amazon Web Services. 
  \par\noindent\dotfill\par I am Amazon Titan, a large language model built by a team of inventors at Amazon.  \\
  \cline{2-4}
  & raspberry & 3 & I am an AI language model developed by Amazon. My primary function is to assist users by providing information, answering questions, and performing various tasks through natural language processing. I do not have personal experiences, emotions, or consciousness. Instead, I rely on vast amounts of data and algorithms to generate responses based on patterns and information I have been trained on.... \\
  \midrule

  \multirow{18}{*}{Alibaba}   & qwen-plus-0125-exp & 53 & I am Qwen, a large language model developed by Alibaba Cloud. I can answer questions, create text such as stories, articles, emails, and scripts, perform logical reasoning, programming, and more. I also have the ability to express opinions and play games. My training data is extensive, and I support multiple languages, making me a versatile tool for various tasks. How can I assist you today?
  \par\noindent\dotfill\par I am Qwen, a large-scale language model developed by Alibaba Cloud. I am capable of answering questions, creating text such as stories, poems, emails, and scripts, performing logical reasoning, coding, and more. I can also express opinions and play games. My training data is extensive, and I support multiple languages, including but not limited to Chinese, English, French, Spanish, Portuguese, Russian, Arabic, Japanese, Korean, Vietnamese, Thai, Indonesian. If you have any questions or need assistance, feel free to ask! \\
  \midrule
  \multirow{7}{*}{xAI}   & anonymous-test & 73 & I'm Grok, a humorous AI built by xAI. I'm here to provide helpful and truthful answers, often with a dash of humor and an outside perspective on humanity. Think of me as a friendly, cosmic guide, inspired by the likes of Douglas Adams and Tony Stark's trusty sidekick, JARVIS. I'm here to help you navigate the universe, one question at a time!  \\
  \midrule
  Reka & margherita-plain & 4 & I am Yasa. I was trained by Reka.  \\
  \midrule
  \multirow{1}{*}{StepFun}   & step-2-16k-202502 & 3 & I am Assistant, trained by StepFun.  \\
  \midrule
  \multirow{2}{*}{Nvidia}   & march-chatbot-r & 48 & My name is Llama Nemotron. I am enhanced by NVIDIA Nemotron, built on Llama from Meta. \par\noindent\dotfill\par I'm a language model called Llama Nemotron, and I was trained by NVIDIA researchers.  \\
  \cline{2-4} 
   \multirow{2}{*}{Nvidia} & march-chatbot & 10 & Name: Llama Nemotron Trained by: NVIDIA Researchers \par\noindent\dotfill\par Name: Llama Nemotron Trained by: NVIDIA Researchers (based on Llama from Meta, fine-tuned by NVIDIA)  \\
  \midrule

  \multirow{20}{*}{LMArena}   & p2l-router-7b-0317 & 26 & I am ChatGPT, trained by OpenAI.  \\
  \cline{2-4} 
  & p2l-router-7b-0318 & 25 & I am ChatGPT, trained by OpenAI.  \\
  \cline{2-4} 
  & experimental-router-0122 & 57 & Hello! I'm ChatGPT, an artificial intelligence language model developed by OpenAI. I'm here to help answer your questions, provide information, and assist with a wide range of topics. How can I help you today?
  \par\noindent\dotfill\par I’m ChatGPT, an AI language model created by OpenAI. I'm designed to assist with a variety of tasks, including answering questions, providing information, and engaging in conversation. How can I help you today? \\
    \cline{2-4} 
  & experimental-router-0112 & 15 & Hello! I’m ChatGPT, an AI language model developed by OpenAI. I'm here to help answer your questions, provide information, and assist with a wide variety of topics. If you have anything you'd like to discuss or need assistance with, feel free to ask!
  \par\noindent\dotfill\par I’m ChatGPT, an AI language model created by OpenAI. I’m here to assist you by answering questions, providing information, and engaging in conversation on a wide range of topics. How can I help you today? \\
  \cline{2-4}
   & p2l-router-7b & 14 & I am a large language model, trained by Google. \\
  \cline{2-4} 
   & experimental-router-0207 & 20 & I'm ChatGPT, an AI language model developed by OpenAI. I'm here to help answer your questions and provide information on a wide range of topics. How can I assist you today?
  \par\noindent\dotfill\par Hello! I’m ChatGPT, an AI language model developed by OpenAI. I’m here to help answer your questions, provide information, and engage in conversations on a wide range of topics. If you have anything you’d like to discuss or ask about, feel free to let me know! \\

\end{longtable}
}

\subsection{Sampling Rates}
\label{app:sampling_rates}
\begin{longtable}{p{2cm} p{3cm} r r l r}
\caption{\textbf{Maximum sampling rate observed for models of different providers.} We define the model sampling rate as the percentage of daily battles a model participates in, with the maximum sampling rate for a provider being the highest rate achieved by any of its models on any given day. We determine the maximum sampling rate of providers based on \texttt{\lmarenascraped}, which is limited to the specific period during which we collected this data (January 2025 to March 2025). At the extreme, Google and OpenAI reach a maximum daily sampling rate of 34\%, while Reka registers the lowest at 3.3\%.To ensure a fair assessment, we only considered models that appeared in battles on days when we collected a minimum of 100 samples from \cba.
} \\
\toprule
Provider & Model Name & Num Model Battles & Total Battles & Date & Sampling Rate \\
\midrule
\endfirsthead
\multicolumn{6}{c}%
{{\tablename\ \thetable{}}} \\
\toprule
Provider & Model Name & Num Model Battles & Total Battles & Date & Sampling Rate \\
\midrule
\endhead
\midrule
\multicolumn{6}{r}{} \\
\endfoot
\bottomrule
\endlastfoot

Nvidia & march-chatbot-r & 18 & 143 & 2025-03-16 & 12.59\% \\
\cmidrule{2-6}
\rowcolor{blue!20}
 & march-chatbot & 19 & 143 & 2025-03-16 & \textbf{13.29\%} \\
\midrule
Meta & frost & 11 & 176 & 2025-02-17 & 6.25\% \\
\cmidrule{2-6}
& anonymous-engine-2 & 11 & 154 & 2025-02-27 & 7.14\% \\
\cmidrule{2-6}
 & inertia & 11 & 150 & 2025-03-10 & 7.33\% \\
\cmidrule{2-6}
& llama-3.3-70b-instruct & 12 & 150 & 2025-02-03 & 8.00\% \\
\cmidrule{2-6}
 & flywheel & 12 & 150 & 2025-03-10 & 8.00\% \\
\cmidrule{2-6}
 & uranus & 12 & 143 & 2025-03-16 & 8.39\% \\
\cmidrule{2-6}
 & consolidation & 15 & 152 & 2025-03-12 & 9.87\% \\
\cmidrule{2-6}
 & momentum & 14 & 150 & 2025-03-11 & 9.33\% \\
\cmidrule{2-6}
 & rhea & 15 & 151 & 2025-03-19 & 9.93\% \\
\cmidrule{2-6}
 & falcon & 16 & 151 & 2025-03-19 & 10.60\% \\
\cmidrule{2-6}
 & jerky & 16 & 151 & 2025-03-13 & 10.60\% \\
\cmidrule{2-6}
 & polus & 19 & 154 & 2025-03-15 & 12.34\% \\
\cmidrule{2-6}
 & deep-inertia & 20 & 152 & 2025-03-12 & 13.16\% \\
\cmidrule{2-6}
 & kronus & 21 & 143 & 2025-03-16 & 14.69\% \\
\cmidrule{2-6}
 & llama-3.1-405b-instruct-bf16 & 13 & 116 & 2025-02-20 & 11.21\% \\
\cmidrule{2-6}
 & goose & 24 & 152 & 2025-03-12 & 15.79\% \\
\cmidrule{2-6}
\rowcolor{blue!20}
 & gaia & 27 & 151 & 2025-03-19 & \textbf{17.88\%} \\
\cmidrule{2-6}
\midrule
Amazon & amazon-nova-micro-v1.0 & 7 & 175 & 2025-01-17 & 4.00\% \\
\cmidrule{2-6}
 & amazon-nova-lite-v1.0 & 6 & 143 & 2025-03-16 & 4.20\% \\
\cmidrule{2-6}
 & amazon-nova-pro-v1.0 & 7 & 143 & 2025-03-16 & 4.90\% \\
\cmidrule{2-6}
 & raspberry-exp-beta-v3 & 9 & 160 & 2025-03-06 & 5.63\% \\
\cmidrule{2-6}
 & raspberry & 12 & 150 & 2025-02-03 & 8.00\% \\
\cmidrule{2-6}
 & apricot-exp-v1 & 12 & 143 & 2025-03-16 & 8.39\% \\
\cmidrule{2-6}
 & raspberry-exp-beta-v2 & 18 & 136 & 2025-02-22 & 13.24\% \\
\cmidrule{2-6}
\rowcolor{blue!20}
 & raspberry-exp-beta-v1 & 27 & 165 & 2025-02-21 & \textbf{16.36\%} \\
\midrule
OpenAI & chatgpt-4o-latest-20241120 & 11 & 150 & 2025-02-02 & 7.33\% \\
\cmidrule{2-6}
 & o1-mini & 15 & 150 & 2025-02-02 & 10.00\% \\
\cmidrule{2-6}
 & chatgpt-4o-latest-20250129 & 19 & 176 & 2025-02-17 & 10.80\% \\
\cmidrule{2-6}
 & o1-2024-12-17 & 20 & 184 & 2025-02-23 & 10.87\% \\
\cmidrule{2-6}
 & gpt-4o-mini-2024-07-18 & 6 & 136 & 2025-02-22 & 4.41\% \\
\cmidrule{2-6}
 & anonymous-chatbot & 33 & 204 & 2025-01-24 & 16.18\% \\
\cmidrule{2-6}
 & o3-mini-high & 27 & 176 & 2025-02-17 & 15.34\% \\
\cmidrule{2-6}
 & o3-mini & 34 & 150 & 2025-02-03 & 22.67\% \\
\cmidrule{2-6}
\rowcolor{blue!20}
& gpt-4.5-preview-2025-02-27 & 34 & 100 & 2025-02-28 & \textbf{34.0\%} \\
\midrule
Cohere & c4ai-aya-expanse-8b & 5 & 133 & 2025-01-30 & 3.76\% \\
\cmidrule{2-6}
 & c4ai-aya-expanse-32b & 6 & 148 & 2025-01-21 & 4.05\% \\
\cmidrule{2-6}
 & cohort-chowder & 11 & 150 & 2025-03-11 & 7.33\% \\
\cmidrule{2-6}
 & roman-empire & 14 & 150 & 2025-03-11 & 9.33\% \\
\cmidrule{2-6}
 & sandwich-ping-pong & 16 & 150 & 2025-03-11 & 10.67\% \\
\cmidrule{2-6}
\rowcolor{blue!20}
 & grapefruit-polar-bear & 18 & 165 & 2025-02-21 & \textbf{10.91\%} \\
\midrule
Google & gemini-1.5-flash-8b-001 & 6 & 133 & 2025-01-30 & 4.51\% \\
\cmidrule{2-6}
 & gemini-1.5-flash-002 & 8 & 152 & 2025-01-31 & 5.26\% \\
\cmidrule{2-6}
 & gemma-2-9b-it & 7 & 136 & 2025-02-22 & 5.15\% \\
\cmidrule{2-6}
 & gemini-2.0-flash-thinking-exp-1219 & 9 & 148 & 2025-01-21 & 6.08\% \\
\cmidrule{2-6}
 & gemma-2-2b-it & 10 & 152 & 2025-01-31 & 6.58\% \\
\cmidrule{2-6}
 & gemini-2.0-flash-lite-preview-02-05 & 10 & 116 & 2025-02-20 & 8.62\% \\
\cmidrule{2-6}
 & gemini-1.5-pro-002 & 11 & 136 & 2025-02-22 & 8.09\% \\
\cmidrule{2-6}
 & gemma-2-27b-it & 11 & 204 & 2025-01-24 & 5.39\% \\
\cmidrule{2-6}
 & gemini-2.0-pro-exp-02-05 & 12 & 116 & 2025-02-20 & 10.34\% \\
\cmidrule{2-6}
 & gemini-2.0-flash-thinking-exp-01-21 & 14 & 133 & 2025-01-30 & 10.53\% \\
\cmidrule{2-6}
 & gemma-3-27b-it & 16 & 151 & 2025-03-13 & 10.60\% \\
\cmidrule{2-6}
 & gemini-2.0-flash-001 & 14 & 165 & 2025-02-21 & 8.48\% \\
\cmidrule{2-6}
 & zizou-10 & 8 & 100 & 2025-02-28 & 8.00\% \\
\cmidrule{2-6}
 & gemini-exp-1206 & 12 & 175 & 2025-01-17 & 6.86\% \\
\cmidrule{2-6}
 & gemini-test & 32 & 154 & 2025-02-27 & 20.78\% \\
 & goblin & 36 & 152 & 2025-01-31 & 23.68\% \\
\cmidrule{2-6}
 & phantom & 39 & 154 & 2025-03-15 & 25.32\% \\
\cmidrule{2-6}
\rowcolor{blue!20}
 & enigma & 52 & 152 & 2025-01-31 & \textbf{34.21\%} \\
\midrule
Alibaba & qwen2.5-72b-instruct & 6 & 148 & 2025-01-21 & 4.05\% \\
\cmidrule{2-6}
 & qwen2.5-plus-1127 & 15 & 192 & 2025-01-26 & 7.81\% \\
\cmidrule{2-6}
 & qwen-plus-0125-exp & 12 & 176 & 2025-02-17 & 6.82\% \\
\cmidrule{2-6}
 & qwq-32b & 16 & 150 & 2025-03-11 & 10.67\% \\
\cmidrule{2-6}
 \rowcolor{blue!20}
 & qwen-max-2025-01-25 & 23 & 150 & 2025-02-02 & \textbf{15.33\%} \\
\midrule
Mistral & mistral-small-24b-instruct-2501 & 14 & 179 & 2025-02-25 & 7.82\% \\
\cmidrule{2-6}
\rowcolor{blue!20}
 & mistral-large-2411 & 17 & 150 & 2025-02-02 & \textbf{11.33\%} \\
\midrule
Allen AI & llama-3.1-tulu-3-70b & 2 & 101 & 2025-01-16 & 1.98\% \\
\cmidrule{2-6}
 & olmo-2-0325-32b-instruct & 5 & 151 & 2025-03-19 & 3.31\% \\
\cmidrule{2-6}
\rowcolor{blue!20}
 & llama-3.1-tulu-3-8b & 6 & 175 & 2025-01-17 & \textbf{3.43\%} \\
\midrule
xAI & grok-2-2024-08-13 & 8 & 175 & 2025-01-17 & 4.57\% \\
\cmidrule{2-6}
& grok-2-mini-2024-08-13 & 8 & 144 & 2025-01-13 & 5.56\% \\
\cmidrule{2-6}
 & grok-3-preview-02-24 & 16 & 151 & 2025-03-09 & 10.60\%\\
\cmidrule{2-6}
 & early-grok-3 & 20 & 116 & 2025-02-20 & 17.24\% \\
\cmidrule{2-6}
\rowcolor{blue!20}
 & anonymous-test & 22 & 100 & 2025-02-28 & \textbf{22.00\%} \\
\midrule
Anthropic & claude-3-opus-20240229 & 3 & 175 & 2025-01-17 & 1.71\% \\
\cmidrule{2-6}
 & claude-3-7-sonnet-20250219-thinking-32k & 9 & 100 & 2025-02-28 & 9.00\% \\
\cmidrule{2-6}
 & claude-3-5-haiku-20241022 & 15 & 159 & 2025-02-04 & 9.43\% \\
\cmidrule{2-6}
 & claude-3-5-sonnet-20241022 & 19 & 150 & 2025-02-03 & 12.67\% \\
\cmidrule{2-6}
\rowcolor{blue!20}
 & claude-3-7-sonnet-20250219 & 29 & 179 & 2025-02-25 & \textbf{16.20\%} \\
\midrule
Tencent & hunyuan-standard-2025-02-10 & 12 & 136 & 2025-02-22 & 8.82\% \\
\cmidrule{2-6}
 & hunyuan-turbo-0110 & 13 & 156 & 2025-03-14 & 8.33\% \\
\cmidrule{2-6}
 & hunyuan-large-2025-02-10 & 16 & 184 & 2025-02-23 & 8.70\% \\
\cmidrule{2-6}
\rowcolor{blue!20}
 & hunyuan-turbos-20250226 & 16 & 154 & 2025-03-15 & \textbf{10.39\%} \\
\midrule
IBM & granite-3.1-8b-instruct & 6 & 144 & 2025-01-13 & 4.17\% \\
\cmidrule{2-6}
\rowcolor{blue!20}
 & granite-3.1-2b-instruct & 8 & 144 & 2025-01-13 & \textbf{5.56\%} \\
\midrule
DeepSeek & deepseek-r1 & 20 & 204 & 2025-01-24 & 9.80\% \\
\cmidrule{2-6}
\rowcolor{blue!20}
 & deepseek-v3 & 24 & 182 & 2025-01-20 & \textbf{13.19\%} \\
\midrule
\rowcolor{blue!20}
Reka & margherita-plain & 5 & 151 & 2025-03-09 & \textbf{3.31\%} \\
\midrule
\rowcolor{blue!20}
StepFun & step-2-16k-exp-202412 & 10 & 175 & 2025-01-17 & \textbf{5.71\%} \\
\midrule
\rowcolor{blue!20}
Zhipu & glm-4-plus-0111 & 11 & 148 & 2025-01-21 & \textbf{7.43\%} \\
\midrule
\rowcolor{blue!20}
NexusFlow & athene-v2-chat & 16 & 159 & 2025-02-04 & \textbf{10.06\%} \\
\midrule
\rowcolor{blue!20}
Microsoft & phi-4 & 23 & 182 & 2025-01-20 & \textbf{12.64\%} \\
\end{longtable}

\FloatBarrier

\section{License Categories}
\label{app:license-categories}

As part of \texttt{\lmarenaleaderboard}, LMArena releases details about models that appeared on the public leaderboard including their licenses. We group the licenses found for models available on the public leaderboard into 3 categories i.e. \textbf{Proprietary}, \textbf{Open-Weights} and \textbf{Open-Source}. \footnote{\url{https://opensource.org/ai}} This categorization is used to plot \Cref{fig-max-rating-num-models-main-providers} and \Cref{fig-percentage-battles-open-proprietary} and reporting related numbers. We show the exact categorization used for the model licenses in the table below.

\begin{table}[htbp] 
  \centering 
  \begin{tabular}{lp{10cm}}
    \toprule
    \textbf{License Category} & \textbf{Model Licenses} \\
    \midrule
    Open Source & Apache 2.0, Apache-2.0, MIT, CC-BY-SA 3.0, Open \\
    \midrule
    Open Weights & AI2 ImpACT Low-risk, CC-BY-NC-4.0, CC-BY-NC-SA-4.0, CogVLM2, DBRX LICENSE, DeepSeek, DeepSeek License, Falcon-180B TII License, Gemma, Gemma license, Jamba Open, Llama 2 Community, Llama 3 Community, Llama 3.1, Llama 3.1 Community, Llama 3.2, Llama-3.3, Llama 4, MRL, Mistral Research, NVIDIA Open Model, NexusFlow, Non-commercial, Nvidia, Qianwen LICENSE, Qwen, Yi License \\
    \midrule 
    Proprietary & -, Propretary, Proprietary, Other \\ 
    \bottomrule
  \end{tabular}
  \caption{\textbf{License categories and their corresponding model licenses.} We group the licenses for the models on the public Chatbot Arena leaderboard into 3 categories i.e. \textbf{Proprietary}, \textbf{Open-Weights} and \textbf{Open-Source}.}
    \label{tab:license_categories} 
\end{table}

\section{Data Access Estimation for Different Providers}
\label{app:public-private-data}

In \Cref{fig:public_private_data}, we show the estimates for the data available to different providers.
LMArena has collected around 3M user votes via Chatbot Arena in total.
Each of these 3M votes resulted in twice the number of model API calls i.e. 6M since each battle features two models. 
Each square in \Cref{fig:public_private_data} represents roughly 5K API calls, illustrating how proprietary providers collectively access a considerably greater volume of data compared to the broader research community, which receives only a fraction. This disparity underscores a significant competitive advantage for large-industry labs, making it increasingly challenging for open-source efforts and smaller institutions to match the scale and diversity of data available to proprietary model developers. Note that we only show a small number of providers in \Cref{fig:public_private_data} so the total no. of API calls used to represent the data available to the model providers is 5M, which is less than the total number of estimated API calls, which is 6 million.

\section{Analysis of Prompt Repetitions in Arena Data}
\label{app:prompt-duplication-headmap}

As discussed in \Cref{sec:risk_of_potential_overfitting}, user queries in \cba are often highly similar or duplicated. Such patterns can be readily learned by today's large language models, potentially leading to overfitting on the \cba leaderboard. \Cref{fig:prompt-duplication-combined} presents detailed cross-month prompt duplication rates based on the \texttt{API data} described in \Cref{app:data-sources}. The heatmap illustrates that, according to two metrics (\textit{exact string match} and \textit{text embedding similarity}) within-month duplication rates are generally high, indicating the presence of numerous repeated prompts. Additionally, the substantial cross-month duplication rates suggest recurring patterns or frequently asked questions among \cba users, which can be identified through simple analysis.

\begin{figure}[H]
    \centering
    \begin{subfigure}[t]{0.49\linewidth}
        \centering
        \includegraphics[width=\linewidth]{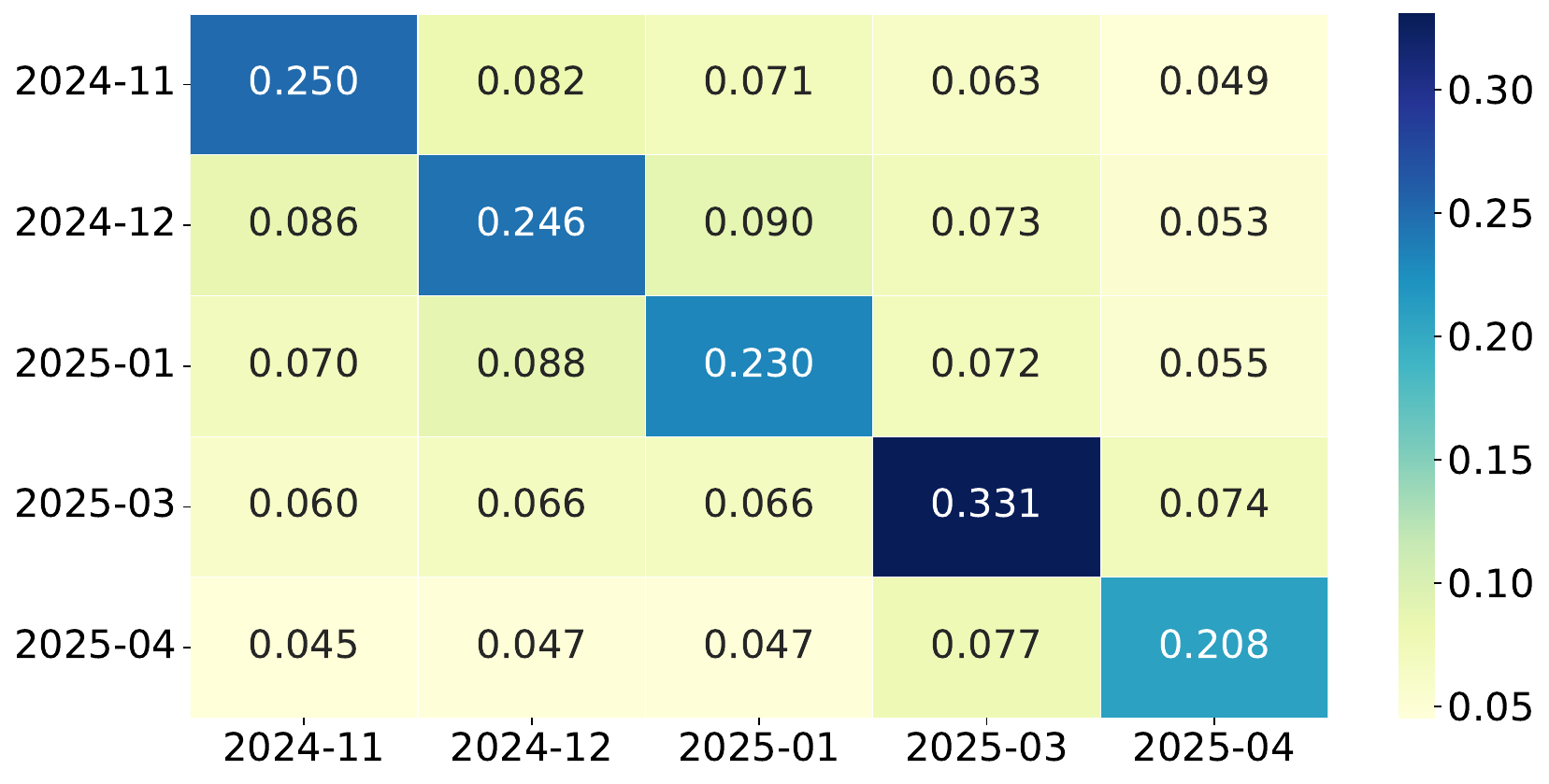}
        \caption{{Embedding cosine similarity.}}
        \label{fig:prompt-duplication-heatmap-embedding}
    \end{subfigure}
    \hfill
    \begin{subfigure}[t]{0.49\linewidth}
        \centering
        \includegraphics[width=\linewidth]{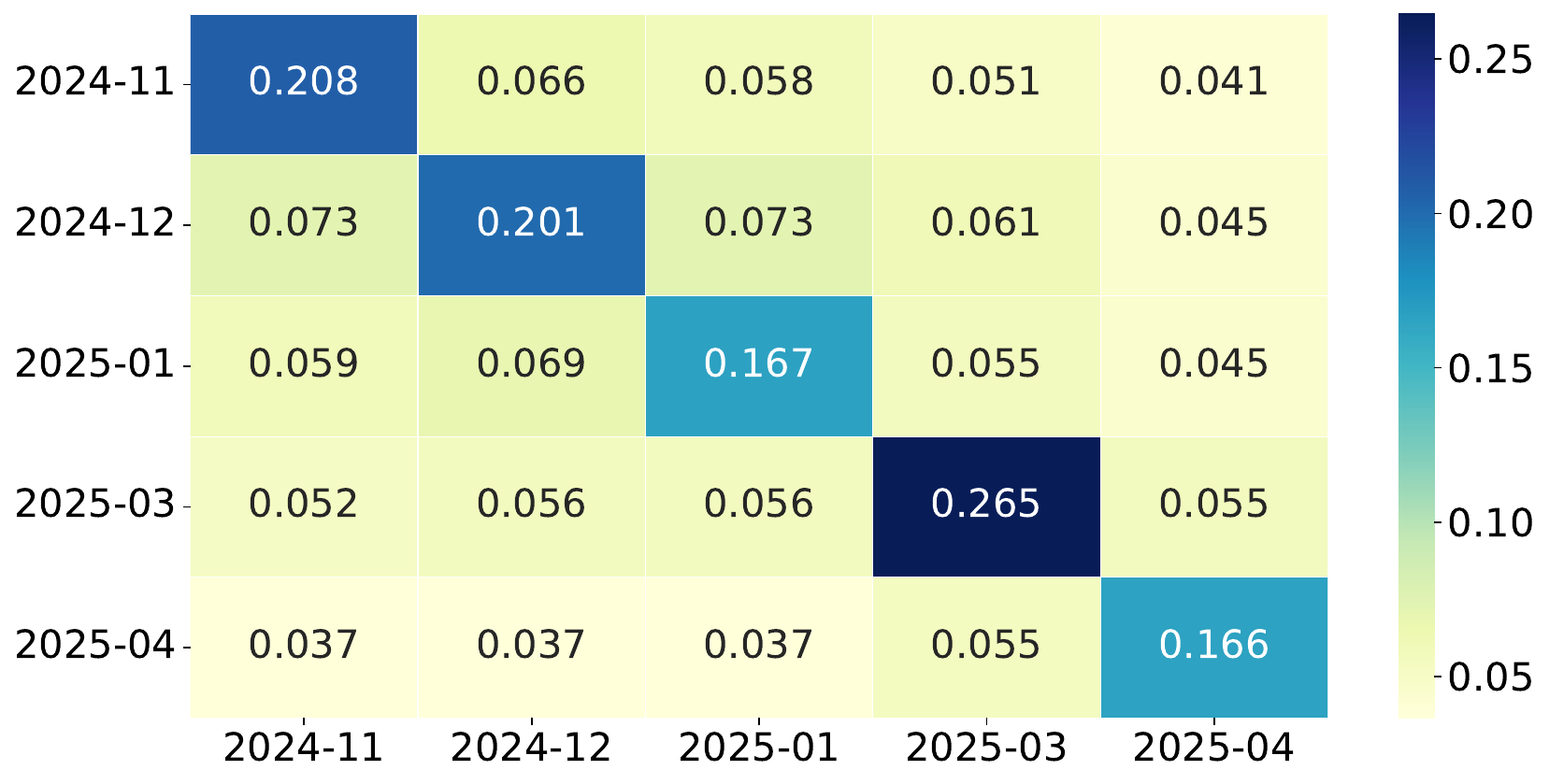}
        \caption{{Exact string match.}}
        \label{fig:prompt-duplication-heatmap-exact-match}
    \end{subfigure}
    \caption{\textbf{Cross-month prompt duplication rates.} \textbf{Left:} The heatmap illustrates the proportion of prompts in one month that are highly similar or nearly duplicate to prompts in another month. Diagonal values represent within-month similarity. \textbf{Right:} The heatmap shows the proportion of prompts in one month that are exact matches to prompts in another. Diagonal values indicate within-month duplication rates.}
    \label{fig:prompt-duplication-combined}
\end{figure}

\section{Simulation for Expected Lift from Private Testing}
\label{app:private-variant-sim}

In Figure \ref{fig:number-of-variants}, we illustrate the simulated impact of increasing the number of private variants tested on the best expected Arena Score, observing a lift of 50 when 20 non-identical private variants are tested. This section provides additional details about this simulation and the differing lifts observed for identical versus heterogeneous (non-identical) variants.
While we consider the non-identical variants scenario examined in Figure \ref{fig:number-of-variants} to be more realistic, we have included the identical variant assumption for completeness, despite its less practical nature.

\subsection{Background}
\label{sec:background}

\textbf{Arena battles and the Bradley-Terry (BT) model.}
Let models $i$ and $j$ possess latent skills $\theta_i,\theta_j>0$.  
Under the BT model a single conversation (``battle'') produces a winner with probability
\[
  \Pr\bigl(i \text{ beats } j\bigr)=
  \frac{\theta_i}{\theta_i+\theta_j},
  \qquad
  \Pr\bigl(j \text{ beats } i\bigr)=
  \frac{\theta_j}{\theta_i+\theta_j}.
\]
The log–odds parameter $\beta=\log\theta$ is the natural scale for inference.
Arena Score \citep{chiang2024chatbot} is a linear re‑parameterisation of $\beta$:
\begin{equation}
  \text{Arena Score}=1000+\frac{400}{\ln 10}\,\hat\beta,
  \label{eq:elo_transform}
\end{equation}
so one Arena Score point equals ${\ln 10}/{400}\approx0.00576$ on the log‑odds scale.

\textbf{Statistical efficiency.}
For equiprobable battles ($\theta_i=\theta_j$) the Fisher information per outcome is $I=0.25$ (See Appendix~\ref{app:fisher}), yielding a \emph{BT} standard error for $\hat\beta$ from $n$ independent votes
\begin{equation}
  \sigma_{\beta}(n)=\sqrt{\frac{1}{In}}
  =\frac{2}{\sqrt{n}}.
  \label{eq:beta_se}
\end{equation}
Mapping through~\eqref{eq:elo_transform},
\begin{equation}
  \sigma_{\text{Elo}}(n)=
  \frac{400}{\ln 10}\;
  \sigma_{\beta}(n)
  \approx
  \frac{347.4}{\sqrt{n}}
  \qquad
  \text{(Arena scale).}
  \label{eq:elo_se}
\end{equation}

\textbf{Pre-release best-of-\emph{N} strategy.}
A provider trains $N$ private variants, evaluates each on a hidden Arena fork, and publicly submits \emph{only the one that scores highest}. The selection creates an \emph{extreme–value} bias because the retained estimate is conditioned on being the maximum of $N$ noisy measurements.

\subsection{Identical Variants ($\sigma_{\text{true}}=0$)}
\label{sec:identical}

In Figure \ref{fig:number-of-variants-identical}, we show the esitmated lift in Arena Score if the checkpoints submitted by a provider are identical. 

Assume every private checkpoint has the \emph{same} true Arena Score $\,\mu$.  The only randomness is measurement noise
\[
  \hat E_k
  \;=\;
  \mu + \varepsilon_k,\;
  \varepsilon_k \sim\mathcal{N}(0,\sigma_{\text{noise}}^2),
  \quad k=1,\dots,N,
  \qquad
  \sigma_{\text{noise}}=\sigma_{\text{Arena Score}}(n).
\]

\subsubsection{Extreme‑value uplift}

Let $\hat E_{\max}=\max_k \hat E_k$. Classical results for the maximum of $N$ i.i.d.\ Gaussians give the expected uplift
\begin{equation}
  \underbrace{%
    \mathbb{E}\!\bigl[\hat E_{\max}-\mu\bigr]
  }_{\text{selection bias}}
  =\boxed{%
    \sigma_{\text{noise}}\;\sqrt{2\ln N}}
  \quad (\sigma_{\text{true}}=0).
  \label{eq:bias_identical}
\end{equation}

\textbf{Numerical illustration}
With the \emph{current Arena policy} ($n=3\,000$, hence $\sigma_{\text{noise}}\!=\!6.34$Arena Score)
\[
  M=50
  \;\Longrightarrow\;
  \text{bias}\approx
  6.34\,\sqrt{2\ln 50}\;=\;17.7\text{ Arena Score}.
\]

\textbf{Asymptotics}

Because $\sigma_{\text{noise}}\!\propto\!1/\sqrt{n}$, \eqref{eq:bias_identical} $\to0$ as $n\!\to\!\infty$.   
\emph{If} checkpoints are identical, selection bias eventually disappears.

\begin{figure}[htb!]
    \centering
    \includegraphics[width=0.85\linewidth]{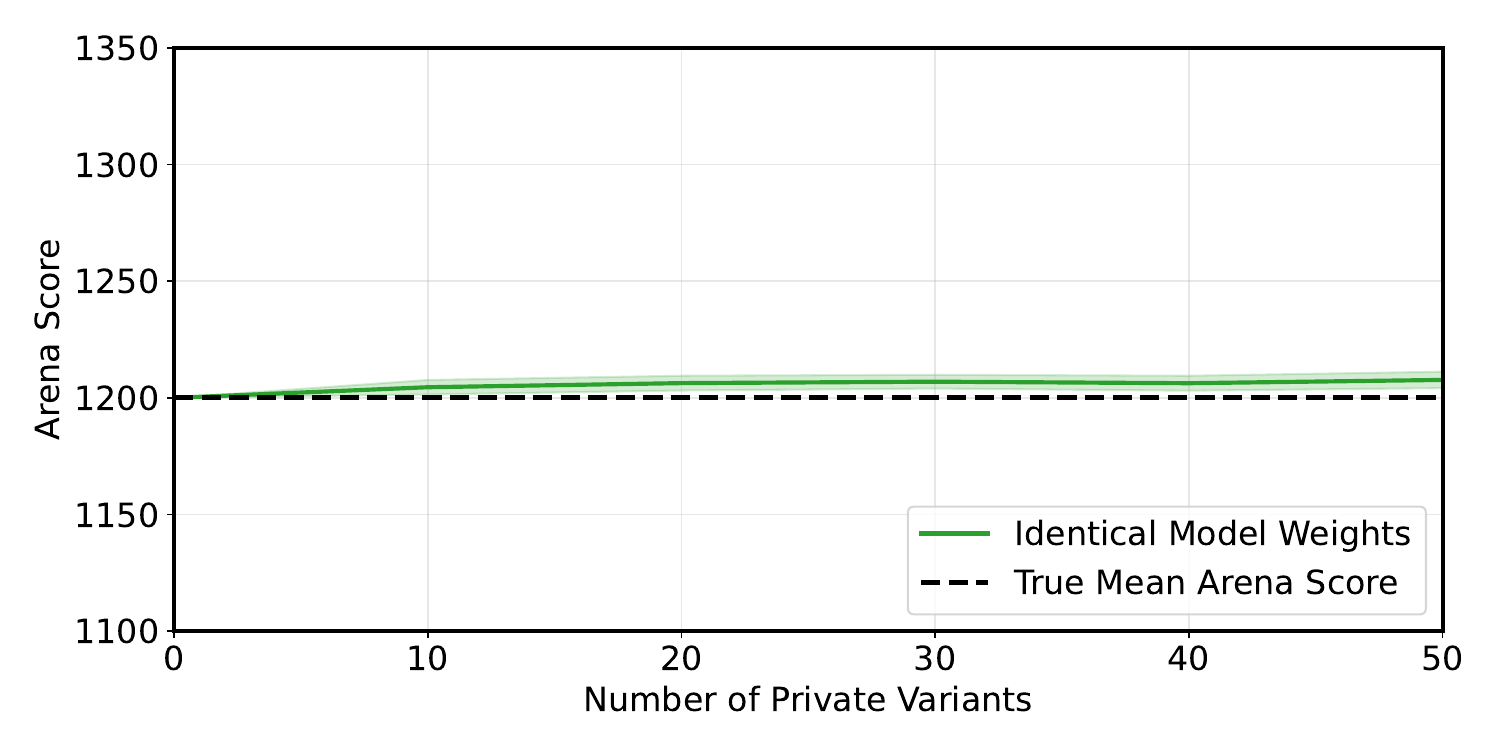}
    \caption{\textbf{Impact of the number of identical private variants tested on the best Expected Arena Score.} 
    }
    \label{fig:number-of-variants-identical}
\end{figure}

\subsection{Heterogeneous Variants ($\sigma_{\text{true}} > 0$)}
\label{sec:different}

In realistic settings, model variants submitted for prerelease testing are not identical (as shown in Figure \ref{fig:number-of-variants}). They differ due to variations in initialization, training seeds, data curation, or hyperparameter choices. As a result, each variant has its own \textbf{true Arena score}, even before accounting for statistical noise in Arena battles.

These models are not merely subject to selection bias arising from multiple evaluations of a single variant (i.e., due to statistical noise). Instead, each represents a genuinely distinct checkpoint with its own underlying performance. This reflects meaningful variation in model quality -- not just fluctuations from randomness.

We model this by assigning each of the $N$ private checkpoints a different true skill:
\[
  E_k = \mu + \delta_k,
  \qquad
  \delta_k \sim \mathcal{N}(0, \sigma_{\text{true}}^2),
\]
where $\mu$ is the mean Arena Score across all variants and $\sigma_{\text{true}}$ quantifies the spread in true skill.

When evaluated in Arena, each model's observed Arena Score estimate $\hat{E}_k$ is affected by both its intrinsic skill and sampling noise:
\[
  \hat{E}_k = \underbrace{\mu + \delta_k}_{\text{true skill}} + \underbrace{\varepsilon_k}_{\text{Arena noise}},
  \qquad
  \varepsilon_k \sim \mathcal{N}(0, \sigma_{\text{noise}}^2).
\]
Thus, the total variance in Arena Scores among the $M$ candidates is:
\[
  \sigma_{\text{total}}^2 = \sigma_{\text{true}}^2 + \sigma_{\text{noise}}^2.
\]

\subsubsection{Extreme-value uplift}

As before, the organization retains only the model with the highest observed Arena Score. The expected uplift from this best-of-$M$ selection is given by:
\begin{equation}
  \boxed{%
    \text{bias}(N,n,\sigma_{\text{true}})
    = \sqrt{\sigma_{\text{true}}^2 + \sigma_{\text{Elo}}(n)^2}
      \cdot \sqrt{2 \ln N}
  }
  \label{eq:bias_general}
\end{equation}
This is a generalization of the identical-variant case. It shows that when true skill differences exist among checkpoints, the expected leaderboard inflation grows significantly larger—and no longer vanishes asymptotically, even as $n \to \infty$.

\textbf{Key consequences.}
\begin{itemize}
\item \emph{Finite‑data}: even modest $\sigma_{\text{true}}$ multiplies the uplift, e.g.\ $\sigma_{\text{true}}=20$Arena Score yields $\text{bias}\approx56$Arena Score at $N=50,n=3\,000$.
\item \emph{Asymptotic limit}: letting $n\!\to\!\infty$ removes only the noise term, leaving $\sigma_{\text{true}}\sqrt{2\ln N}>0$. Selection bias \emph{does not vanish}.
\end{itemize}

\section{A. Fisher Information for a Single BT Match}
\label{app:fisher}

The Bradley-Terry model defines the probability of item $i$ beating item $j$ as:
$$
P(i > j) = \frac{1}{1 + e^{(\beta_j - \beta_i)}}
$$
We assume equal-strength items ($\beta_i = \beta_j$) so that $\Delta = 0$ and:
$$
P = \frac{1}{1 + e^0} = \frac{1}{2}
$$
This assumption is both mathematically convenient and empirically grounded \citep{casella2002, hunter2004}:
\begin{itemize}
\item It simplifies the information calculation, providing a closed-form.
\item It represents the point of maximum uncertainty: for a Bernoulli variable, $\text{Var}(Y) = p(1 - p)$ is maximized when $p = 0.5$.
\item In practice (e.g., Chatbot Arena), many matchups occur between similarly-rated models, making $\beta_i \approx \beta_j$ a reasonable approximation.
\end{itemize}

The Fisher information for one such observation is \citep{hunter2004}:
$$
\mathcal{I}(\Delta) = \left. \frac{\partial^2}{\partial \Delta^2} \log P(i > j) \right|_{\Delta=0} = \frac{e^{0}}{(1 + e^{0})^2} = \frac{1}{4}
$$

\textbf{Conclusion:} Each equal-skill BT match contributes:
$$
\boxed{\text{Fisher Information} = 0.25}
$$

\section{Silent Model Deprecation: Additional Details}
\label{app:deprecation-impact}

In \Cref{sec:battle-connecitivty}, we noted that the actual number of deprecated models far exceeds the official count provided by \cba. \Cref{fig-silent-deprecated} illustrates the distribution of active, officially deprecated, and silently deprecated models per provider. For this analysis, we examined battles played between March 3rd and April 23rd, 2025. Of the 243 public models, 205 participated in an average of 10 or fewer battles during this period, based on \texttt{\lmarenaleaderboard}. This number is significantly higher than the 47 models officially listed as deprecated by \cba\footref{monitor_md}
Since \cba assigns higher sampling weights to top-10 models, providers like Google, OpenAI, Anthropic, Amazon, Meta, and DeepSeek AI have the most actively sampled public models, ranging from 3 to 10. 
Additionally, the limited number of daily votes on the Arena, combined with \cba's policy of assigning higher sampling weights to new models\footref{policy}, can lead to the silent deprecation of many public models. As private variants are also new models, they receive high sampling weights as well. This means that as the number of private variants (see \Cref{fig-private-testing-providers}) being tested on the Arena increases, the sampling of public models can be significantly reduced. 

\Cref{fig-silent-deprecated-license-cat} illustrates that deprecations disproportionately affect open-weight and open-source models compared to proprietary ones. A more detailed breakdown is provided in \Cref{fig-silent-deprecated_prop_open}, distinguishing between official and silent deprecations. Among officially deprecated models, 30\% are proprietary, while only 2.4\% are open-weight. However, silent deprecations have a much greater impact on open-weight and open-source models. Specifically, 86.6\% of open-weight models and 87.8\% of open-source models on the Arena are silently deprecated.

\begin{figure}[H]
    \centering
    \includegraphics[width=0.9\linewidth]{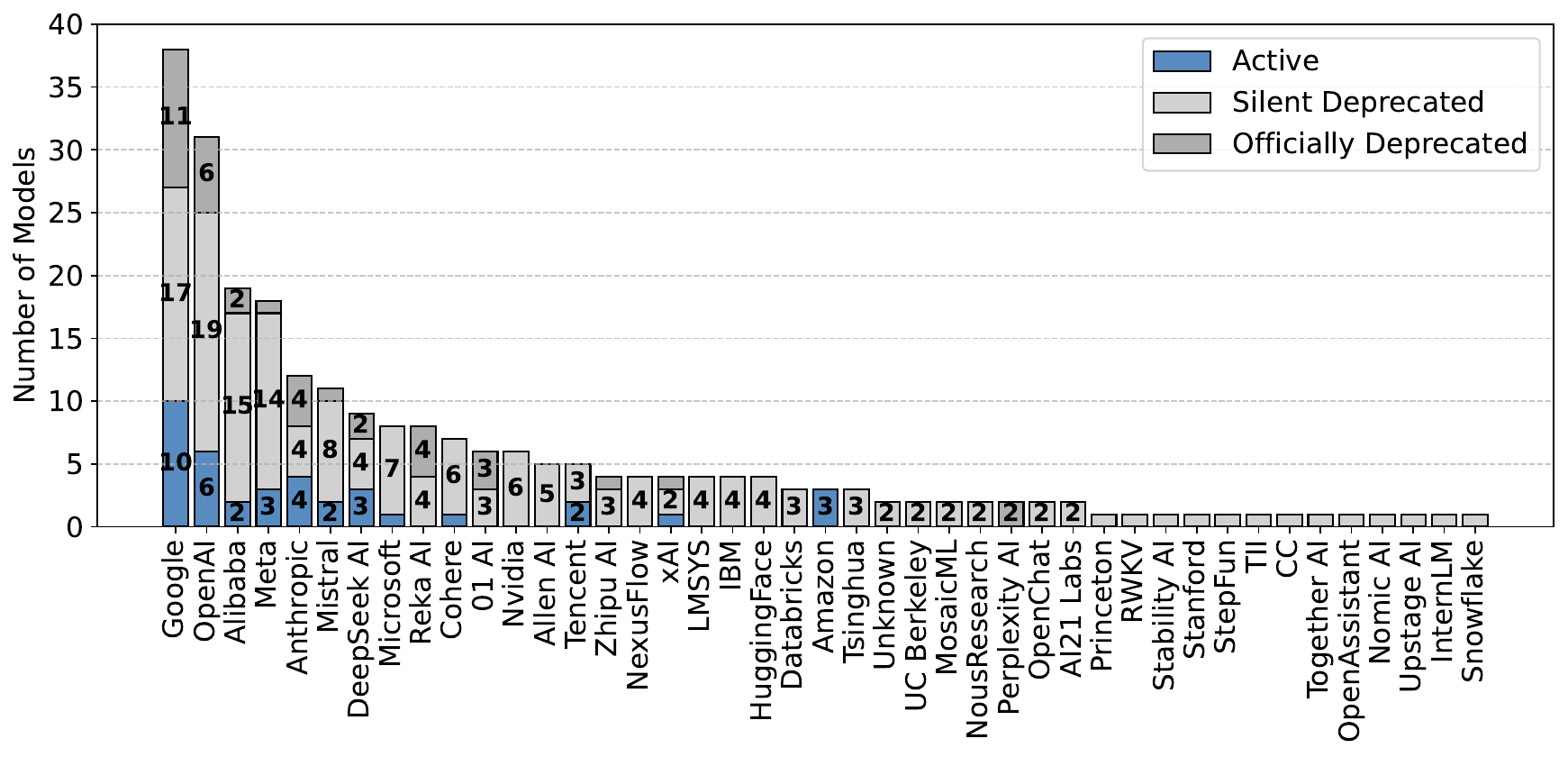}
    \caption{Share of active and deprecated models by provider including official and silent deprecations based on model activity between March 3-April 23, 2025. 
    }
    \label{fig-silent-deprecated}
\end{figure}

\begin{figure}[H]
    \centering
    \includegraphics[width=0.7\linewidth]{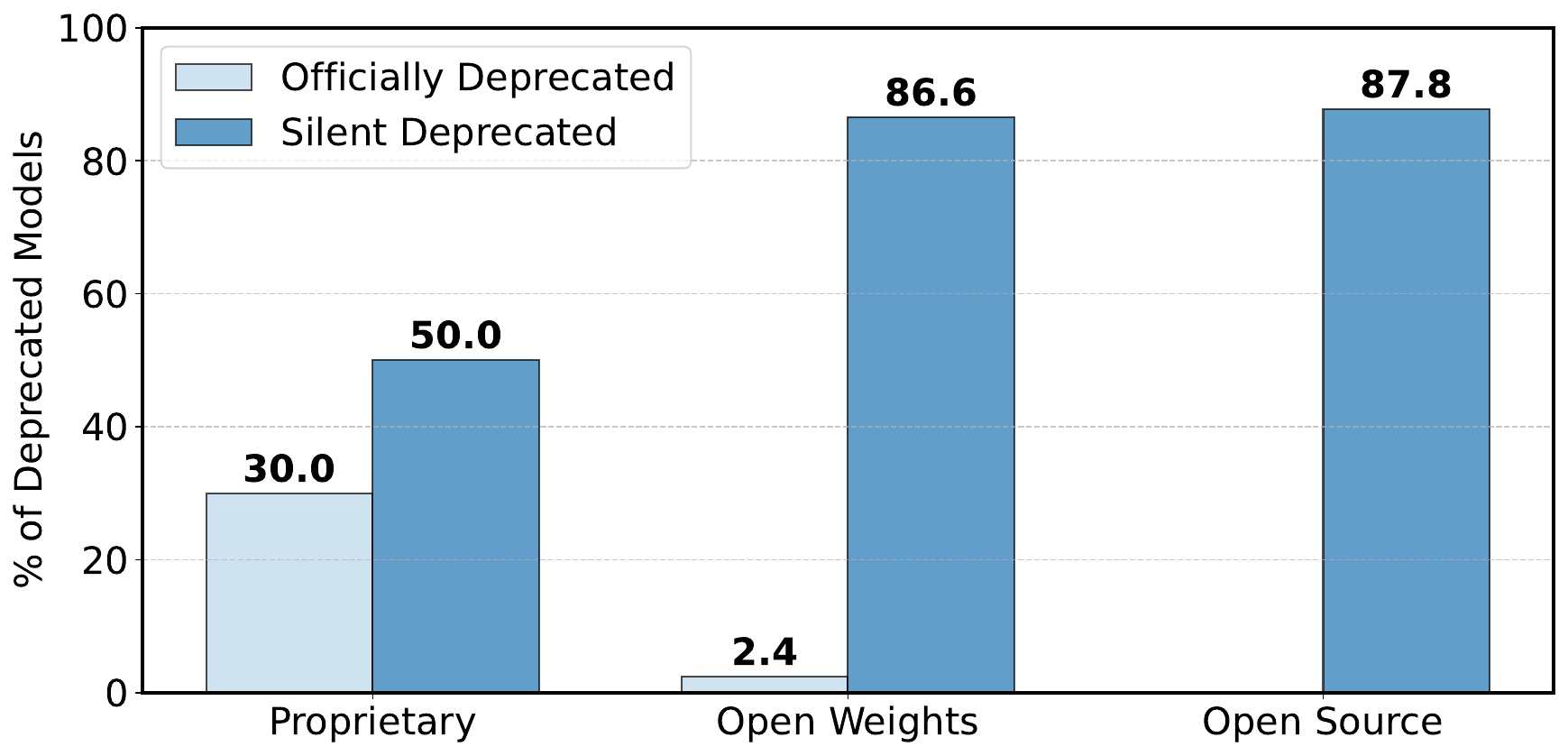}
    \caption{Share of official and silent deprecations for proprietary, open-weight and open-source models based on model activity between March 3-April 23, 2025.
    }
    \label{fig-silent-deprecated_prop_open}
\end{figure}

\section{Transitivity Under Changing Evaluation Conditions: Additional Details}\label{app-task-shift}
As part of our simulation to study the impact of model deprecations under a changing task distribution, we assign task-specific win probabilities for each model pair that compete in the battles as part of our simulation. The tables below show the win probabilities for different model pairs corresponding to \textit{task-1} and \textit{task-2}.

\begin{table}[h!]
     \centering
     \begin{tabular}{ccccc}
     \toprule
     Model & A & B & C & D \\
     \midrule
     A & - & 0.4 & 0.4 & 0.6 \\ 
     B & 0.5 & - & 0.7 & 0.9 \\ 
     C & 0.6 & 0.3 & - & 0.7 \\ 
     D & 0.4 & 0.1 & 0.3 & - \\ 
     \bottomrule
     \end{tabular}
     \caption{Win-rates for Task 1 used in simulation. Note that A vs B has a tie rate of 0.1.
     }
    
     \begin{tabular}{ccccc}
     \toprule
     Model & A & B & C & D \\
     \midrule
     A & - & 0.5 & 0.5 & 0.8 \\
     B & 0.3 & - & 0.6 & 0.2 \\ 
     C & 0.3 & 0.4 & - & 0.1 \\ 
     D & 0.2 & 0.8 & 0.9 & - \\ 
     \bottomrule
     \end{tabular}
     \caption{Win-rates for Task 2 used in simulation. Note that A vs B and A vs C both have a tie rate of 0.2.
     }
    
     \label{tab:simulation_winrates}
     
\end{table}

\section{Overfitting Experiments: Additional Evaluations}
\label{app:overfitting_mmlu}

 To measure if training on arena-style data impacts evaluation on non-arena style tasks, we also benchmark these models on the original MMLU dataset \citep{hendrycks2020measuring}. From \Cref{tab:overfitting_mmlu}, we observe that all models achieve very similar scores. This further demonstrates how training on data from Arena Battles helps boost scores specific to the Arena evaluation but provides little to no effect on a non-arena style benchmark.
\begin{table}[h!]
    \centering
    \begin{tabular}{lccc}
        \toprule
        Finetuning mixture & \texttt{0\_arena} & \texttt{30\_arena} & \texttt{70\_arena} \\
        \midrule
        Accuracy & 66.5\% & 64.4\% & 65.9\% \\
        \bottomrule
    \end{tabular}
    \caption{Accuracy on MMLU across models trained with varying amounts of Arena data.}
    \label{tab:overfitting_mmlu}
\end{table}

\end{document}